\numberwithin{equation}{section}
\theoremstyle{plain}
\newtheorem{theorem}{Theorem}
\newtheorem{lemma}{Lemma}
\theoremstyle{definition}
\theoremstyle{remark}
\DeclareMathOperator*{\argmin}{arg\,min}
\begin{document}



\title{Purification Of Contaminated Convolutional Neural Networks Via Robust Recovery: An Approach with Theoretical Guarantee in One-Hidden-Layer Case}

\author{Hanxiao~Lu,~\IEEEmembership{Student Member,~IEEE,}
        Zeyu~Huang,~\IEEEmembership{Student Member,~IEEE,}
        and~Ren~Wang,~\IEEEmembership{Member,~IEEE}
\IEEEcompsocitemizethanks{\IEEEcompsocthanksitem Ren Wang is with the Department
of Electrical and Computer Engineering, Illinois Institute of Technology, Chicago,
IL, 60616.
\IEEEcompsocthanksitem Hanxiao Lu and Zeyu Huang are research interns at the Trustworthy and Intelligent Machine Learning Research Lab in the Department
of Electrical and Computer Engineering, Illinois Institute of Technology, Chicago,
IL, 60616.}
\thanks{The first two authors contributed equally to this paper.}
\thanks{Corresponding author: Ren Wang. E-mail: rwang74@iit.edu}
\thanks{Partial and preliminary results appeared in \cite{lu2023enhancing}.}
}


\markboth{Journal of \LaTeX\ Class Files,~Vol.~14, No.~8, August~2021}%
{Shell \MakeLowercase{\textit{et al.}}: A Sample Article Using IEEEtran.cls for IEEE Journals}


\maketitle

\begin{abstract}
Convolutional neural networks (CNNs), one of the key architectures of deep learning models, have achieved superior performance on many machine learning tasks such as image classification, video recognition, and power systems. Despite their success, CNNs can be easily contaminated by natural noises and artificially injected noises such as backdoor attacks. In this paper, we propose a robust recovery method to remove the noise from the potentially contaminated CNNs and provide an exact recovery guarantee on one-hidden-layer non-overlapping CNNs with the rectified linear unit (ReLU) activation function. Our theoretical results show that both CNNs' weights and biases can be exactly recovered under the overparameterization setting with some mild assumptions. The experimental results demonstrate the correctness of the proofs and the effectiveness of the method in both the synthetic environment and the practical neural network setting. Our results also indicate that the proposed method can be extended to multiple-layer CNNs and potentially serve as a defense strategy against backdoor attacks. 
\end{abstract}

\begin{IEEEkeywords}
Deep learning, convolutional neural network, robust recovery, denoising, backdoor attack.
\end{IEEEkeywords}

\section{Introduction}\label{sec: intro}
Deep neural networks (DNNs), models with thousands or millions of parameters, are used in deep learning (DL) to learn patterns from inputs, outperforming traditional techniques that use human-crafted models. Among all types of DNNs, convolutional neural networks (CNNs) achieve state-of-the-art performances over other types of architectures on tasks such as image classification \cite{zhang2022unsupervised}, action recognition \cite{cheron2015p}, and fault detection in power systems \cite{li2023physics}. CNNs also require fewer coefficients than fully connected neural networks due to shared weights, and they can better extract local features with convolution operations. 
However, CNN models are susceptible to contamination when trained in untrusted environments, a risk exacerbated by various real-world applications. For instance, during the collaborative training process, such as federated learning, model updates and transmissions frequently introduce additional noise \cite{ma2022communication}. To enhance efficiency in running, transmitting, and storing CNNs within the constraints of system precision, their parameter resolutions are often lowered through quantization or truncation, effectively injecting noise \cite{young2021transform}. Recently, studies on training-phase poisoning attacks, like backdoor attacks, have shown that contaminating just a small fraction of the training data is enough to lead to ``noisy'' CNNs with inaccurate predictions in downstream tasks \cite{gu2019badnets,wang2020practical}. Therefore, we need techniques to purify CNNs.



There are many works focusing on robust data recovery \cite{zhang2019robust,wang2018data,wang2020tensor} and robust regression for linear models \cite{dalalyan2019outlier,suggala2019adaptive}. Few studies have explored how to purify neural networks to reduce the negative impact of unexpected noises. To remove Gaussian noises from noisy neural networks, a Bayesian estimation-based denoiser is proposed \cite{shao2021denoising}. The recovery error discussed in this work is only valid when the inputs are uniform, and the weights and noises follow Gaussian distributions. Recent model purification work has only considered the recovery of a one-hidden-layer fully connected neural network \cite{gao2020model}. In this paper, we consider the theoretical recovery of a one-hidden-layer convolutional neural network contaminated by noises from arbitrary distributions, including backdoor pollutions, and empirically extend it to multi-layer scenarios. Noting that existing training-phase poisoning defenses are mainly based on detection \cite{wang2020practical,pal2024backdoor} and fine-tuning \cite{pal2023towards,zhu2021clear}, our proposed method can detoxify CNNs under training-phase poisoning attacks. Our approach can directly eliminate the impact of poisoning from the model's parameters and requires only a limited amount of benign data without any label information.


The contributions of this paper can be summarized as follows:
\begin{itemize}
    \item The paper introduces a robust recovery method designed to cleanse CNNs of both natural and artificially injected noises. This method provides theoretical recovery guarantees for one-hidden-layer CNNs using the ReLU activation function under under an overparameterization scenario.
    \item It demonstrates the practical application of the method on CNNs trained on poisoned data, offering a direct technique to purify the networks.
    \item The method is empirically tested on standard datasets like MNIST and CIFAR-10, showing that it maintains the same level of accuracy as clean CNNs while reducing the success rate of attacks. The method requirs minimal clean data, potentially from limited amount of unlabeled benign data outside the training set.
    \item We empirically show that the method works on CNNs with more than one hidden layer.
\end{itemize}


The remainder of the paper is structured as follows. We first provide all the notations used in this paper. The problem formulation is introduced in Section~\ref{sec: prob_form}. Section~\ref{sec: alg} presents the algorithm, and Section~\ref{sec: theory} summarizes the major theoretical results. Section~\ref{sec: experiments} presents all the experimental results, and Section~\ref{sec: conclusion} concludes the paper. The supplementary materials contain all of the proofs.


\begin{figure}[ht]
\vskip 0.2in
\begin{center}
\centerline{\includegraphics[trim=0 0 0 0,clip,width=.49\textwidth]{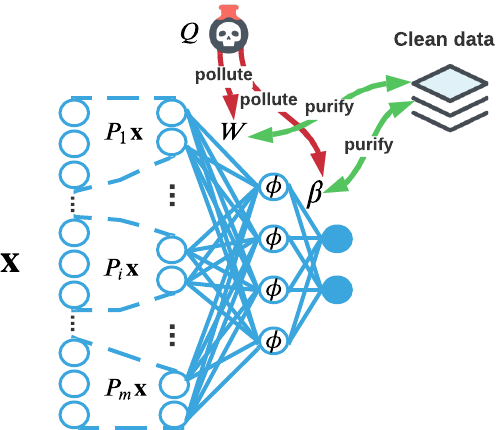}}
\caption{Conceptual diagram illustrating the proposed framework. Hidden-layer weights $W$ and output layer weights $\beta$ of a convolutional neural network (CNN) are contaminated by noises. The proposed CNN purification method can remove noises from contaminated weights.}
\label{fig: overview}
\end{center}
\vskip -0.2in
\end{figure}

\section{Problem formulation}
\label{sec: prob_form}
In this section, we first give an overview of the CNN purification problem and then provide details of the CNN architecture, and the contamination model studied in this work. We consider the general scenario that a CNN is trained on $n$ inputs $\{\mathbf{x}_s\}_{s=1}^n \in \mathbb{R}^d$ with corresponding ground truth $y_s$, and the parameters are contaminated by some random noise $z$. $z$ is assumed to be independent of input data and is generated from an arbitrary distribution, which can result from either post-training phase perturbations or poisoned inputs. Our goal is to purify contaminated CNN parameters by leveraging the proposed robust recovery method, which avoids retraining the model from scratch.

\subsection{CNN model }
As illustrated in Figure~\ref{fig: overview}, this work studies the one-hidden-layer CNN architecture:
\begin{align}\tag{1}
    \begin{array}{ll}
\hat{y_{s}} = \sum_{j=1}^p\sum_{i=1}^m \beta_j \psi (W_j^T P_i \mathbf x_{s})
    \end{array},
    \label{eq: one_layer_cnn}
\end{align}
where $\mathbf x_{s} \in \mathbb{R}^d$ is the input and the scalar $\hat{y_{s}}$ is its prediction. Following the same setting in previous theoretical works on CNNs \cite{zhang2020improved,zhong2017learning}, we consider CNN with $m$ non-overlapping input patches. $P_i \mathbf{x_{s}} \in \mathbb{R}^k$ is the $i$-th patch $(i=1,2, \cdots, m)$ of input $\mathbf{x_{s}}$, which is separated by $m$ matrices $\{P_i\}_{i=1}^m \in \mathbb{R}^{k \times d}$ defined as follows.
$$
P_i=[\underbrace{\mathbf{0}_{k\times k(i-1)}}_\text{All zero matrix}~~ \underbrace{I_{k}}_\text{Identity matrix $\in \mathbb{R}^{k\times k}$}~~ \underbrace{\mathbf{0}_{k\times k(m-i)}}_\text{All zero matrix}]
$$
Note that the non-overlapping setting forces $\{P_i\}_{i=1}^m$ independent of each other and therefore simplifies our proofs. $W = [W_1, W_2, \cdots, W_p] \in \mathbb{R}^{k \times p}$ denotes the hidden layer weights with each column $W_j \in \mathbb{R}^{k}$ representing the $j$-th kernel weights. The Rectified Linear Unit (ReLU) operation $\psi$ is the most commonly used activation function that transforms data $t$ into $\text{ReLU}(\cdot) = \max(0,\cdot)$. $\beta \in \mathbb{R}^{p}$ denotes the output layer weights and $\beta_j$ is its $j$-th entry. In this paper, we consider an overparameterization setting, where $p, k \gg n$.

\subsection{Corrupted model}
Here we define the contamination model for $W$ and $\beta$.\begin{align}\tag{2}
    \begin{array}{ll}
\Theta_{j} = W_{j} +  z_{W_j}
    \end{array},
    \label{eq: corrputed model hidden layer}
\end{align}
\begin{align}\tag{3}
    \begin{array}{ll}
\eta = \beta + z_{\beta}
    \end{array},
    \label{eq: corrputed model output layer}
\end{align}
where $\Theta$ and $\eta$ are contaminated parameters of CNN's hidden layer and output layer, respectively. The vectors $z_{W_j} \in \mathbb{R}^{k}, z_{\beta} \in \mathbb{R}^{p}$ are noise vectors with each entry $[z_{W_j}]_i$ ($[z_{\beta}]_i$) generated from an arbitrary distribution $Q_i$ with fixed probability $\epsilon$, which is between 0 and 1. 

In the post-training phase poisoning scenario discussed in Section~\ref{sec: intro}, our contamination model describes the additional noises added to clean weights $W$ and $\beta$. In the training phase poisoning scenario we considered in this work, additional noises are injected through manipulated training data. For example, a backdoor attack targeting neural networks is an adversarial strategy designed to undermine the reliability of a machine learning model by secretly embedding a malicious pattern or trigger during its training phase \cite{gu2019badnets,wang2020practical}. This subtle trigger is often undetectable to humans but can force the model to generate erroneous or manipulated outputs when encountered in subsequent inputs. Attackers typically execute a backdoor attack by contaminating the training data with a particular pattern or feature and a corresponding target label. As the model trains, it learns to associate the pattern with the target label, effectively embedding the backdoor. Once the model is in use, the attacker can exploit this backdoor by incorporating the trigger into the input data, leading the model to produce the intended, manipulated output. Besides poisoning from training data, attackers can even directly manipulate CNN parameters to inject backdoors \cite{hong2022handcrafted}. In all the above attack settings, contaminated CNNs can be viewed as benign models with additional poisoning parameters. 




In the following sections, we introduce a method that can recover parameters $W$ and $\beta$ from $\Theta$ and $\eta$ with theoretical guarantees. 

\section{Purification of One-hidden-Layer CNN Algorithm }\label{sec: alg}
\subsection{CNN model training} 

Before introducing the CNN recovery optimization and algorithm, we need to specify the process of obtaining $W$ and $\beta$. In our setting, the one-hidden-layer CNN is trained by the traditional gradient descent algorithm, which is shown in Algorithm~\ref{alg: gradent descent of CNN}. $X \in \mathbb{R}^{d \times n}$ is the matrix format of the training examples. $W(0),\beta(0)$ are initializations of hidden and output layers' weights. They are initialized randomly following Gaussian distributions $\mathcal{N}(0,k^{-1}I_{k})$ and $\mathcal{N}(0,1)$ respectively. $\gamma$ and $\frac{\gamma}{k}$ are learning rates indicating step sizes of gradient descents. With the purpose of easier computation of the partial derivative of loss function $\mathcal{L}$ with respect to $\beta$ and $W$, we use the squared error empirical risk
$$\mathcal{L}(\beta,W) = \frac{1}{2}\frac{1}{n}\sum_{s=1}^{n}(y_{s}-\frac{1}{\sqrt{p}}\sum_{j=1}^{p}\sum_{i=1}^{m}\beta_{j}\psi(W_{j}^{T}P_{i}\mathbf{x_{s}}))^2$$       
that quantifies the prediction errors of the learned CNN. $\frac{1}{\sqrt{p}}$ is used for simplifying our proofs. Note that in the post-training phase poisoning scenario, $W(t_{max})$ and $\beta(t_{max})$ are the ground truth we want to extract from observations $\Theta$ and $\eta$. We will introduce the details of the training phase poisoning scenario in Section~\ref{sec: experiments}. We use the following $\ell_1$ norm-based robust recovery optimization method to achieve accurate estimations.

\begin{algorithm}[ht]
   \caption{CNN Model Training}
   \label{alg: gradent descent of CNN}
\begin{algorithmic}
   \STATE {\bfseries Input:} Data $ (y,X) $, maximum number of iterations $t_{max}$
   \STATE {\bfseries Output:} $W(t_{max})$ and $\beta(t_{max})$
   \STATE Initialize $ W_{j}(0)  \sim \mathcal{N}(0,k^{-1}I_{k}) $ and $\beta_{j}(0) \sim \mathcal{N}(0,1)$ independently for all $j \in [p]$.
   \FOR{$t=0$ {\bfseries to} $t_{max}$}
   \FOR{$j=1$ {\bfseries to} $p$}
   \STATE $\beta_{j}(t) = \beta_{j}(t-1) - \gamma \frac{\partial \mathcal{L}(\beta(t-1),W(t-1))}{\partial \beta_{j}(t-1)}$
   \ENDFOR
   \FOR{$j=1$ {\bfseries to} $p$}
   \STATE $W_{j}(t) = W_{j}(t-1) - \frac{\gamma}{k} \frac{\partial \mathcal{L}(\beta(t),W(t-1))}{\partial W_{j}(t-1)}$
   \ENDFOR
   \ENDFOR
   \STATE {\bfseries Output:} $\beta(t_{max})$ and $W(t_{max})$
\end{algorithmic}
\end{algorithm}

\begin{algorithm}[ht]
   \caption{Purification of One-hidden-Layer CNN}
   \label{alg: model repair}
\begin{algorithmic}
   \STATE {\bfseries Input:} Contaminated model $ (\eta, \Theta) $, design matrix $A_{W},A_{\beta}$, and parameter initialization $ \beta(0),W(0)$.
   \STATE {\bfseries Output:}The purified parameters $\widetilde{\beta}$ and $\widetilde{W}$
   \FOR{$j=1$ {\bfseries to} $p$}
   \STATE $\widetilde{u}_{j} = \argmin\limits_{u} \|\Theta_{j} - W_{j}(0)- A_{W}^{T}u_{j}\|_{1}$
   \STATE $\widetilde{W}_{j} = W_{j}(0) + A_{W}^{T}\widetilde{u_{j}} $   
   \ENDFOR
   \STATE $ \widetilde{v} = \argmin\limits_{v} \|\eta - \beta(0)- A_{\beta}^{T}v\|_{1} $
   \STATE $\widetilde{\beta}= \beta(0) + A_{\beta}^{T}\widetilde{v}$
   \STATE {\bfseries Output:} $\widetilde{W}$ and $\widetilde{\beta}$
\end{algorithmic}
\end{algorithm}

\subsection{Robust recovery for CNN purification}

The $\ell_1$ norm-based recovery optimizations for $W$ and $\beta$ are defined as 
 \begin{align}\tag{4}
    \begin{array}{ll}
\widetilde{u}_{j} = \argmin\limits_{u} \|\Theta_{j} - W_{j}(0)- A_{W}^{T}u_{j}\|_{1}
    \end{array},
    \label{eq: recovery optimization W}
\end{align}
\begin{align}\tag{5}
    \begin{array}{ll}
\widetilde{v} = \argmin\limits_{v} \|\eta - \beta(0)- A_{\beta}^{T}v\|_{1}
    \end{array},
    \label{eq: recovery optimization beta}
\end{align}
where $\widetilde{u}_{j}, j \in [p], \widetilde{v}$ are the optimal estimations of the models' coefficients of the two optimization problems. $A_{W}$ is the design matrix for purifying $W$: 
\begin{align}\tag{6}
    \begin{array}{ll}
A_{W}=[P_{1}X,P_{2}X...,P_{m}X]
    \end{array},
    \label{eq: design matrix A_W}
\end{align}

$A_{\beta}$ is the design matrix for recovering $\beta$: 
\begin{align}\tag{7}
    \begin{array}{ll}
A_{\beta}=\left[\sum_{i=1}^{m} \psi(W^{T}P_{i}x_1),...,\sum_{i=1}^{m} \psi(W^{T}P_{i}x_n)\right]
    \end{array},
    \label{eq: design matrix A_beta}
\end{align}



The key to successfully recovering the ground truth model parameter is that it lies in the subspace spanned by the proposed design matrices. In other words, we can recover $W_{j}$ from $\Theta_{j}$ due to the fact that $W_j(t_{max}) - W_j(0)$ lies in the subspaces spanned by $A_W$. Similarly, we can recover $\beta$ from $\eta$ because $\beta_j(t_{max}) - \beta_j(0)$ lies in the subspace spanned by $A_{\beta}$. By projecting the contaminated parameter onto the subspace of design matrices, the estimated model parameter can be recovered to the ground truth model parameter with high probability. The detailed analysis is provided in the following subsections. Further conditions necessary for the successful recovery of $W_{j},\beta$ are theoretically analyzed in Theorem \ref{thm: main1} and Theorem \ref{thm: main2} in Section~\ref{sec: theory}. Based on the above analysis, the purification of the contaminated one-hidden-layer CNN is presented in Algorithm~\ref{alg: model repair}. By properly selecting the design matrices for all layers of CNNs, one can achieve successful recovery.

\subsection{Design Matrix of hidden layer $A_W$} 

We now explain in detail why we choose $A_W$ in the format of \eqref{eq: design matrix A_W}. We define the mapping from input to output as $ f(\mathbf{x_s})=\frac{1}{\sqrt{p}} \sum_{j=1}^{p}\sum_{i=1}^{m} \beta_{j}\psi(W_{j}^{T}P_{i}\mathbf{x_s})$. For weights update in each iteration of the Algorithm~\ref{alg: gradent descent of CNN}, the partial derivative of the loss function with respect to $W_j$ is represented by
\begin{align*}
    &\frac{\partial \mathcal{L}(\beta,W)}{\partial W_{j}}  \bigg|_{(\beta, W)=(\beta(t), W(t-1))} = \frac{\partial \mathcal{L}}{\partial f} \frac{\partial f}{\partial W_{j}} \\
     &= \delta_j \sum_{s=1}^n [(\frac{1}{\sqrt{p}} \sum_{j=1}^p \sum_{i=1}^{m}\beta_j(t) \psi(W_j(t-1)^T P_i \mathbf{x_s})-y_s)]\\
    &\cdot [\beta_i (t)\sum_{i=1}^{m} \psi^{\prime}({W}_j^T (t-1)  P_i \mathbf{x_s})P_i \mathbf{x_s} ]\\
    &= \sum_{s=1}^{n}\sum_{i=1}^{m}\alpha_{i}P_{i}\mathbf{x_{s}}
    \end{align*}
 where $\delta_j$ is a constant and $\alpha_{i}$ sums up all other remaining terms.  
\begin{align*}
   W_j(t_{max}) - W_j(0)  & = \sum_{t=1 }^{t_{max}} W_j(t) - W_j(t-1)\\
   &= \sum_{t=1 }^{t_{max}}  - \frac{\gamma}{k} \frac{\partial \mathcal{L}(\beta(t),W(t-1))}{\partial W_{j}(t-1)}\\
    &= \sum_{t=1 }^{t_{max}} \sum_{s=1}^{n}\sum_{i=1}^{m}\alpha_{i}^{'}P_{i}\mathbf{x_{s}}\\
\end{align*}

One can easily observe that the gradient $\frac{\partial \mathcal{L}(\beta,W)}{\partial W_{j}} $ lies in the subspace spanned by $P_{i}\mathbf{x_{s}}$. And this indicates that vector $W_{j}(t_{max}) - W_{j}(0)$ also lies in the same subspace. Therefore, we can use the design matrix $A_{W}$ in the format of \eqref{eq: design matrix A_W} to purify CNNs' weights.

\subsection{Design Matrix of output layer $A_{\beta}$}
We then introduce how we select $A_{\beta}$ in the form of \eqref{eq: design matrix A_beta} and how it helps the recovery. For weights update in each iteration of the Algorithm~\ref{alg: gradent descent of CNN}, the partial derivative of the loss function with respect to $\beta$ is shown below. 


\begin{align*}
    &\frac{\partial \mathcal{L}(\beta,W)}{\partial \beta_{j}} \bigg|_{(\beta, W)=(\beta(t-1), W(t-1))} = \frac{\partial \mathcal{L}}{\partial f} \frac{\partial f}{\partial \beta_{j}} \\
    &= \frac{1}{\sqrt{p}} \sum_{s=1}^n\left(\frac{1}{\sqrt{p}} \sum_{j=1}^p \sum_{i=1}^{m}\beta_j (t-1)\psi\left(W_j(t-1)^T P_i \mathbf{x_s}\right)-y_s\right)\\
    &\cdot \sum_{i=1}^{m} \psi\left({W}_j^T(t-1) P_i \mathbf{x_s}\right)\\
    &= \sum_{s=1}^{n} \delta_{s} \sum_{i=1}^{m}  \psi(W_{j}^{T}(t-1)P_{i}\mathbf{x_s}) 
\end{align*}
where $\delta_{s}$ sum ups all other remaining terms. 

\begin{align*}
   \beta_j(t_{max}) - \beta_j(0)  & = \sum_{t=1 }^{t_{max}} \beta_j(t) - \beta_j(t-1)\\
   &= \sum_{t=1 }^{t_{max}}  - \gamma\frac{\partial \mathcal{L}(\beta(t-1),W(t-1))}{\partial W_{j}(t-1)}\\
    &= \sum_{t=1 }^{t_{max}} \sum_{s=1}^{n} \delta_{s} \sum_{i=1}^{m}  \psi(W_{j}^{T}(t-1)P_{i}\mathbf{x_s})\\
\end{align*}

Since the derivative of $\mathcal{L}$ with respect to the $j$-th entry $\beta_j$ is represented by combinations of $\sum_{i=1}^{m}  \psi(W_{j}^{T}(t-1)P_{i} \mathbf{x}_s)$ 
we get the conclusion that $\frac{\partial \mathcal{L}(\beta,W)}{\partial \beta}$ lies in the subspace that is spanned by $\sum_{i=1}^{m}  \psi(W^{T}(t-1)P_{i} \mathbf{x}_s)$. 
Further notice that $\beta(t_{max}) -\beta(0)$ is an accumulation of $\frac{\partial \mathcal{L}(\beta,W)}{\partial \beta}$ in each iteration. Unlike the subspace spanned by $P_i \mathbf{x_s}$ which is used for hidden layer recovery remains constant, the subspace spanned by $\sum_{i=1}^{m}  \psi(W^{T}(t-1)P_{i} \mathbf{x}_s)$ which is used for output layer recovery keeps changing over $t$. However, thanks to overparametrization assumption of CNN, one could show $W(t)$ obtained by Algorithm \ref{alg: gradent descent of CNN} is close to initialization $W(0)$ for all $t \geq 0$. Theorem~\ref{thm: bound_iteration} in the next section shows that $W(t)$s are all not far away from each other. Thus, $\beta(t_{max}) -\beta(0)$ approximately lies in the same spanned subspace, resulting in the proposed design matrix $A_{\beta}$.

\section{Theoretical Recovery Guarantee}\label{sec: theory}
In the previous section, we introduced our CNN purification algorithm and went over how to build design matrices for recovering the hidden and the output layers. In this section, we demonstrate theoretically that the proposed algorithm's estimation is accurate. Before providing the main theoretical results, we first illustrate the reliability of the design matrices in Lemma~\ref{lemma: conditions} that $A_W$ and $A_\beta$ satisfy certain conditions.

\begin{lemma}\label{lemma: conditions}
Assume that $\frac{mn}{k}$ ($\frac{mn}{\sqrt{p}},\frac{nlog(mn)}{k}$) is sufficiently small, following upper and lower bounds hold for $A=A_{W}$ ($A=A_{\beta}$) with some constants $\sigma^2$, $\underline{\lambda}$, and $\bar{\lambda}$. 
\begin{align}
\Vert \frac{1}{|A|}\sum_{i=1}^{|A|} c_iA_i \Vert^2 \leq \frac{\sigma^2 D_A}{|A|},\label{sublemma: condition_a}\tag{8}\\
\underset{||\Delta||=1}{inf}\frac{1}{|A|}\sum_{i=1}^{|A|}|A_i^T\Delta|\geq \underline{\lambda},\label{sublemma: condition_b}\tag{9}\\
\underset{||\Delta||=1}{sup}\frac{1}{k}\sum_{i=1}^k|A_i^T\Delta|^2\leq \bar{\lambda}^2, \label{sublemma: condition_c}\tag{10}
\end{align}
where $|A|$ is the column number of $A$, and $D_{A}$ is the dimension of $A_i$ . $c_1,\cdots,c_{|A|}$ are fixed values satisfying $\max_s|c_i|\leq 1$ . $A$ is either $A_{W}$ or $A_{\beta}$ . And  $A_{W}$ and $A_{\beta}$ can be the design matrices for recovering the contaminated parameters.
\end{lemma}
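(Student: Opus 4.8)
The three displays \eqref{sublemma: condition_a}--\eqref{sublemma: condition_c} are exactly the restricted-isometry-type conditions that drive $\ell_1$ robust recovery, so the plan is to read each design matrix as a random matrix with independent rows and then verify all three as concentration statements. I would first fix the orientation: treat $A_W$ as the $k\times mn$ matrix whose $i$-th measurement vector $A_i\in\R^{mn}$ is a row and let $\Delta$ range over the unit sphere of $\R^{mn}$, and treat $A_\beta$ as the $p\times n$ matrix with rows $A_j\in\R^{n}$ and $\Delta$ on the unit sphere of $\R^{n}$, so that $|A|$ is the number of measurements and $D_A$ the parameter dimension, with $|A|\gg D_A$ in the overparameterized regime. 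The key structural observation for $A_W$ is that, because the patches $P_i$ are non-overlapping, each input coordinate feeds at most one patch, so under the standard Gaussian-input model the quantities $\{P_i\mathbf x_s\}$ are jointly independent and the rows of $A_W$ are i.i.d.\ standard Gaussian vectors; thus $A_W$ is a genuine Gaussian matrix with $|A|=k\gg mn=D_A$. For $A_\beta$ I would first invoke Theorem~\ref{thm: bound_iteration} to replace the trained weights inside the ReLU features by the initialization $W(0)$ up to a controlled error, after which the rows $\sum_{i}\psi(W_j(0)^TP_i\mathbf x_s)$ become i.i.d.\ across $j$ (the $W_j(0)$ being i.i.d.) and, since $|\psi(t)|\le|t|$, sub-Gaussian.

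With this reduction, \eqref{sublemma: condition_a} is pointwise concentration of the norm of a bounded linear combination of i.i.d.\ rows: as the $c_i$ are fixed with $|c_i|\le1$, the mean satisfies $\mathbb E\Vert\tfrac1{|A|}\sum_i c_iA_i\Vert^2=\tfrac{\sum_ic_i^2}{|A|^2}\mathbb E\Vert A_1\Vert^2\le\tfrac{\sigma^2D_A}{|A|}$, and a Hanson--Wright / Bernstein deviation bound upgrades this to the stated inequality with no union bound required. Conditions \eqref{sublemma: condition_b} and \eqref{sublemma: condition_c} are the lower $\ell_1$- and upper $\ell_2$-isometries, which I would prove by a covering argument: for fixed $\Delta$ on the sphere, $A_i^T\Delta$ is (sub-)Gaussian of unit scale, so $\tfrac1{|A|}\sum_i|A_i^T\Delta|$ concentrates around a constant multiple of $\Vert\Delta\Vert$ (the folded-normal mean $\sqrt{2/\pi}$ in the Gaussian case) and $\tfrac1{|A|}\sum_i|A_i^T\Delta|^2=\Delta^T\widehat\Sigma\Delta$ concentrates around $\lambda_{\max}(\Sigma)$; I then take a finite net of $S^{D_A-1}$, union-bound, and use the Lipschitz continuity of both maps in $\Delta$ to pass from the net to the whole sphere. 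In particular \eqref{sublemma: condition_c} is the spectral-norm deviation $\lambda_{\max}(\widehat\Sigma)\le\bar\lambda^2$ for a sample covariance, from the standard sub-Gaussian matrix bound. The smallness hypotheses ($mn/k$ for $A_W$; $mn/\sqrt p$ and $n\log(mn)/k$ for $A_\beta$) are precisely what guarantees $|A|\gg D_A$, so that the net cardinality $e^{O(D_A)}$ is dominated by the $|A|$-sample concentration.

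The main obstacle is $A_\beta$. Unlike $A_W$, its rows are ReLU features rather than Gaussians, which raises three extra difficulties. First, products of ReLU features are only sub-exponential, so the clean Gaussian concentration must be replaced by Bernstein-type bounds with explicit control of the feature moments and the nonzero feature mean. Second, and most delicate, the lower isometry \eqref{sublemma: condition_b} requires a uniform lower bound $\underline\lambda$ on $\mathbb E|A_j^T\Delta|$, which amounts to showing that the ReLU-feature covariance is non-degenerate, i.e.\ its smallest eigenvalue is bounded below; this anti-concentration step is the classic hard point in analyzing ReLU networks and is where I expect the real work to lie. Third, the features are built from the \emph{trained} weights, so I must quantify the perturbation incurred by the replacement $W(t_{max})\to W(0)$ and propagate it through \eqref{sublemma: condition_b}--\eqref{sublemma: condition_c}; Theorem~\ref{thm: bound_iteration} supplies the needed closeness, and the additional factor $mn/\sqrt p$ in the hypotheses is what absorbs this coupling error.
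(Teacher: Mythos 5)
Your overall architecture is the right one and, for $A_W$, essentially coincides with the paper's: the non-overlapping patches make $A_W$ a $k\times mn$ matrix with i.i.d.\ standard Gaussian entries, \eqref{sublemma: condition_a} is a one-shot chi-squared bound for fixed $c_i$, and \eqref{sublemma: condition_b}--\eqref{sublemma: condition_c} follow from a net over $S^{mn-1}$ together with Lipschitz Gaussian concentration around the folded-normal mean $\sqrt{2/\pi}$ and a standard operator-norm bound, with $\log|\mathcal{N}_{\zeta}|\lesssim mn$ dominated by $k$. That half of the lemma is in order and matches the appendix argument.

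The gap is in the $A_\beta$ half, where you correctly locate the hard point --- the uniform lower bound $\inf_{\|\Delta\|=1}\mathbb{E}\bigl|\sum_{s}\sum_{i}\psi(W^TP_ix_s)\Delta_s\bigr|\gtrsim 1$ --- but then stop at naming it. Since \eqref{sublemma: condition_a} and \eqref{sublemma: condition_c} for $A_\beta$ are comparatively routine (second moments plus Markov, and a separate kernel-matrix lemma in the appendix), this lower bound is essentially the content of the lemma for the output layer, and ``the ReLU feature covariance is non-degenerate'' is not by itself an argument: the features are non-centered, non-Gaussian, and correlated across $s$ through the shared $W$, so generic sub-Gaussian anti-concentration does not apply. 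The paper proves it by a three-case split on $\Delta$: if $|\sum_l\Delta_l|\geq 7$ the nonzero ReLU mean $h(W)$ alone gives the bound; if $|\sum_l\Delta_l|<7$ and $\max_l|\Delta_l|$ is small, a Berry--Esseen/Stein-type normal approximation conditional on $\tfrac12\leq\|W\|^2\leq 2$ shows the centered sum exceeds a constant with constant probability; and if one coordinate $|\Delta_1|$ is large, conditioning on that single feature being large while the remaining sum stays small (Chebyshev) does the job. One also needs the symmetrization-plus-net control of the two fluctuation terms $\sup_\Delta|f-g|$ (randomness in $W$) and $\sup_\Delta|g-\mathbb{E}f|$ (randomness in $X$), which is where the hypotheses $mn/\sqrt{p}$ and $n\log(mn)/k$ actually enter. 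A secondary issue: your identity $\mathbb{E}\Vert\tfrac1{|A|}\sum_ic_iA_i\Vert^2=\tfrac{\sum_ic_i^2}{|A|^2}\mathbb{E}\Vert A_1\Vert^2$ assumes mean-zero rows and fails for the ReLU features; the paper's \eqref{sublemma: condition_a} for $A_\beta$ in fact only holds with $\sigma^2\asymp p$ rather than an absolute constant, which is later compensated by the $\epsilon\sqrt{mn}$ smallness assumption in the downstream recovery lemma.
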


\begin{proof}[\emph {Proof of Lemma \ref{lemma: conditions}}]
We firstly prove that lemma \ref{lemma: conditions} holds for hidden layer design matrix $A_W$ by lemma 2, lemma 3 and lemma 4 in the Appendix. Then We prove that lemma \ref{lemma: conditions} holds for output layer design matrix $A_{\beta}$ by lemma 5, lemma 6 and lemma 7 in the Appendix.
Combining Lemma~\ref{lemma: conditions} with Lemmas~4 and 7 in the Appendix, we can conclude that using $A_W$ and $ A_\beta$ can purify contaminated parameters to their ground truth with high probability.

\end{proof}

We remark that \eqref{sublemma: condition_a} is critical for the universality of the distribution $Q_i$ where $\mathbf{z}$ is generated from. In the design of our theoretical analysis, $\Delta$ refers to the difference between the ground truth model coefficient and the estimated model coefficient in the optimization. \eqref{sublemma: condition_b} and \eqref{sublemma: condition_c} ensure that the product of a design matrix's column with a bounded model coefficients' difference is always bounded. 

We assume $\mathbf x_s$ follows Gaussian distribution $\mathcal{N}(0,I_d)$ for $\forall s\in[n]$ with $|y_s|\leq 1$. Let $f_s(t)$ be $f(\mathbf{x_s})$ with weights $W_j(t)$ and $\beta_j(t)$. We then have the following conclusion.

\begin{theorem}\label{thm: bound_iteration}
If $\frac{mnlog(mn)}{k}$, $\frac{(mn)^3log(p)^4}{p}$ and $mn\gamma$ are all sufficiently small, then
\begin{align}
&\underset{1\leq j \leq p}{max}||\beta_j(t)-\beta_j(0)||\leq  32\sqrt{\frac{(mn)^2log(p)}{p}}= R_{\beta}\label{eq:R_b}\tag{11}\\
&\underset{1\leq j \leq p}{max}||W_j(t)-W_j(0)||\leq \frac{100mnlog(p)}{\sqrt{pk}} = R_W \label{eq:R_W}\tag{12}\\
&||y-f_s(t)||^2\leq\left(1-\frac{\gamma}{8}\right)^t||y-f_s(0)||^2 \label{eq:R_y}\tag{13}
\end{align}
for all $t\geq 1$ with high probability.
\end{theorem}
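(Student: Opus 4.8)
The plan is to prove the three bounds \eqref{eq:R_b}, \eqref{eq:R_W}, and \eqref{eq:R_y} \emph{simultaneously} by induction on $t$, because they are mutually dependent: the geometric loss decay \eqref{eq:R_y} is what controls the cumulative drift of the weights in \eqref{eq:R_b} and \eqref{eq:R_W}, while the smallness of that drift is in turn what keeps the ReLU activation patterns (and hence the effective Gram matrix driving the dynamics) essentially frozen, which is precisely what guarantees the one-step contraction in \eqref{eq:R_y}. This is the standard ``lazy training'' structure, adapted here to the non-overlapping CNN with separate learning rates $\gamma$ and $\gamma/k$ for $\beta$ and $W$.

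\textbf{Base case and initialization estimates.} At $t=0$ the first two bounds hold trivially and \eqref{eq:R_y} is an equality. I would then establish, with high probability over $\mathbf x_s\sim\mathcal N(0,I_d)$ and the Gaussian initializations $W_j(0)\sim\mathcal N(0,k^{-1}I_k)$, $\beta_j(0)\sim\mathcal N(0,1)$, that $\|y-f(0)\|^2 = O(n)$ (using $|y_s|\le 1$ and concentration of the $1/\sqrt p$-scaled output), and that the pre-activations $W_j(0)^T P_i\mathbf x_s$ and the activation sums $\sum_i\psi(W_j(0)^T P_i\mathbf x_s)$ are uniformly bounded over $i\in[m]$, $s\in[n]$, $j\in[p]$ up to logarithmic factors. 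These are exactly the coefficients appearing in the two gradient formulas derived in Section~\ref{sec: alg}, and the hypotheses that $\frac{mn\log(mn)}{k}$ and $\frac{(mn)^3\log(p)^4}{p}$ are small are what force these quantities to concentrate.

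\textbf{Propagating the weight bounds.} Assume \eqref{eq:R_b}–\eqref{eq:R_y} hold for all $\tau\le t$. Telescoping the output-layer updates gives $\beta_j(t+1)-\beta_j(0)=-\gamma\sum_{\tau=0}^{t}\partial_{\beta_j}\mathcal L|_{\tau}$, and each gradient is bounded, via the formula of Section~\ref{sec: alg} and Cauchy--Schwarz, by a constant times $\frac{1}{\sqrt p}\|y-f(\tau)\|$ times the bounded activation-sum factor. The inductive decay gives $\sum_{\tau}\|y-f(\tau)\|\le\|y-f(0)\|\sum_{\tau}(1-\gamma/8)^{\tau/2}=O(\gamma^{-1})\|y-f(0)\|$, so the explicit $\gamma$ cancels the $\gamma^{-1}$ and yields an iteration-independent bound; inserting $\|y-f(0)\|=O(\sqrt n)$, the per-sample activation sum of order $m$, and a $\log p$ factor from the union bound over $j\in[p]$ recovers $R_\beta$. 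The identical computation for $W_j$, now with learning rate $\gamma/k$ and the extra $\beta_j$ factor (controlled by $R_\beta$ plus $|\beta_j(0)|$), produces the $\sqrt{pk}$ in the denominator of $R_W$. This closes \eqref{eq:R_b} and \eqref{eq:R_W} at $t+1$.

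\textbf{The loss contraction (main obstacle).} I would write one full iteration (first the $\beta$-step, then the $W$-step) and expand $f(t+1)-f(t)$ into a first-order term plus remainders. Contracting the first-order term against the residual $y-f(t)$ produces a quadratic form whose positive-definiteness is governed by the lower bound $\underline\lambda$ of Lemma~\ref{lemma: conditions}, condition \eqref{sublemma: condition_b}; this is the origin of the $\gamma/8$ factor. The remainders are of two types: genuine $O(\gamma^2)$ Taylor terms, absorbed because $mn\gamma$ is small and using the upper bound \eqref{sublemma: condition_c}, and --- the delicate part --- errors from ReLU activation flips. Because the weights have moved by at most $R_W$ and $R_\beta$, which are small relative to the typical magnitude of $W_j(0)^T P_i\mathbf x_s$, only a vanishing fraction of the $mnp$ activation patterns can flip; bounding this fraction shows the effective Gram matrix stays within a small perturbation of its value at initialization, so its least eigenvalue remains bounded below and the contraction persists. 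I would also verify that the cross-interaction between the freshly updated $\beta(t+1)$ and the subsequent $W$-step within the same iteration contributes only lower-order terms. Combining the contraction with the absorbed remainders gives \eqref{eq:R_y} at $t+1$, closing the induction; a final union bound over the high-probability initialization events completes the proof. I expect the activation-flip control and the uniform-over-$t$ eigenvalue lower bound to be the crux, since this is exactly where overparameterization ($p,k\gg n$) and the smallness assumptions must be spent.
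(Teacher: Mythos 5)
Your proposal is correct and follows essentially the same route as the paper: a simultaneous induction on $t$ in which the telescoped gradient sums plus the geometric decay $\sum_\tau\|y-f(\tau)\|\lesssim\gamma^{-1}\|y-f(0)\|$ yield $R_\beta$ and $R_W$, while the contraction step is obtained by decomposing $f(t+1)-f(t)$ into a Gram-matrix term (the paper's $H(k)+G(k)$) plus a remainder $r(k)$ controlled by counting ReLU activation flips within the radius $R_W$ of initialization. The only cosmetic differences are that the paper tracks the intermediate half-step quantity $v_s(t)$ (with $\beta(t)$ and $W(t-1)$) as an explicit fourth induction claim to handle the $\beta$-then-$W$ update order, and obtains the kernel eigenvalue lower bound from a dedicated random-matrix lemma (Lemma~8 in the appendix) rather than from Lemma~\ref{lemma: conditions}.
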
  

\begin{proof}[\emph {Proof of Theorem \ref{thm: bound_iteration}}]
   We introduce the function
$$v_s(t)=\frac{1}{\sqrt{p}} \sum_{j=1}^p \sum_{a=1}^m\beta_j(t) \psi\left(W_j(t-1)^T P_a x_s\right)$$
\begin{align}
    \|y-v(t)\|^2 \leq\left(1-\frac{\gamma}{8}\right)^t\|y-v(0)\|^2 \label{VI.20}\tag{14}
\end{align}

First, we can prove that for any integer: $k \geq 1$, as long as \eqref{eq:R_b},\eqref{eq:R_W}, \eqref{eq:R_y} and \eqref{VI.20} hold  for all $t \leq k$, then \eqref{eq:R_b}  holds for $t=k+1$ with high probability.

By triangle inequality and the gradient formula,

\begin{align*}
& \left|\beta_j(k+1)-\beta_j(0)\right| \leq \sum_{t=0}^k\left|\beta_j(t+1)-\beta_j(t)\right| \\
& \leq \frac{\gamma}{\sqrt{p}} \sum_{t=0}^k \sum_{s=1}^n\sum_{a=1}^m\left|y_s-f_s(t)\right|\left|W_j(t)^TP_a x_s\right|  \\
&  \leq \frac{\gamma}{\sqrt{p}} \sum_{t=0}^k\|y-f_s(t)\|(R_W \sqrt{\sum_{s=1}^n\sum_{a=1}^m\left\|P_ax_s\right\|^2}\\
&+\sqrt{\sum_{s=1}^n\sum_{a=1}^m\left|W_j(0)^T P_ax_s\right|^2}) \\
& \stackrel{(\rm{a})} \leq \gamma \sqrt{\frac{7 mn+18 \log p}{p}} \sum_{t=0}^k\|y-f_s(t)\|  \\
& \leq 16 \sqrt{\frac{7 mn+18 \log p}{p}}\|y-f_s(0)\| \\
& \stackrel{(\rm{b})} \leq 32 \sqrt{\frac{(mn)^2 \log p}{p}}=R_{\beta}, \\
\end{align*}
where we have used $\max _{1 \leq s \leq n}\left\|Px_s\right\| \lesssim \sqrt{mk} $ and $  \max _{1 \leq j \leq p} \sum_{s=1}^n\left|W_j(0)^T Px_s\right|^2 \leq 6 mn+18 \log p$ in (a), $ \|f_s(0)\| \leq \sqrt{mn}(\log p)^{1 / 4} $ in (b). Hence, \eqref{eq:R_b} holds for $t=k+1$, and the claim for \eqref{eq:R_b} is true.

Secondly, for any integer $k \geq 1$, as long as \eqref{eq:R_W}  and \eqref{eq:R_y} hold for all $t \leq k$, and \eqref{eq:R_b}  and \eqref{VI.20}  hold for all $t \leq k+1$, then \eqref{eq:R_W}  holds for $t=k+1$  with high probability is also obvious. \\
We bound $\left\|W_j(k+1)-W_j(0)\right\|$ by 

\begin{align*}
&\left\|W_j(k+1)-W_j(0)\right\|\\ & \leq \sum_{t=0}^k\left\|W_j(t+1)-W_j(t)\right\|\\
& \leq \frac{\gamma}{k \sqrt{p}} \sum_{t=0}^k\| \beta_j(t+1) \sum_{s=1}^n \sum_{a=1}^m\left(v_s(t+1)-y_s\right) \\
&\psi^{\prime}\left(W_j(t)^T P_ax_s\right)P_a x_s\| \\
& \leq \frac{\gamma}{k \sqrt{p}} \sum_{t=0}^k\left|\beta_j(t+1)\right| \sum_{s=1}^n\sum_{a=1}^m\left|y_s-v_s(t+1)\right|\left\|P_ax_s\right\| \\
& \leq \frac{\gamma}{k \sqrt{p}}\left(\left|\beta_j(0)\right|+R_{\beta}\right) \sqrt{\sum_{s=1}^n\sum_{a=1}^m\left\|P_ax_s\right\|^2} \sum_{t=0}^k\|y-v(t+1)\| \\
& \leq \frac{16}{k \sqrt{p}}\left(\left|\beta_j(0)\right|+R_{\beta}\right) \sqrt{\sum_{s=1}^n\sum_{a=1}^m\left\|P_a x_s\right\|^2}\|y-v(0)\| \\
& \leq \frac{100 mn \log p}{\sqrt{p k}}=R_W,
\end{align*}
where we have used $\max _{1 \leq j \leq p}\left|\beta_j(0)\right| \leq 2 \sqrt{\log p}$, $\sum_{s=1}^n\left\|Px_s\right\|^2 \leq 2 mn k$ and $\|f_s(0)\| \leq \sqrt{mn}(\log p)^{1 / 4}$ in the last inequality. Thus, the claim for \eqref{eq:R_W} is true.\\

Next, we just need to prove that for any integer $k \geq 1$, as long as \eqref{eq:R_y} holds for all $t \leq k$, and \eqref{eq:R_b}, \eqref{eq:R_W}  and \eqref{VI.20} hold for all $t \leq k+1$, then \eqref{eq:R_y} holds for $t=k+1$ with high probability.

We  define the matrices $G(k),H(k) \in \mathbb{R}^{n\times n}$ with entries

\begin{align*}
G_{s l}(k) & =\frac{1}{p} \sum_{j=1}^p \sum_{a=1}^{m}\sum_{b=1}^{m} \psi\left(W_j(k)^T P_a x_s\right) \psi\left(W_j(k)^T P_b x_l\right) \\
H_{s l}(k) & =\frac{(Px_s)^T Px_l}{k} \frac{1}{p} \sum_{j=1}^p \sum_{a=1}^{m}\sum_{b=1}^{m}\beta_j(k+1)^2\times\\ &\psi^{\prime}\left(W_j(k)^TP_a x_s\right) \psi^{\prime}\left(W_j(k)^T P_b x_l\right)
\end{align*}
and vector $r(k)$ by
\begin{align*}
  & r_{s}(k) =\frac{1}{\sqrt{p}}\sum_{j=1}^p \sum_{a=1}^{m}\beta_{j}(k+1)(\psi(W_{j}(k+1)^{T}P_a x_s)\\
  &-\psi(W_{j}(k)^{T}P_a x_{s})) - \frac{1}{\sqrt{p}}\sum_{j=1}^p\sum_{a=1}^{m}\beta_{j}(k+1)(W_{j}(k+1)\\
  &-W_{j}(k))^{T}Px_{s}\psi^{'}(W_{j}(k)^{T}P_a x_{s})
\end{align*}

To bound $G(k)$ , $H(k)$ and $r(k)$, we have
\begin{align}
0\le \lambda_{min}(G(k)) \le \lambda_{max}(G(k)) \lesssim m n.\label{VI.28}\tag{15}
\end{align}

\begin{align}
\frac{1}{6} \leq \lambda_{min}(H(k)) \leq \lambda_{max}(H(k)) \lesssim 1 \label{VI.33}\tag{16}
\end{align}

\begin{align}
&\Vert r(k) \Vert=\sqrt{\sum_{s}\big| r_{s}(k)\big|^2}\notag\\
&\lesssim \gamma mn logp(\sqrt{mk}R_{W}+\sqrt{\frac{log mn}{p}})\Vert y-f_s(k) \Vert \label{VI.34}\tag{17}
\end{align}

Please refer to the Appendix for specific analysis of \eqref{VI.28}, \eqref{VI.33}, and \eqref{VI.34}, respectively.

Now we are ready to analyze $\Vert y-f_s(k+1)\Vert^2$. Given the relation 
$$f_s(k+1)-f_s(k)=\gamma(H(k)+G(k))(y-f_s(k))+r(k) ,$$

we have
\begin{align*}
&\Vert y-f_s(k+1)\Vert^2 = \Vert y-f_s(k)\Vert^2-2\langle y-f_s(k),f_s(k+1)\\
&-f_s(k) \rangle +\Vert f_s(k) - f_s(k+1) \Vert^2\\
&=\Vert y-f_s(k)\Vert^2-2\gamma(y-f_s(k))^T\times (H(k)+G(k))(y-f_s(k))\\
&-2\langle y-f_s(k),r(k) \rangle+\Vert f_s(k)-f_s(k+1)\Vert^2.\\
\end{align*}

By \eqref{VI.28} and \eqref{VI.33}, we have

\begin{align}
-2\gamma(y-f_s(k))^T(H(k)+G(k))(y-f_s(k))\le -\frac{\gamma}{6}\Vert y-f_s(k) \Vert^2.\label{VI.35}\tag{18}
\end{align}

The bound \eqref{VI.34} implies
\begin{align*}
    &-2\langle y-f_s(k),r(k) \rangle \le 2\Vert y-f_s(k)\Vert \Vert r(k)\Vert\\& \lesssim \gamma mn logp(R_{W}+\sqrt{\frac{log mn}{p}})\Vert y-f_s(k) \Vert^2
\end{align*}

By \eqref{VI.28}, \eqref{VI.33} and \eqref{VI.34}, we also have
$$
\Vert f_s(k)-f_s(k+1) \Vert^2 \le 2\gamma^2\Vert (H(k)+G(k))(y-f_s(k)) \Vert^2 + 2\Vert r(k \Vert^2)
$$
$$
\lesssim \gamma^2 mn \Vert y-f_s(k) \Vert^2 + (\gamma mn logp)^2(R_{W}+\sqrt{\frac{log mn}{p}})^2\Vert y-f_s(k) \Vert^2
$$
Therefore, as long as $\frac{mnlog mn}{k},\frac{(mn)^3(logp)^4}{p}$ and $\gamma mn$ are all sufficiently small, we have
$$
-2\langle y-f_s(k),r(k) \rangle + \Vert f_s(k)-f_s(k+1) \Vert^2 \le \frac{\gamma}{24} \Vert y-f_s(k) \Vert^2
$$

Together with the bound \eqref{VI.35}, we have
$$
\Vert y-f_s(k+1) \Vert^2 \le (1-\frac{\gamma}{8})\Vert y-f_s(k) \Vert^2\le (1-\frac{\gamma}{8})^{k+1}\Vert y-f_s(0) \Vert^2,
$$
and the claim for \eqref{VI.20} is true.

Finally we can prove that for any integer $k \geq 1$, as long as \eqref{eq:R_W}, \eqref{eq:R_y} and \eqref{VI.20} hold for all $t \leq k$, and \eqref{eq:R_b} holds for all $t \leq k+1$, then \eqref{VI.20} holds for $t=k+1$ with high probability. We will not expand the proof here because the analysis uses the same argument as that of the proof of \eqref{eq:R_y}.

With all the claims above being true, we can then deduce \eqref{eq:R_b}, \eqref{eq:R_W}, \eqref{eq:R_y} and \eqref{VI.20} for all $t \geq 1$ by mathematical induction.

\end{proof}

Although weights $W(t)$ and $\beta(t)$ are updated over iterations $t$, Theorem~\ref{thm: bound_iteration} tells us that the post-trained parameter $W$ and $\beta$ via Algorithm~\ref{alg: gradent descent of CNN} are not too far away from their initializations. Due to the bounded distance, we can show that $\beta(t_{max}) -\beta(0)$ approximately lies in the subspace spanned by $A_\beta$. Moreover, the distance between the ground truth $y$ and the final prediction is bounded by the distance between $y$ and the model's initial prediction, indicating a global convergence of Algorithm~\ref{alg: gradent descent of CNN} despite the nonconvexity of the loss.

Assisting by  Theorem~\ref{thm: bound_iteration}, we propose two main theorems below to demonstrate that Algorithm~\ref{alg: model repair} can effectively purify CNN under two different training situations. Under Algorithm~\ref{alg: gradent descent of CNN}, the following conclusion holds.

\begin{theorem}\label{thm: main1}
Under condition of theorem \ref{thm: bound_iteration} with additional assumption that $\frac{log(p)}{k}$ and $\epsilon\sqrt{mn}$ are sufficiently small, then $\widetilde{W} = W(t_{max})$ and $\frac{1}{p}||\widetilde{\beta}-\beta(t_{max})||^2\lesssim \frac{(mn)^3log(p)}{p}$ with high probability, where $W(t_{max})$ and $\beta(t_{max})$ are obtained by gradient descent algorithm and $\widetilde{W}$ and $\widetilde{\beta}$ are results of model purification of convolution neural network.
\end{theorem}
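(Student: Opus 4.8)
The plan is to recast both recovery problems as robust least-absolute-deviation (LAD) regressions and to exploit the subspace facts established in Section~\ref{sec: alg}. For the hidden layer, the gradient computation shows that $W_j(t_{max})-W_j(0)$ lies \emph{exactly} in the column space of $A_W^T$, so there is a $u_j^\star$ with $W_j(t_{max})-W_j(0)=A_W^T u_j^\star$ and hence $\Theta_j-W_j(0)=A_W^T u_j^\star+z_{W_j}$, where $z_{W_j}$ is supported on a random set $S_j$ of expected size $\epsilon k$. Writing $a_1,\dots,a_k\in\R^{mn}$ for the rows of $A_W^T$, each instance of \eqref{eq: recovery optimization W} is exactly an LAD fit with $k$ samples, $mn$ parameters, and a purely sparse contamination. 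For the output layer the same reduction applies, except that $\beta(t_{max})-\beta(0)$ lies only \emph{approximately} in the span of $A_\beta$: since $A_\beta$ is built from a single weight matrix (the already-recovered $\widetilde W=W(t_{max})$) while the true increment accumulates terms $\sum_i\psi(W_j(t)^TP_ix_s)$ over all iterations, one has $\eta-\beta(0)=A_\beta^T v^\star+e+z_\beta$ with a dense approximation residual $e$. I would therefore prove the two claims in sequence, recovering $W$ exactly first and then using $\widetilde W=W(t_{max})$ to instantiate $A_\beta$.

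For $\widetilde W=W(t_{max})$ I would run the standard primal-increment argument. Fixing $j$ and setting $\Delta=u_j-u_j^\star$, the objective gap is $\sum_{r}(|[z_{W_j}]_r-a_r^T\Delta|-|[z_{W_j}]_r|)$; on clean coordinates ($r\notin S_j$) each term equals $|a_r^T\Delta|$, while on corrupted coordinates the reverse triangle inequality gives at least $-|a_r^T\Delta|$. Summing and invoking the lower bound \eqref{sublemma: condition_b} on $\sum_r|a_r^T\Delta|$ together with Cauchy--Schwarz and the upper bound \eqref{sublemma: condition_c} on the corrupted-coordinate mass yields a gap of at least $k\|\Delta\|(\underline{\lambda}-2\sqrt{|S_j|/k}\,\bar{\lambda})$, which is strictly positive for $\Delta\neq0$ once $|S_j|/k\lesssim(\underline{\lambda}/\bar{\lambda})^2$. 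Concentration of the binomial $|S_j|$ around $\epsilon k$, the universality condition \eqref{sublemma: condition_a} (needed because $Q_i$ is arbitrary), and a union bound over $j\in[p]$ — the last contributing the extra $\tfrac{\log p}{k}$ and $\epsilon\sqrt{mn}$ smallness requirements — then give $\widetilde u_j=u_j^\star$ for every kernel simultaneously, i.e.\ $\widetilde W=W(t_{max})$.

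For $\beta$ the identical LAD inequality now carries the dense term $e$: the clean-coordinate contributions lose $2\sum_{\ell\notin S}|e_\ell|\le 2\|e\|_1$, so at the optimum $\|\widetilde v-v^\star\|\lesssim \|e\|_1/p$. Propagating through $\widetilde\beta-\beta(t_{max})=A_\beta^T(\widetilde v-v^\star)-e$, using \eqref{sublemma: condition_c} and $\|e\|_1\le\sqrt{p}\,\|e\|$, gives $\|\widetilde\beta-\beta(t_{max})\|\lesssim\|e\|$, so the whole statement reduces to bounding the approximation residual. This is the step I expect to be the main obstacle, since $e$ encodes the drift of the hidden weights across training and its control relies entirely on Theorem~\ref{thm: bound_iteration}. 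Concretely, each $e_j$ is a $\gamma$-weighted sum over $t$ of $\sum_i(\psi(W_j(t)^TP_ix_s)-\psi(W_j(t_{max})^TP_ix_s))$; the $1$-Lipschitzness of ReLU bounds each such difference by $\|W_j(t)-W_j(t_{max})\|\,\|P_ix_s\|\le 2R_W\|P_ix_s\|$ with $\|Px_s\|\lesssim\sqrt{mk}$, while the residual weights $\delta_s(t)\propto y_s-f_s(t)$ are governed by the geometric decay $\|y-f_s(t)\|^2\le(1-\gamma/8)^t\|y-f_s(0)\|^2$, whose summation over $t$ cancels the leading $\gamma$. Substituting $R_W=\tfrac{100mn\log p}{\sqrt{pk}}$ and $\|f_s(0)\|\lesssim\sqrt{mn}(\log p)^{1/4}$ and summing the $p$ coordinates yields $\|e\|^2\lesssim(mn)^3\log p$ under the theorem's smallness assumptions, hence $\tfrac1p\|\widetilde\beta-\beta(t_{max})\|^2\lesssim\tfrac{(mn)^3\log p}{p}$ as claimed. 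The real work is the bookkeeping here: keeping the $mn$, $p$, $k$, and $\gamma$ dependencies straight while combining the three estimates of Theorem~\ref{thm: bound_iteration} into the final rate.
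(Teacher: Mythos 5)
Your proposal is correct and follows essentially the same route as the paper: exact LAD recovery of each $W_j$ via the conditions \eqref{sublemma: condition_a}--\eqref{sublemma: condition_c} with a union bound over $j\in[p]$, then a biased LAD argument for $\beta$ in which the off-subspace residual is controlled by $R_W$, the Lipschitzness of ReLU, and the geometric decay from Theorem~\ref{thm: bound_iteration}, yielding the same $\frac{(mn)^3\log p}{p}$ rate. The only deviation is that you anchor $A_\beta$ at the recovered $W(t_{max})$ while the paper anchors its analysis at $W(0)$ (where Lemma~6 directly certifies the design conditions); your choice is equally valid but silently requires transferring \eqref{sublemma: condition_b}--\eqref{sublemma: condition_c} to the perturbed design, which is exactly the computation the paper carries out in the proof of Theorem~\ref{thm: main2}.
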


\begin{proof}[\emph {Proof of Theorem \ref{thm: main1}}]
Consider $\eta = b + Av^* + z \in \mathbb{R}^k$, where the noise vector $z$ satisfies
$z_i \sim(1-\varepsilon) \delta_0+\varepsilon Q_i,$ independently for all $i \in[\mathrm{m}]$. And $b \in \mathbb{R}^k$ is an arbitrary bias vector. Then, the estimator $\hat{v} = \mathop{argmin}\limits_{v \in \mathbb{R}^{n}}|| \eta - Av||_{1}$ satisfies the theoretical guarantee lemma 9 in the Appendix.

We first analyze $\hat{u_1},..,\hat{u_p}$. The idea is to apply the result of lemma 2 in the Appendix to each of the $p$ robust regression problems. Thus, it suffices to check if the conditions of  lemma 2 in the Appendix hold for the $p$ regression problems simultaneously. Then, by the same argument that leads to lemma 4 in the Appendix, we have $\widetilde{W_j} =\widehat{W_j}$ for all $j \in [p]$  with high probability.


Note that 
\begin{align*}
&\eta_j-\beta_j(0)\\
&= \sum_{t=0}^{t_{\max }-1}\left(\beta_j(t+1)-\beta_j(t)\right)+z_j \\
&=  \frac{\gamma}{\sqrt{p}} \sum_{t=0}^{t_{\max }-1} \sum_{s=1}^n\sum_{a=1}^m\left(y_s-f_s(t)\right) \psi\left(W_j(t)^T P_ax_s\right)+z_j \\
&=  \frac{\gamma}{\sqrt{p}} \sum_{t=0}^{t_{\max }-1} \sum_{s=1}^n\sum_{a=1}^m\left(y_s-f_s(t)\right)(\psi\left(W_j(t)^T P_a x_s\right)\\
&-\psi\left(W_j(0)^TP_a x_s\right)) \\
& +\frac{\gamma}{\sqrt{p}} \sum_{t=0}^{t_{\max }-1} \sum_{s=1}^n\sum_{a=1}^m\left(y_s-f_s(t)\right) \psi\left(W_j(0)^TP_a x_s\right)+z_j .
\end{align*}

Thus, in the framework of lemma 9 in the Appendix, we can view $\eta - \beta(0)$ as the response, $\psi( W(0)^TPX)$ as the design, z as the noise, and $b_j = \frac{\gamma}{\sqrt{p}}\sum_{t=0}^{t_{max}-1}\sum_{s=1}^n\sum_{a=1}^m(y_s-f_s(t))(\psi(W_j(t)^T P_a x_s)-\psi(W_j(0)^T P_a x_s))$. By lemma 6 in the Appendix, we know that the design matrix $\psi(W(0)^TPX)$ satisfies \eqref{sublemma: condition_a} , \eqref{sublemma: condition_b} and \eqref{sublemma: condition_c}. So it suffices to bound $\frac{1}{p}\sum_{j=1}^p|b_j|$. Then we have
\begin{align*}
&\frac{1}{p}\sum_{j=1}^p|b_j|\\&\leq \frac{\gamma}{p^\frac{3}{2}}\sum_{j=1}^p\sum_{t=0}^{t_{max}-1}\sum_{s=1}^n\sum_{a=1}^m|y_s - f_s(t)||(W_j(t) - W_j(0))^T P_a x_s|\\
&\leq \frac{R_W\gamma}{p^{\frac{1}{2}}}\sum_{t=0}^{t_{max}-1}\sum_{s=1}^n\sum_{a=1}^m|y_s - f_s(t)|||P_a x_s||\\
&\leq \frac{R_W\gamma}{p^{\frac{1}{2}}}\sum_{t=0}^{t_{max}-1}||y - f_s(t)||\sqrt{\sum_{s=1}^n\sum_{a=1}^m||P_a x_s||^2}\\
&\lesssim \frac{R_W}{p^\frac{1}{2}}||y - f_s(0)||\sqrt{\sum_{s=1}^n\sum_{a=1}^m||P_a x_s||^2}\\
&\lesssim \frac{(mn)^2 logp}{p}\\
\end{align*}
where the last inequality is by $\sum_{s=1}^n\sum_{a=1}^m||P_a x_s||^2 \lesssim mnk$ due to a standard chi-squared bound , and $||f_s(0)||^2 \lesssim mn$ is due to Markov's inequality
and $\mathbb{E}|f_s(0)|^2 = \mathbb{E}Var(f_s(0)|X) \leq 1$. We then have $\frac{1}{p}||\tilde{\beta}-\hat{\beta}|| \lesssim \frac{(mn)^3 logp}{p} $, which is desired conclusion.
\end{proof}

According to Theorem \ref{thm: main1} pre-condition $\frac{mnlog(mn)}{k}$, $\frac{(mn)^3log(p)^4}{p}$ and $mn\gamma$, successful model purification requires large number of hidden layer neurons $p$, large partition dimension $k$, small number of partition $m$, small training examples $n$ and small poisoned ratio $\epsilon$. The assumption $\frac{log(p)}{k}$ further puts the constraint on the distance between $log(p)$ and $k$ in terms of successful parameter purification. Compared with theorem B.2 in \cite{gao2020model}, extra $m$ terms appear, and $d$ is substituted by $k$. It is reasonable since the construction of both design matrices takes account of $m$ and input dimension to feed into CNN is $k$ rather than $d$ of DNN. The reason $\beta$ could not be exactly recovered and has error bound  $\frac{(mn)^3log(p)}{p}$ is because subspace spanned by $\sum_{i=1}^{m}  \psi(W^{T}(t-1)P_{i} \mathbf{x}_s)$ keeps changing over $t$. Thus $\beta(t_{max}) -\beta(0)$ approximately lies in the subspace spanned by $A_\beta$.

One can also update CNN in a different way. $\beta$ can be updated after $W$ been updated $t_{max}$ iterations, i.e., after $\widehat{W} = W(t_{max})$. Then the CNN is trained by freezing the hidden layer $W = W(t_{max})$ and only updates $\beta$ via $\widetilde{X} = \psi(PX\widehat{W})$. $\beta(0)$ is initialized at $\mathbf{0}$. In this case, the following theorem holds.

\begin{theorem}\label{thm: main2} 
Under condition of theorem \ref{thm: bound_iteration} with additional assumption that $\frac{log(p)}{k}$ and $\epsilon\sqrt{mn}$ are sufficiently small,
then $\widetilde{W} = W$ and $\widetilde{\beta} = \beta$  with high probability.
\end{theorem}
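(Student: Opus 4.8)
The plan is to treat the two layers separately, recovering $W$ exactly as in the proof of Theorem~\ref{thm: main1} and then exploiting the frozen hidden layer to upgrade the approximate recovery of $\beta$ into an exact one. For the hidden layer nothing changes relative to the previous setting: the weights $W=W(t_{max})$ are still produced by the $W$-updates of Algorithm~\ref{alg: gradent descent of CNN}, so $W_j(t_{max})-W_j(0)$ lies exactly in the column space of $A_W=[P_1X,\dots,P_mX]$ for every $j\in[p]$. Writing $\Theta_j-W_j(0)=A_W^T u_j^\star+z_{W_j}$ with sparse corruption $z_{W_j}$, I would invoke the robust-regression guarantee (Lemma~9 in the Appendix) together with the facts that $A_W$ satisfies \eqref{sublemma: condition_a}--\eqref{sublemma: condition_c} (Lemmas~2--4) and that $\epsilon\sqrt{mn}$ is sufficiently small, to conclude $\widetilde W_j=\widehat W_j$ for all $j$ simultaneously, i.e.\ $\widetilde W=W$ with high probability.

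The new ingredient is the output layer. In this training scheme the hidden weights are frozen at $\widehat W=W(t_{max})$ before $\beta$ is updated, so the features $\phi_s=\sum_{i=1}^m\psi(\widehat W^T P_i \mathbf x_s)$ that constitute the columns of $A_\beta$ in \eqref{eq: design matrix A_beta} no longer depend on the iteration index. The gradient-descent update for $\beta$ is then a purely linear least-squares flow on the fixed design $\widetilde X=\psi(PX\widehat W)$: each increment $\beta(t+1)-\beta(t)$ is a linear combination of the columns of $A_\beta$, and since $\beta(0)=\mathbf 0$, the whole displacement $\beta(t_{max})-\beta(0)=\beta(t_{max})$ lies \emph{exactly} in the span of $A_\beta$. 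This is the crucial distinction from Theorem~\ref{thm: main1}, where the span drifted with $t$ and forced the bias term $b_j=\frac{\gamma}{\sqrt p}\sum_t\sum_{s,a}(y_s-f_s(t))(\psi(W_j(t)^TP_a\mathbf x_s)-\psi(W_j(0)^TP_a\mathbf x_s))$ into the model; here that bias term vanishes identically.

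With $b=0$, the quantity that drove the $\frac{(mn)^3\log p}{p}$ error bound in Theorem~\ref{thm: main1} disappears: there the recovery error was controlled by $\frac1p\sum_{j}|b_j|\lesssim\frac{(mn)^2\log p}{p}$, and setting $b=0$ collapses this to zero. I would therefore cast \eqref{eq: recovery optimization beta} as the signal-plus-sparse-corruption instance $\eta-\beta(0)=A_\beta^T v^\star+z_\beta$ and apply Lemma~9 directly; since $A_\beta$ satisfies \eqref{sublemma: condition_a}--\eqref{sublemma: condition_c} (Lemmas~5--7, under the smallness of $\tfrac{mn}{\sqrt p}$ and $\tfrac{n\log(mn)}{k}$ inherited from Theorem~\ref{thm: bound_iteration}) and $\epsilon\sqrt{mn}$ is small, the $\ell_1$ estimator achieves exact recovery $A_\beta^T\widetilde v=A_\beta^T v^\star$, hence $\widetilde\beta=\beta$ with high probability. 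The main obstacle is structural rather than a single estimate: one must argue rigorously that freezing $W$ confines $\beta(t)-\beta(0)$ to $\mathrm{span}(A_\beta)$ for every $t$, so that the response is a genuine zero-bias instance, and then confirm that the exact-recovery regime of Lemma~9 is actually entered, i.e.\ that the column-concentration bound \eqref{sublemma: condition_a}, the lower isometry \eqref{sublemma: condition_b}, and the small corruption fraction are jointly strong enough to pin the $\ell_1$ optimum to the true solution rather than a bounded neighbourhood of it.
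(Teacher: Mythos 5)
Your overall architecture matches the paper's: the hidden layer is handled exactly as in Theorem~\ref{thm: main1}, and the output layer is reduced to a zero-bias robust-regression instance because freezing $\widehat W=W(t_{max})$ fixes the features and confines $\beta(t_{max})-\beta(0)$ to $\mathrm{span}(A_\beta)$. That structural observation is correct and is indeed why the $\frac{(mn)^3\log p}{p}$ error of Theorem~\ref{thm: main1} disappears here.

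However, there is a concrete gap in the step where you assert that ``$A_\beta$ satisfies \eqref{sublemma: condition_a}--\eqref{sublemma: condition_c} (Lemmas~5--7).'' Those appendix lemmas establish the three conditions for the random-feature design built from \emph{Gaussian} weights --- their proofs rely on $W\sim\mathcal N(0,k^{-1}I_k)$ through Gaussian concentration, the identity $\mathbb{E}h(W)=\frac{1}{\sqrt{\pi}}\frac{\Gamma((k+1)/2)}{\sqrt{k}\,\Gamma(k/2)}$, Berry--Esseen-type bounds, etc. In Theorem~\ref{thm: main2} the design is $\psi((PX)^T\widehat W^T)$ with $\widehat W=W(t_{max})$, which is data-dependent and no longer Gaussian, so Lemmas~5--7 do not apply to it directly. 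The paper's proof spends essentially all of its output-layer argument bridging exactly this: it uses the bound $\max_j\|\widehat W_j-W_j(0)\|\le R_W$ from Theorem~\ref{thm: bound_iteration} to show that the empirical functionals in \eqref{sublemma: condition_b} and \eqref{sublemma: condition_c} computed at $\widehat W$ differ from those at $W(0)$ by at most $R_W\sum_{s,a}\|P_ax_s\|\,|\Delta_s|\lesssim (mn)^{3/2}\log p/\sqrt{p}$, and separately that $\mathbb{E}\sum_a\psi(\widehat W^TP_ax_s)^2\lesssim 1+R_Wk\lesssim 1$ for \eqref{sublemma: condition_a}; only then do the conditions hold for the trained design with $\sigma^2\asymp p$, $\underline{\lambda}\asymp 1$, $\bar\lambda^2\asymp mn$, and only then can the robust-regression lemma be invoked. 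Your proposal flags ``confirming the conditions'' as the remaining obstacle but attributes their verification to lemmas that govern the initialization, not the trained weights; the perturbation argument transferring them is the missing idea, and without it the exact-recovery conclusion for $\widetilde\beta$ is not established.
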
 
\begin{proof}[\emph {Proof of Theorem \ref{thm: main2}}]
     The analysis of $\widehat{u}_1,..,\widehat{u}_p$ is the same as that in the proof of Theorem \ref{thm: main1}, and we have $\widetilde{W_j} = \widehat{W_j}$ for all $j \in [p]$ with high probability.

To analyze $\widehat{v}$, we apply lemma 2 in the Appendix. It suffices to check \eqref{sublemma: condition_a} , \eqref{sublemma: condition_b} and \eqref{sublemma: condition_c} for the design matrix $\psi((PX)^T\widetilde{W}^T) = \psi((PX)^T\widehat{W}^T)$. Since

\begin{align*}
    &\sum_{s=1}^n \mathbb{E}\left(\frac{1}{p} \sum_{j=1}^p\sum_{a=1}^m c_j \psi\left(\widehat{W}_j^T P_a x_s\right)\right)^2 \\
    &\leq \sum_{s=1}^n \frac{1}{p} \sum_{j=1}^p \mathbb{E} \sum_{a=1}^m \psi\left(\widehat{W}^T P_a  x_s\right)^2
\end{align*}
and $\mathbb{E} \sum_{a=1}^m \psi\left(\widehat{W}^T P_a x_s\right)^2 \leq \mathbb{E} \sum_{a=1}^m \left|\widehat{W}_j^T P_a x_s\right|^2 \lesssim 1+R_W k \lesssim 1$, \eqref{sublemma: condition_a} holds with $\sigma^2 \asymp p$. We also need to check \eqref{sublemma: condition_b} and \eqref{sublemma: condition_c}. By Theorem \ref{thm: bound_iteration}, we have

\begin{align*}
&\left|\frac{1}{p} \sum_{j=1}^p\right| \sum_{s=1}^n  \sum_{a=1}^m  \psi(\widehat{W}_j^T P_a x_s) \Delta_s\lvert\\
&-\frac{1}{p} \sum_{j=1}^p\rvert \sum_{s=1}^n  \sum_{a=1}^m \psi\left(W_j(0)^T P_a x_s\right) \Delta_s \Bigg| \Bigg| \\
& \leq \frac{1}{p} \sum_{j=1}^p \sum_{s=1}^n \sum_{a=1}^m \left| \widehat{W}_j^T P_a x_s-W_j(0)^T P_a x_s \right|\lvert \Delta_s\rvert \\
& \leq R_W \sum_{s=1}^n \sum_{a=1}^m \lvert P_a x_s\rvert\lvert\Delta_s\rvert \\
& \lesssim \frac{(mn)^{3 / 2} \log p}{\sqrt{p}}
\end{align*}

By lemma 6 in the Appendix, we can deduce that
$$
\inf _{\|\Delta\|=1} \frac{1}{p} \sum_{j=1}^p\left|\sum_{s=1}^n \sum_{a=1}^m \psi\left(\widehat{W}_j^T P_a x_s\right) \Delta_s\right| \gtrsim 1,
$$
as long as $\frac{(mn)^{3 / 2} \log p}{\sqrt{p}}$ is sufficiently small. And we also have
$$
\sup _{\|\Delta\|=1} \frac{1}{p} \sum_{j=1}^p\left|\sum_{s=1}^n  \sum_{a=1}^m\psi\left(\widehat{W}_j^T P_a x_s\right) \Delta_s\right|^2 \lesssim m n .
$$
Therefore, \eqref{sublemma: condition_b} and \eqref{sublemma: condition_c} holds with $\bar{\lambda}^2 \asymp m n$ and $\underline{\lambda} \asymp 1$. Applying lemma 2 in the Appendix, we have $\widetilde{\beta}=\widehat{\beta}$ with high probability, as desired.
\end{proof}

Under setting of Theorem \ref{thm: main2}, $\beta$ could be exactly purified since subspace spanned by $\sum_{i=1}^{m}  \psi(W^{T}(t_{max})P_{i} \mathbf{x}_s)$ keeps constant by freezing hidden layer $W^{T}(t_{max})$. Therefore, $\beta(t_{max}) -\beta(0)$ entirely lies in the subspace spanned by $A_\beta$.

\begin{figure}[ht]
  \centering
  \begin{subfigure}{0.45\textwidth}
             \includegraphics[trim=0 0 0 0,clip,width=1.0\textwidth]{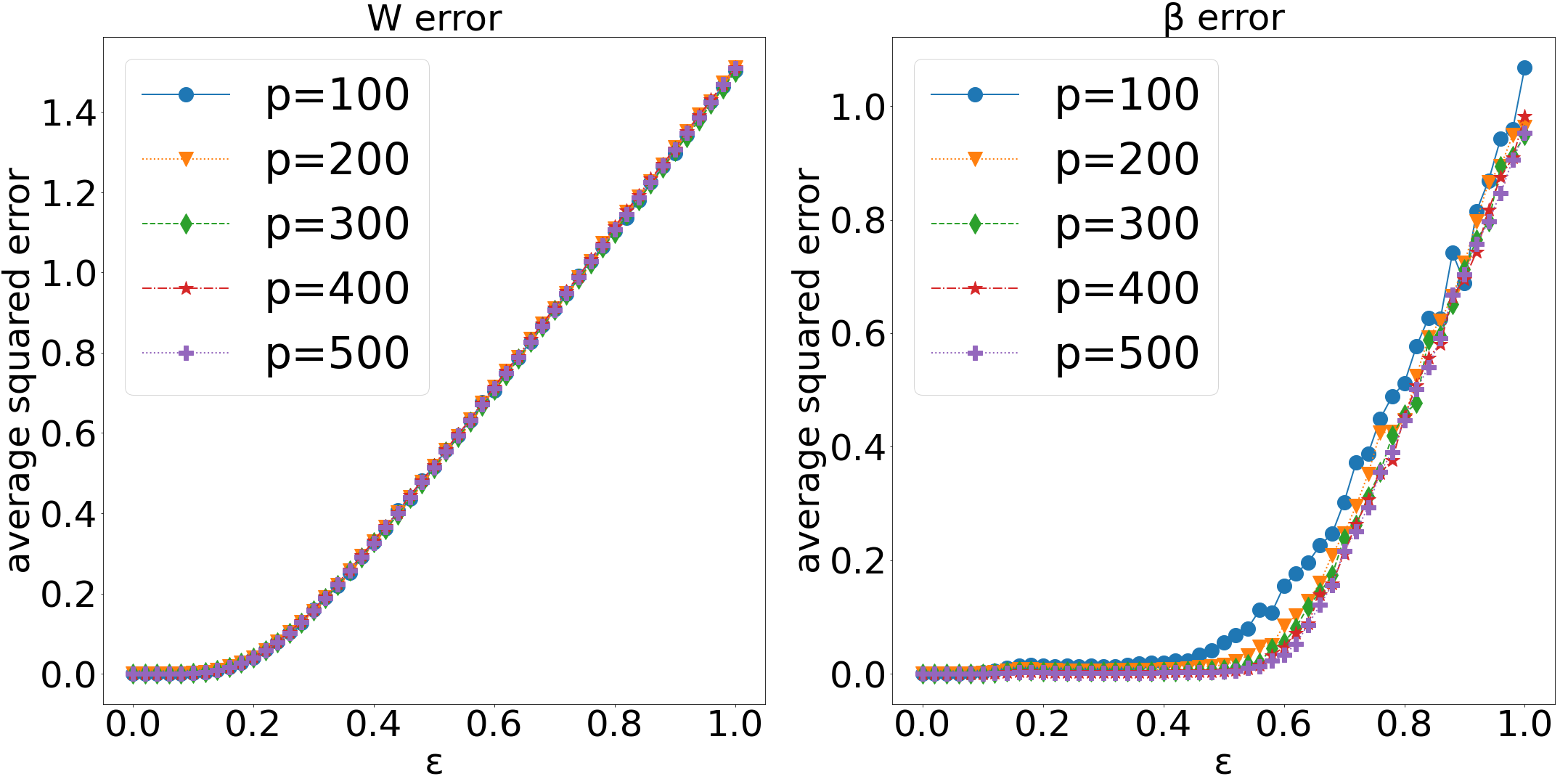}
             \caption{$k$=50}
  \end{subfigure}
  \begin{subfigure}{0.45\textwidth}
             \includegraphics[trim=0 0 0 0,clip,width=1.0\textwidth]{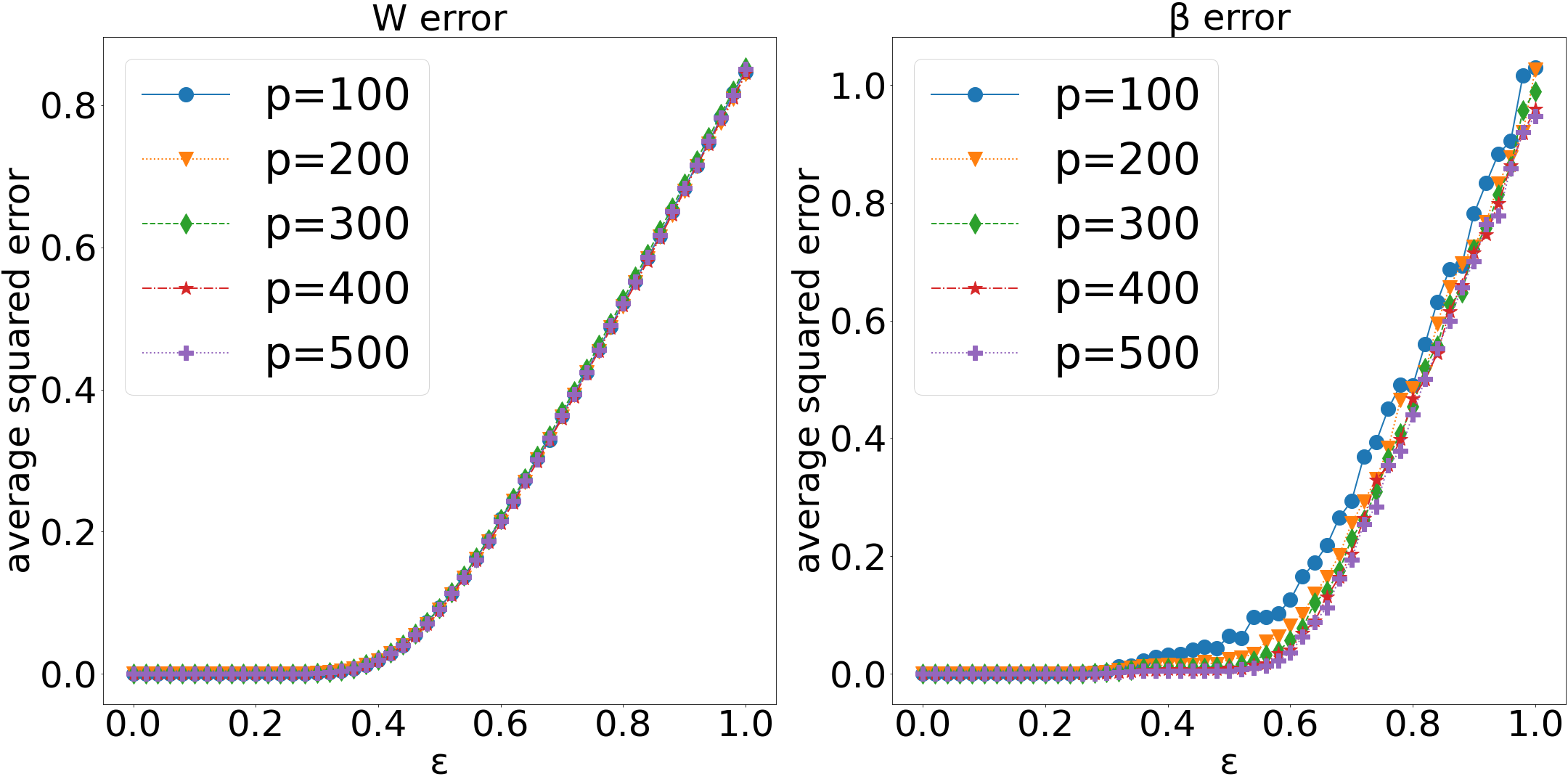}
             \caption{$k$=100}
  \end{subfigure}
    \begin{subfigure}{0.45\textwidth}
             \includegraphics[trim=0 0 0 0,clip,width=1.0\textwidth]{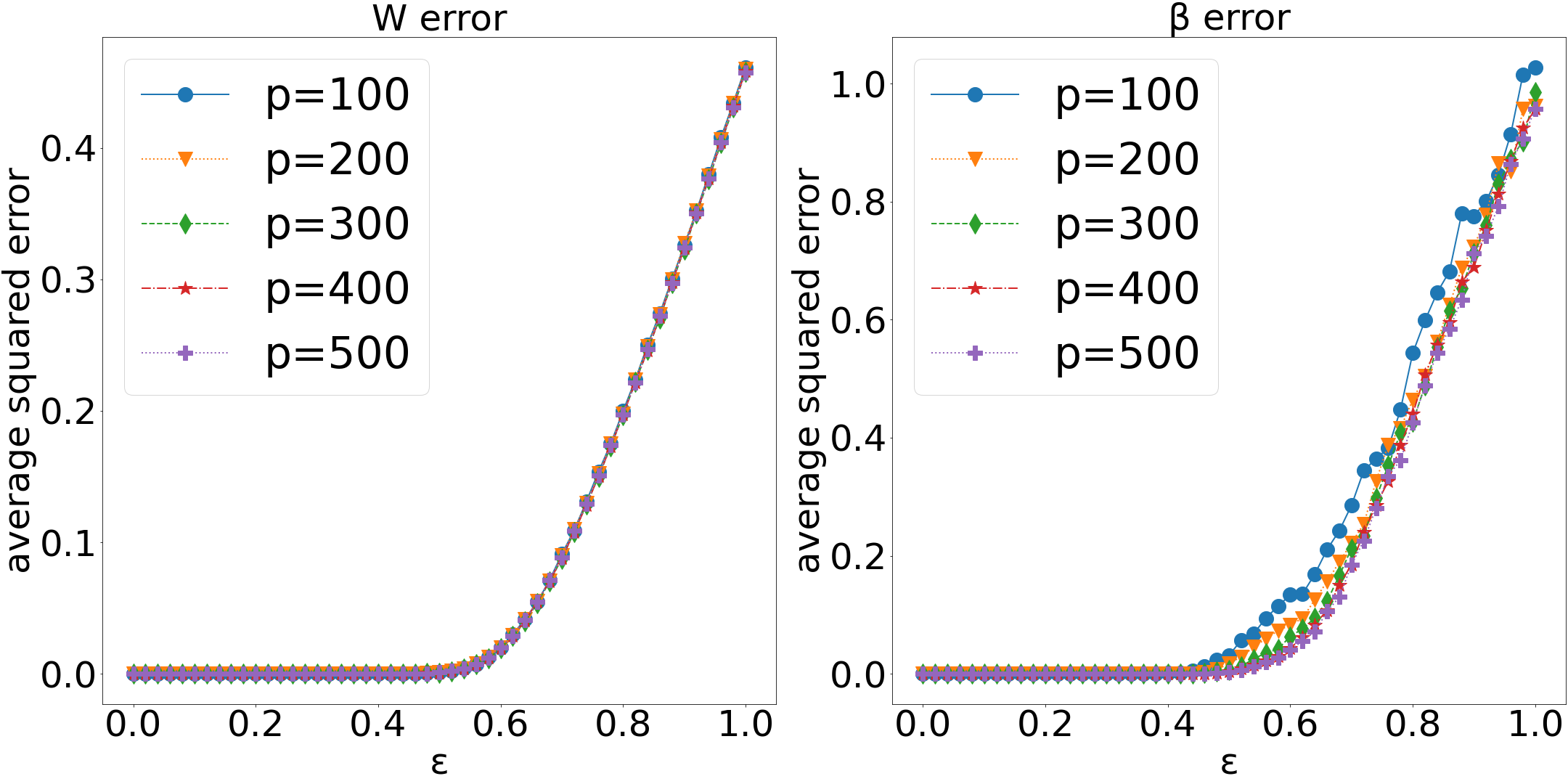} 
             \caption{$k$=150}
  \end{subfigure}
  \caption{{\bf Increasing $p$ and $k$ promotes the recovery performance ($n = 5, m=5$) on synthetic data}. Experiments under settings in Theorem \ref{thm: main1}. When $p$ increases, the limit of $\epsilon$ for successful recovery of $\beta$ also increases. When $k$ increases, the limit of $\epsilon$ for successful recovery of $W$ increases.} 
  \label{fig: syn_p_k_main1}
\end{figure}

\begin{figure}[h]
  \centering
  \begin{subfigure}{0.45\textwidth}
             \includegraphics[trim=0 0 0 0,clip,width=1\textwidth]{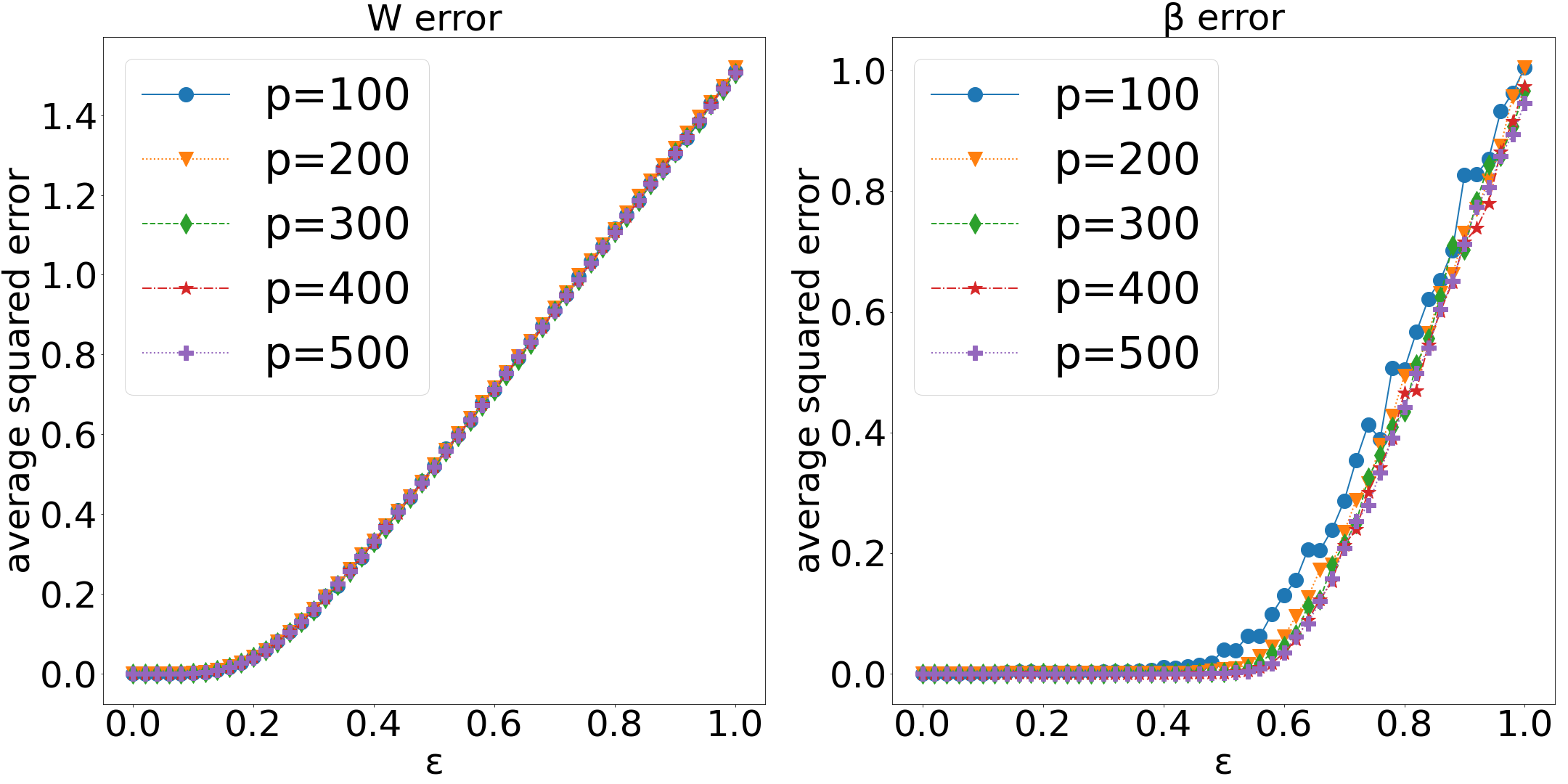}
             \caption{$k$=50}
  \end{subfigure}
  \begin{subfigure}{0.45\textwidth}
             \includegraphics[trim=0 0 0 0,clip,width=1\textwidth]{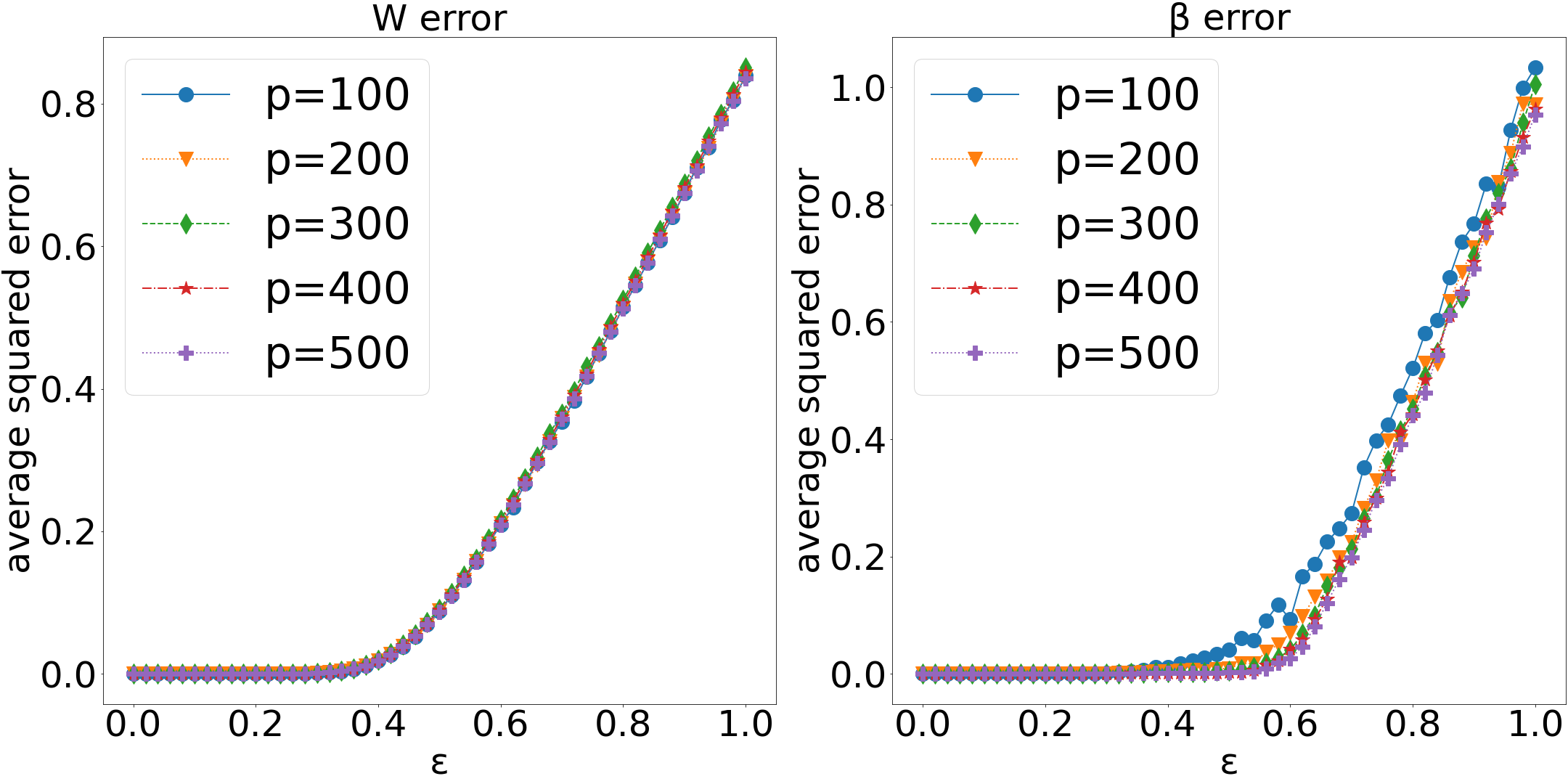}
             \caption{$k$=100}
  \end{subfigure}
    \begin{subfigure}{0.45\textwidth}
             \includegraphics[trim=0 0 0 0,clip,width=1\textwidth]{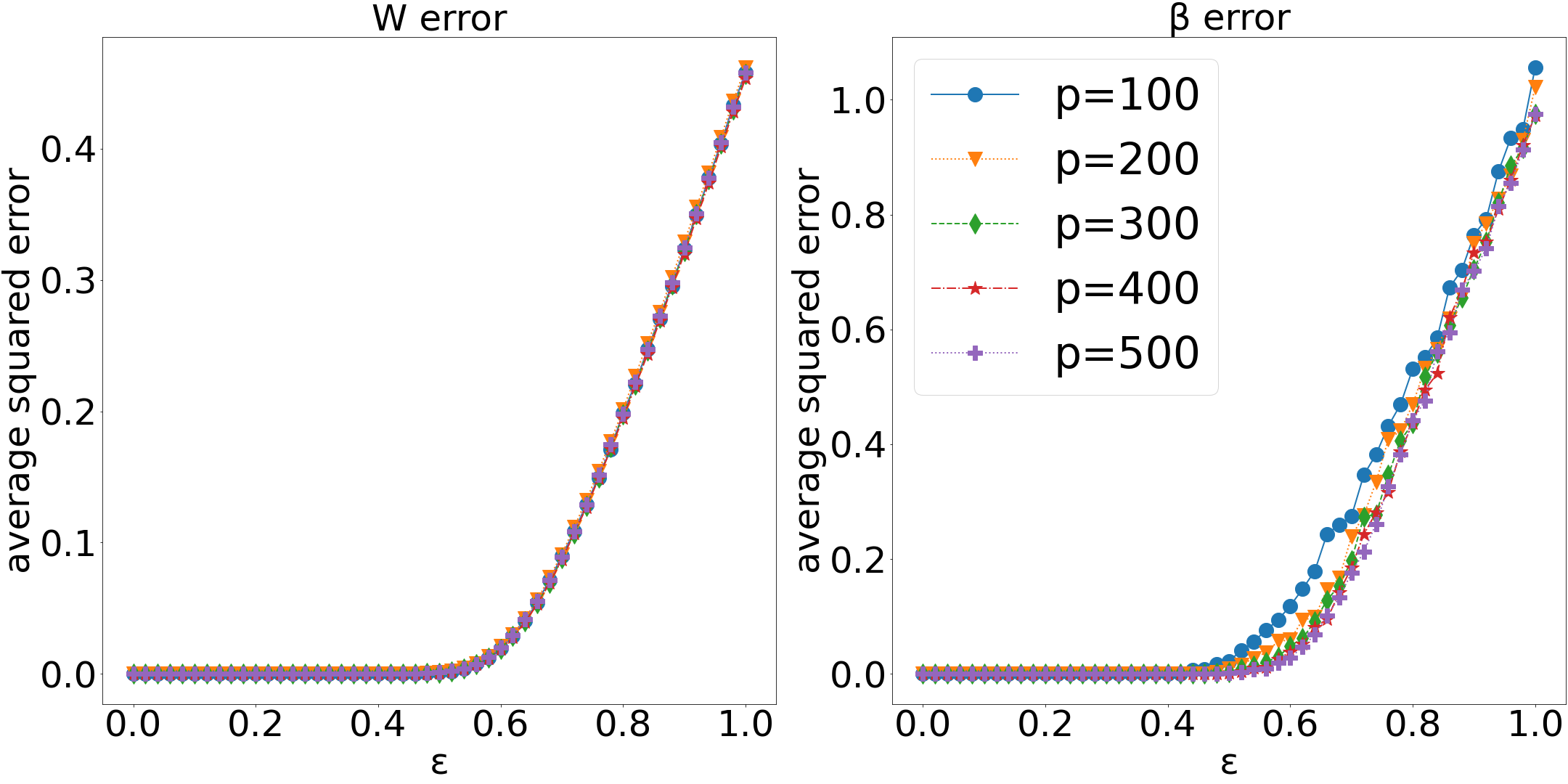} 
             \caption{$k$=150}
  \end{subfigure}
  \caption{{\bf Increasing $p$ and $k$ promotes the recovery performance ($n = 5, m=5$) on synthetic data} Experiments under setting in Theorem \ref{thm: main2} setting. When $p$ increases, the limit of $\epsilon$ for successful recovery of $\beta$ also increases. When $k$ increases, the limit of $\epsilon$ for successful recovery of $W$ increases.} 
  \label{fig: syn_p_k_main2}  
\end{figure}

\begin{figure}[h]
\centering
  \begin{subfigure}{0.45\textwidth}
             \includegraphics[trim=0 0 0 0,clip,width=1\textwidth]{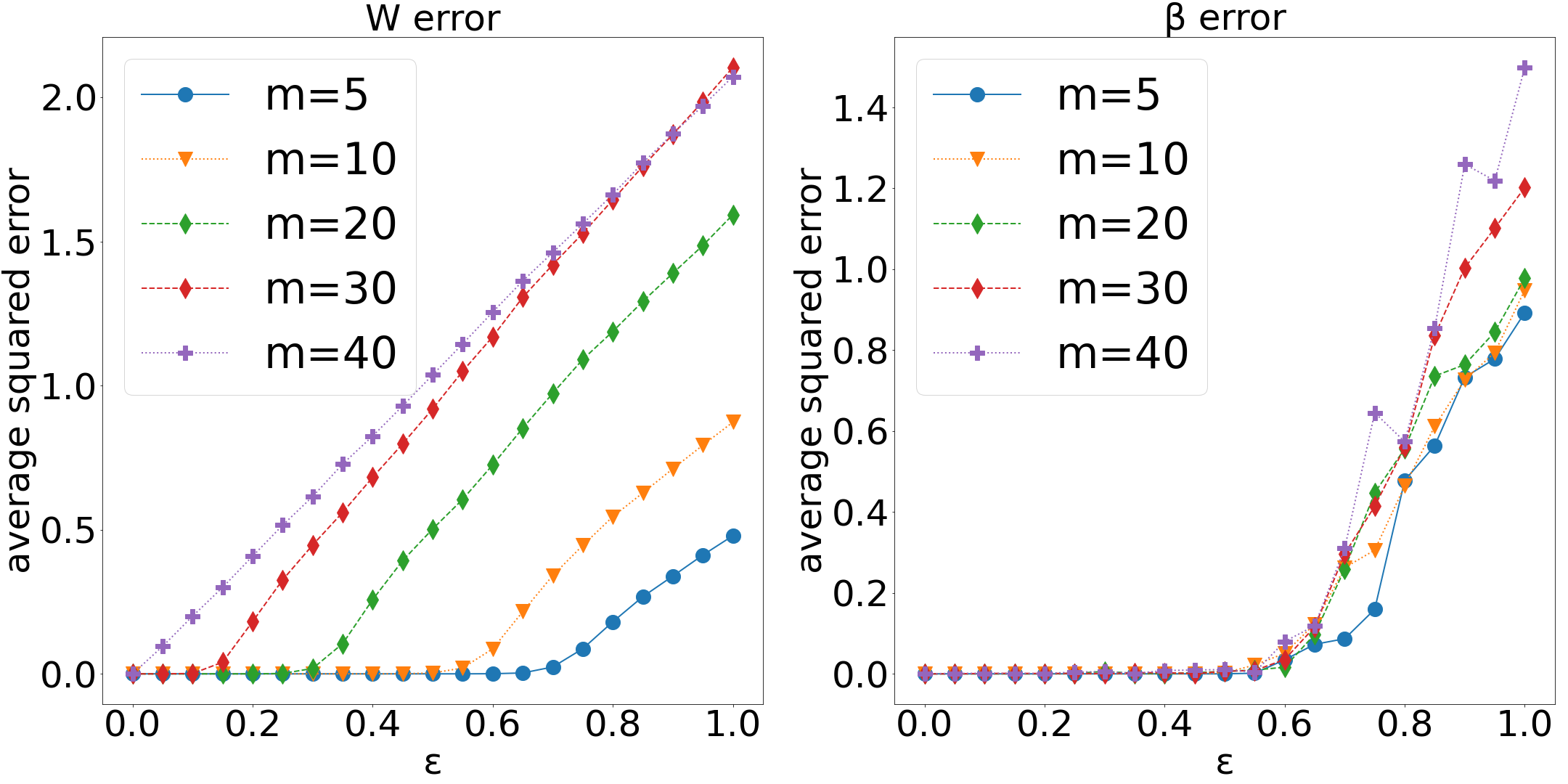}
  \end{subfigure}
  \caption{{\bf Decreasing $m$ promotes the recovery performance ($n = 5, p = 500, k = 200$) on synthetic data}. Experiments under settings in Theorem \ref{thm: main1} setting. When $m$ decreases, the limit of $\epsilon$ for successful recovery of both $W$ and $\beta$ also increases.}
  \label{fig: syn_m}
\end{figure} 

\begin{figure}[ht]
\centering
  \begin{subfigure}{0.45\textwidth}
             \includegraphics[trim=0 0 0 0,clip,width=1\textwidth]{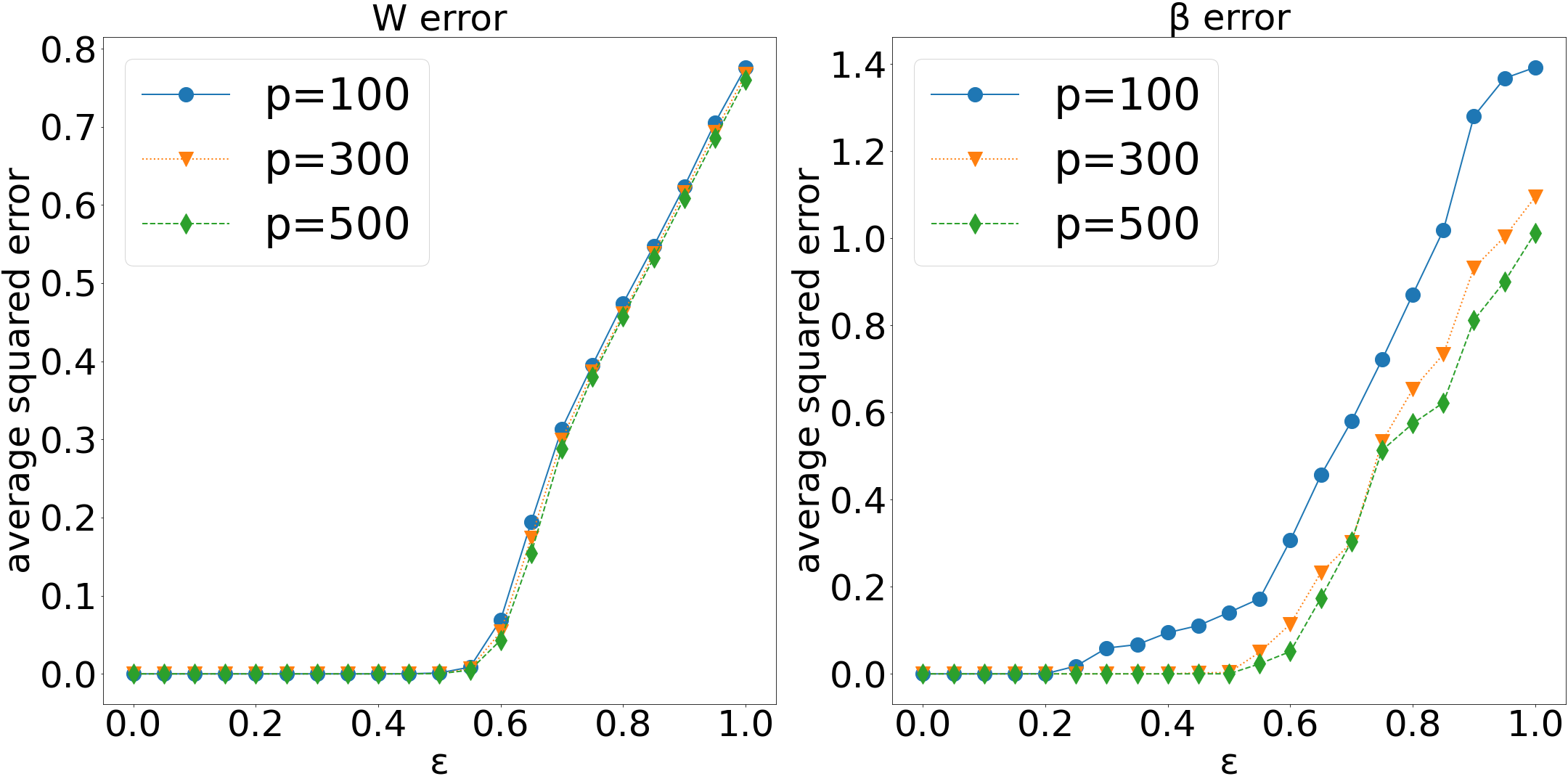}
             \caption{$m=2,k=392$}
  \end{subfigure}
  \begin{subfigure}{0.45\textwidth}
             \includegraphics[trim=0 0 0 0,clip,width=1\textwidth]{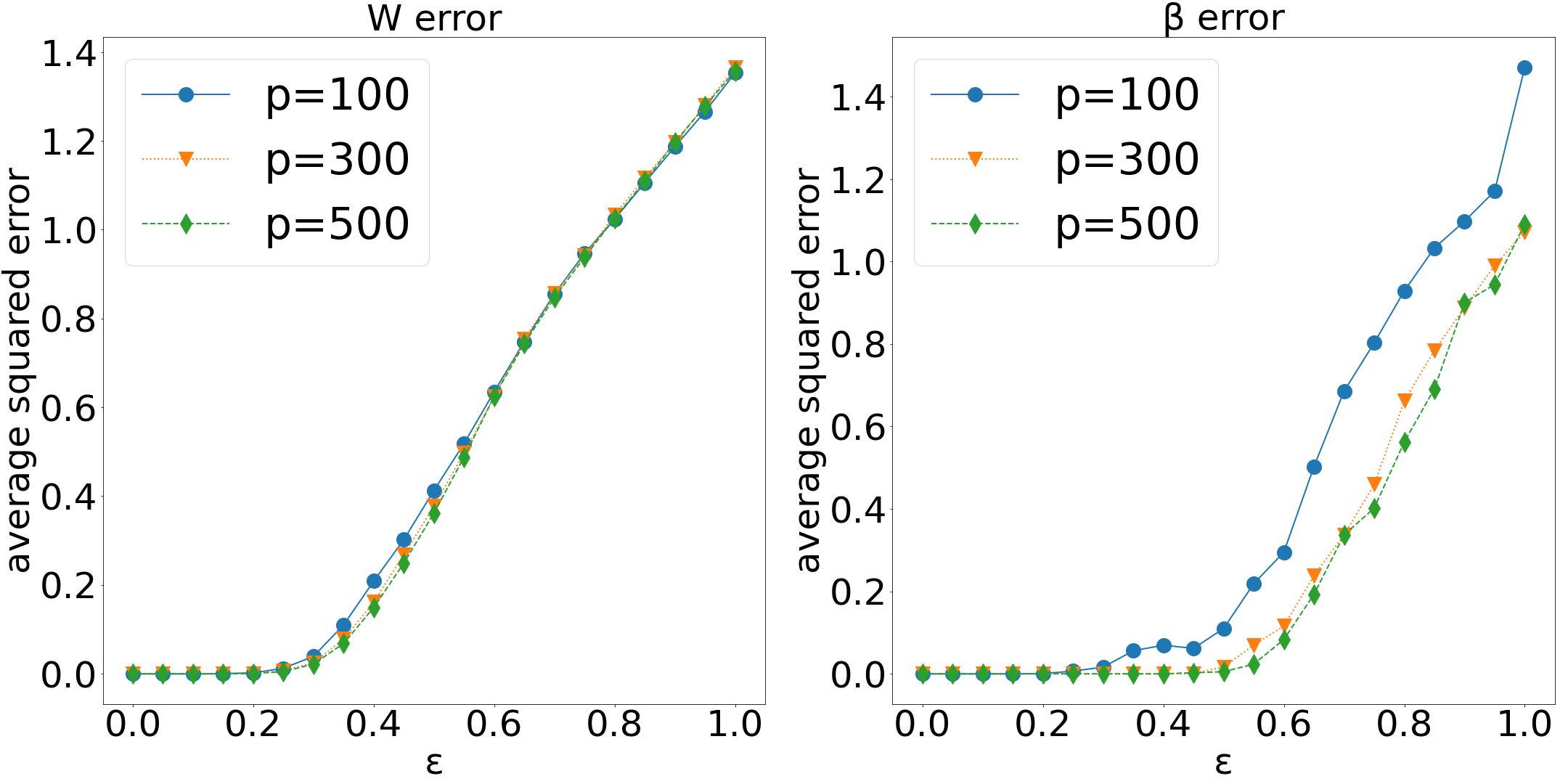}
             \caption{$m=4,k=196$}
  \end{subfigure}
    \begin{subfigure}{0.45\textwidth}
             \includegraphics[trim=0 0 0 0,clip,width=1\textwidth]{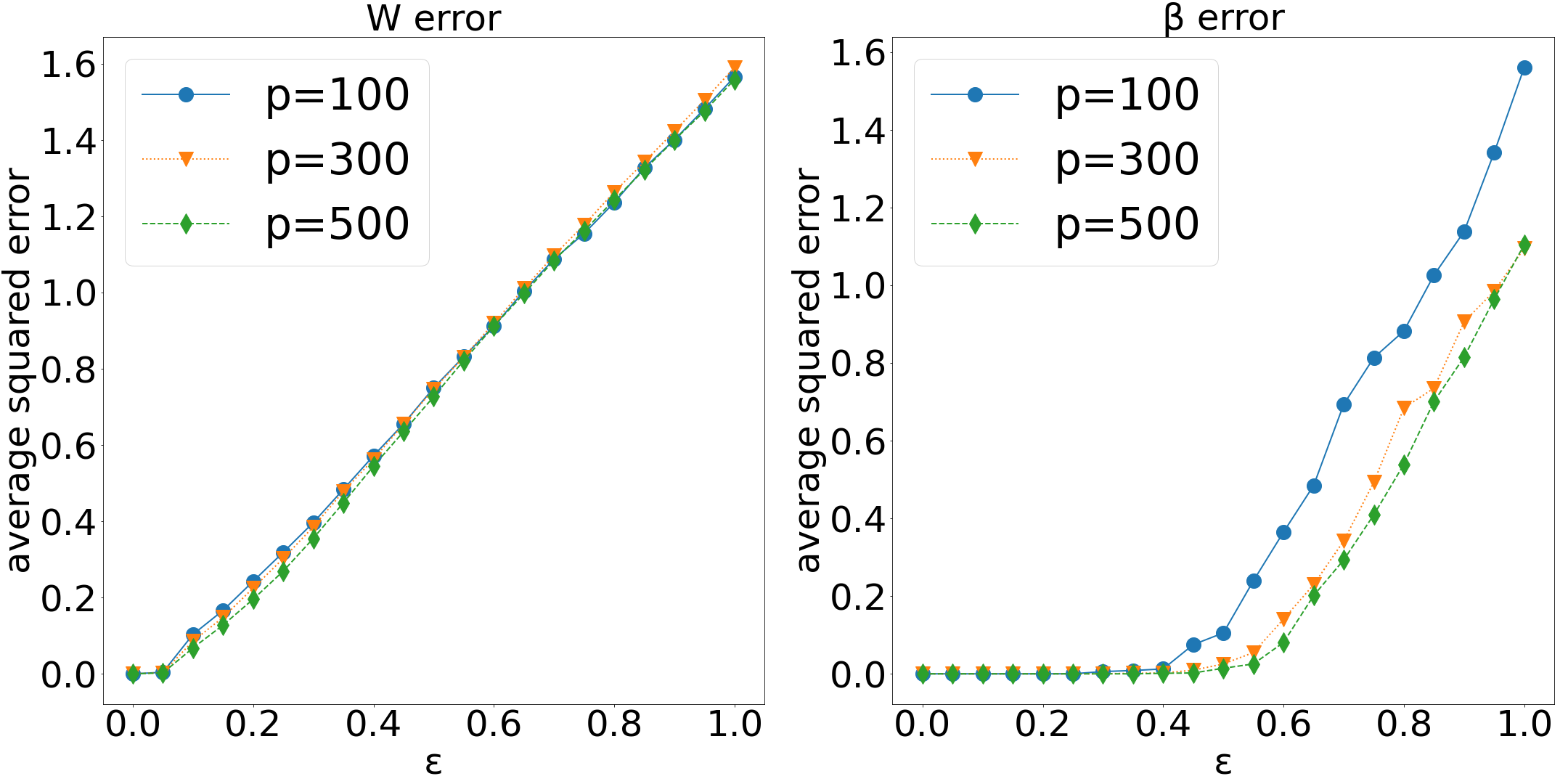} 
             \caption{$m=7,k=112$}
  \end{subfigure}
  \caption{{\bf Increasing $p$ promotes the recovery performance ($n = 21$) on MNIST dataset}. Experiments under the setting in Theorem \ref{thm: main1}. When $p$ increases, the limit of $\epsilon$ for successful recovery of $\beta$ also increases} 
  \label{fig: MNIST_p}
\end{figure} 

\begin{figure}[h]
  \centering
  \begin{subfigure}{0.45\textwidth}
  \centering
             \includegraphics[trim=0 0 0 0,clip,width=1\textwidth]{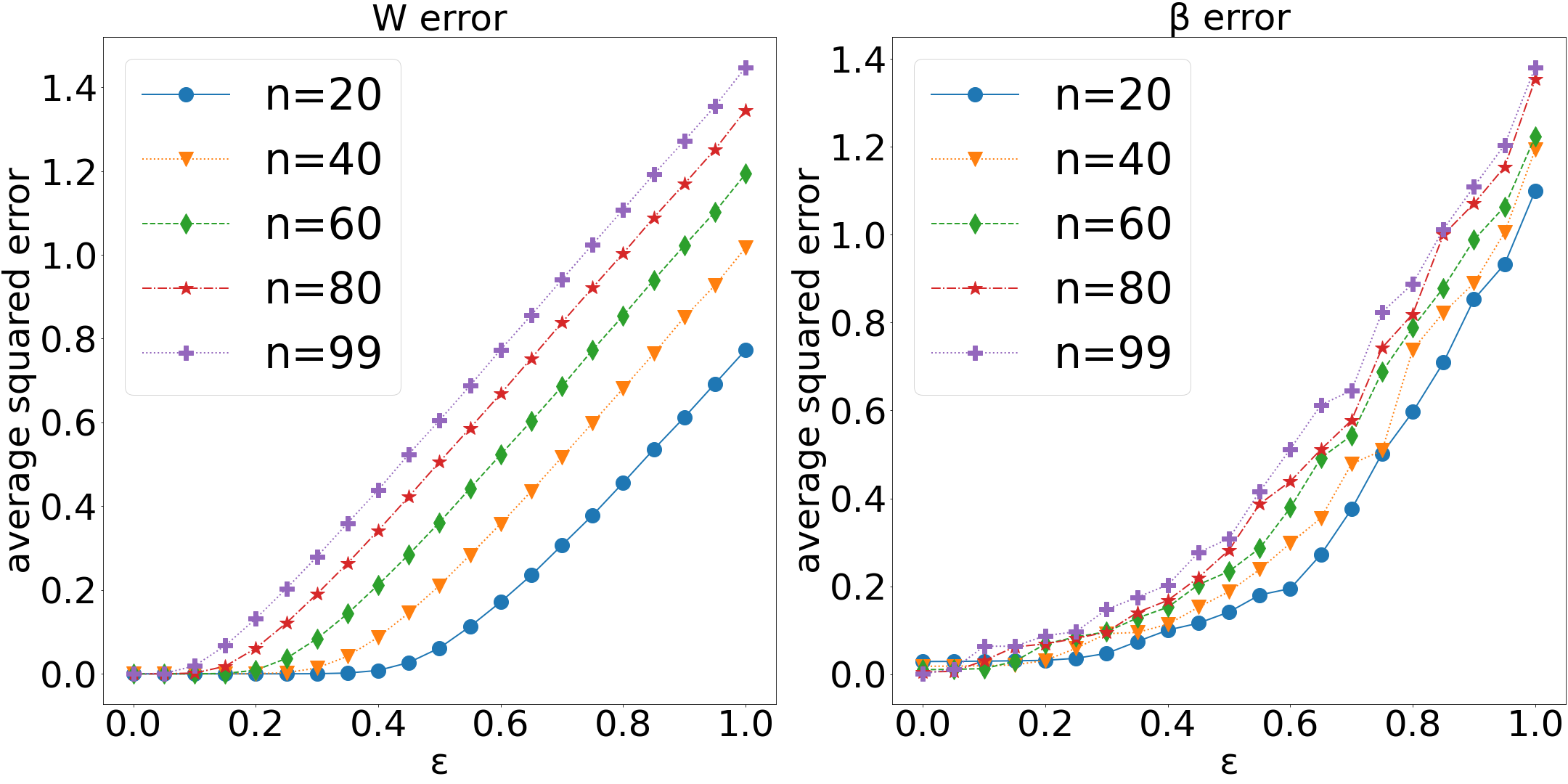}
             \caption{\footnotesize{CNN purification by training instances} }
  \end{subfigure}
    \begin{subfigure}{0.45\textwidth}
    \centering
             \includegraphics[trim=0 0 0 0,clip,width=1\textwidth]{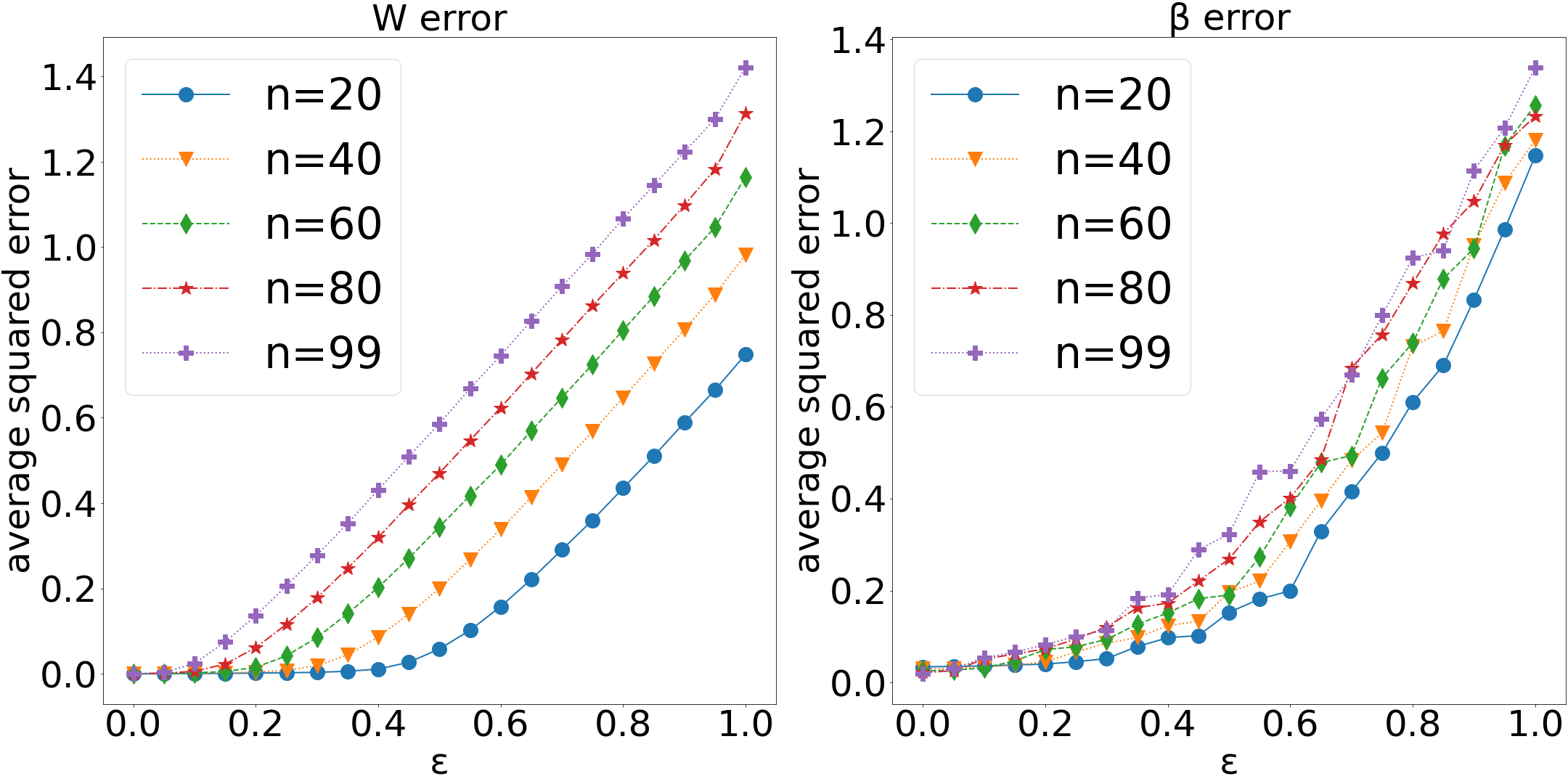} 
             \caption{\footnotesize{Model purification by non-training instances} }
  \end{subfigure}
  \caption{{\bf Our CNN purification method has the ability to yield good performance even when using a limited number of clean data points, which may not necessarily originate from the MNIST training dataset ($\textit{training batch size} = 99$)}. Experiments under settings in Theorem \ref{thm: main1}. When $n$ decreases, the limit of $\epsilon$ for successful recovery of both $W$ and $\beta$ also increases.} 
  \label{fig: MNIST_part_random}
\end{figure}

\begin{figure}[h]
\centering
  \begin{subfigure}{0.45\textwidth}
  \centering
             \includegraphics[trim=0 0 0 0,clip,width=1\textwidth]{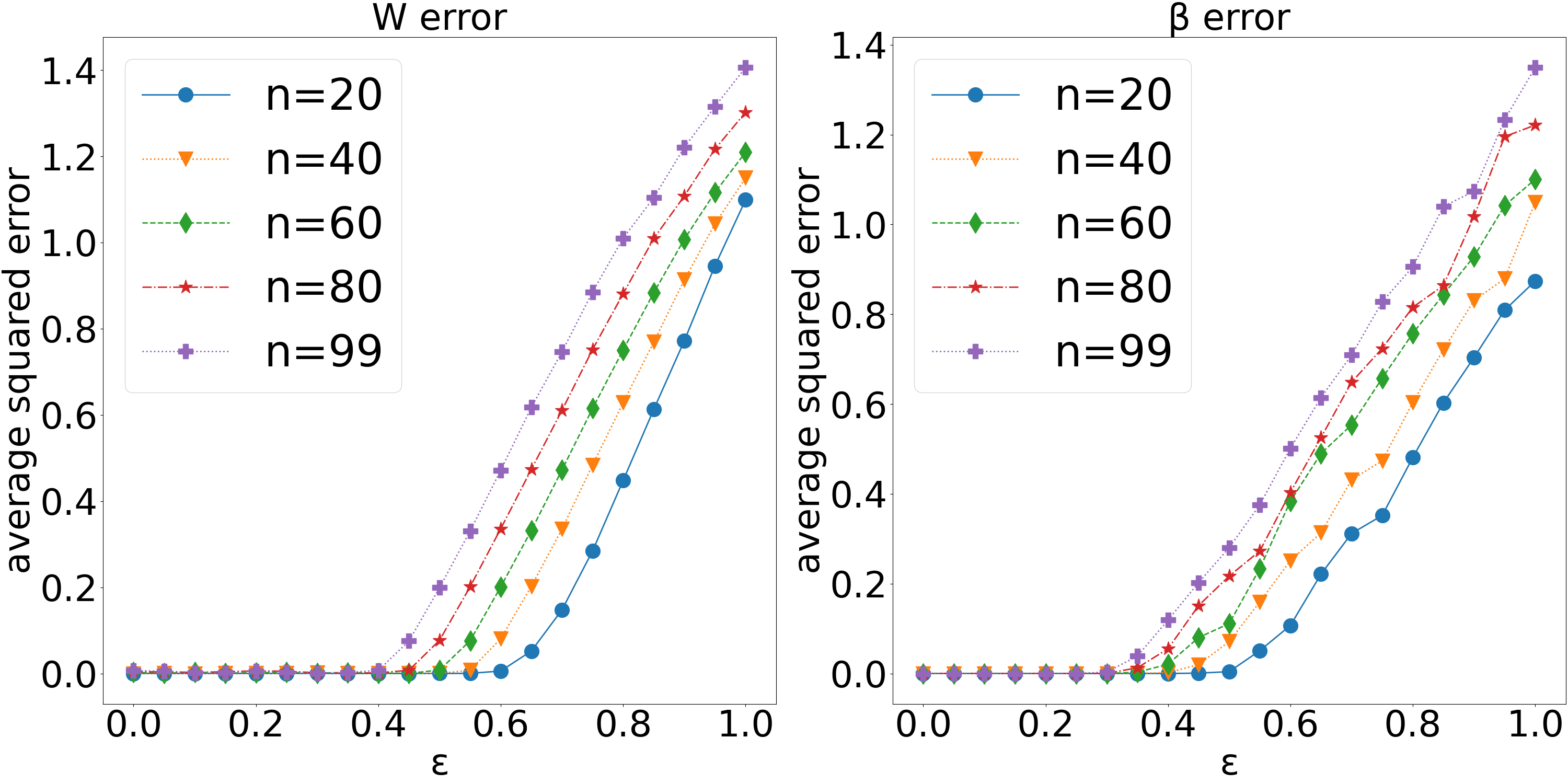}
             \caption{CNN purification by training instances}
  \end{subfigure}
  \begin{subfigure}{0.45\textwidth}
  \centering
             \includegraphics[trim=0 0 0 0,clip,width=1\textwidth]{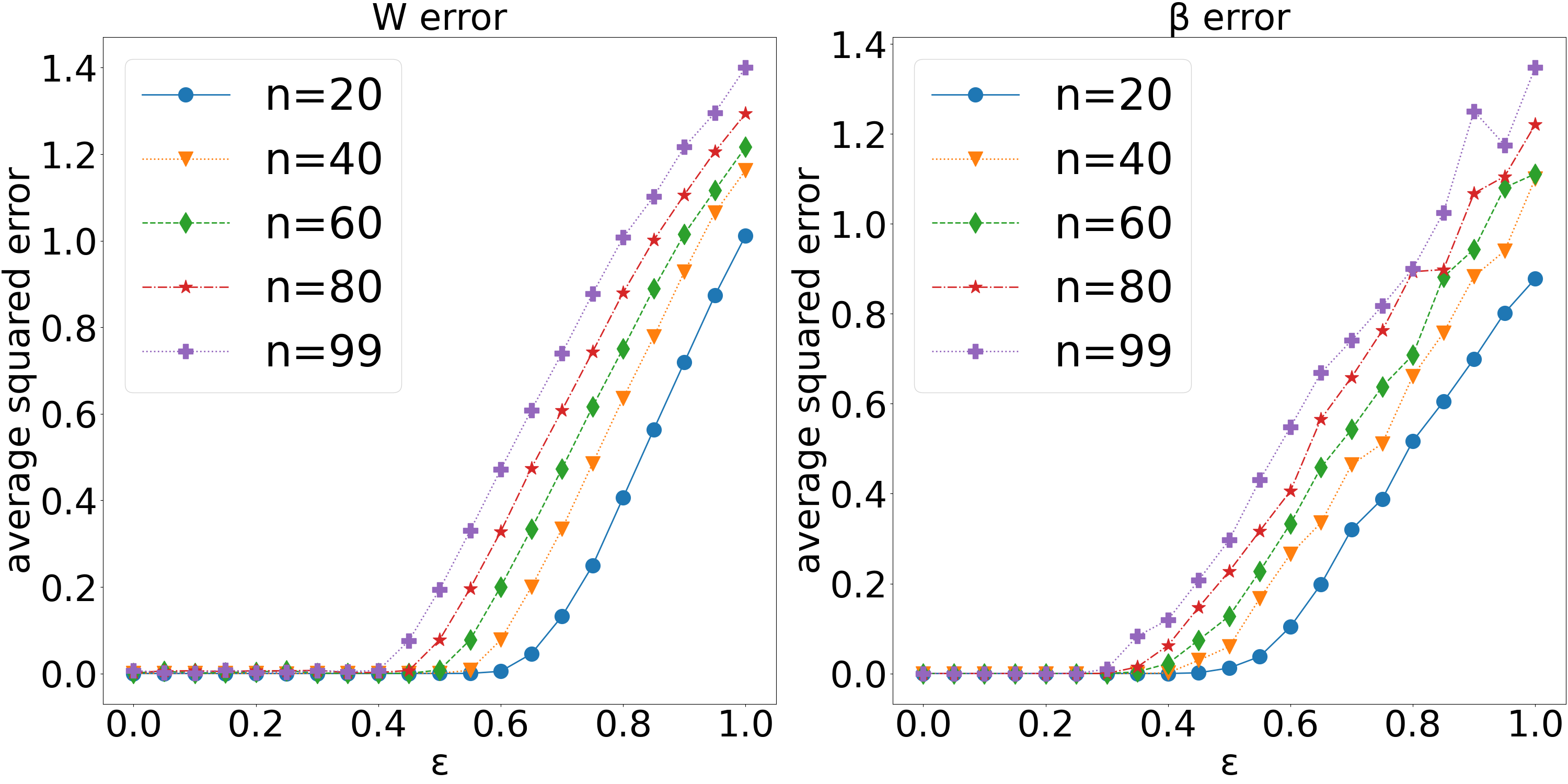}
             \caption{Model purification by non-training instances}
  \end{subfigure}  
  \caption{{\bf Our CNN purification method has the ability to yield good performance even when using a limited number of clean data points, which may not necessarily originate from the CIFAR-10 training dataset ($\textit{training batch size} = 99$)}. Experiments under settings in Theorem \ref{thm: main1}. When $n$ decreases, the limit of $\epsilon$ for successful recovery of both $W$ and $\beta$ also increases.} 
  \label{fig: CIFAR_part_random}  
\end{figure}

\begin{figure}[h]
\centering
  \begin{subfigure}{0.45\textwidth}
  \centering
             \includegraphics[trim=0 0 0 0,clip,width=1\textwidth]{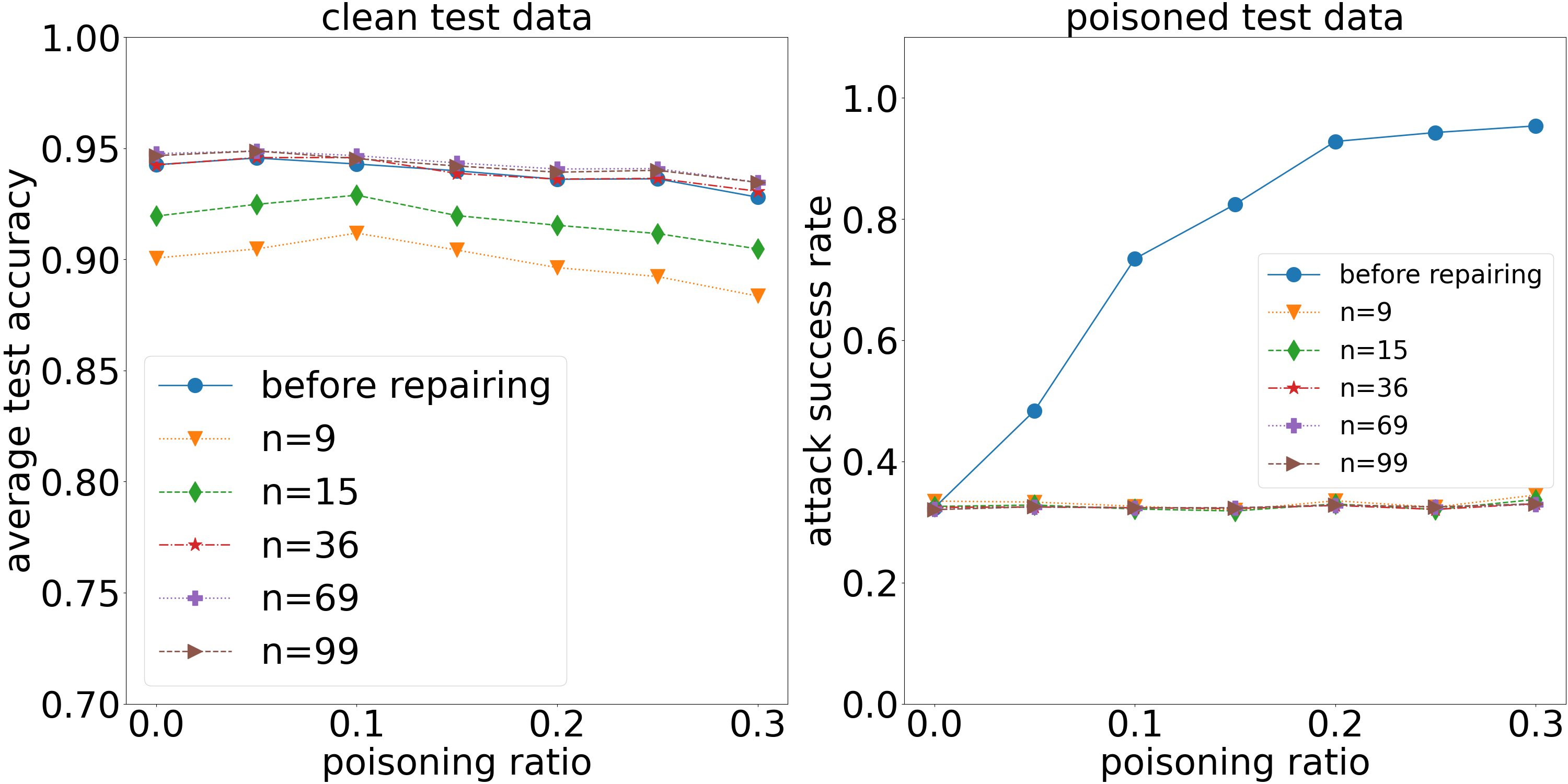}
             \caption{Mitigating poisoning attack by training instances}
  \end{subfigure}
  \begin{subfigure}{0.45\textwidth}
  \centering
             \includegraphics[trim=0 0 0 0,clip,width=1\textwidth]{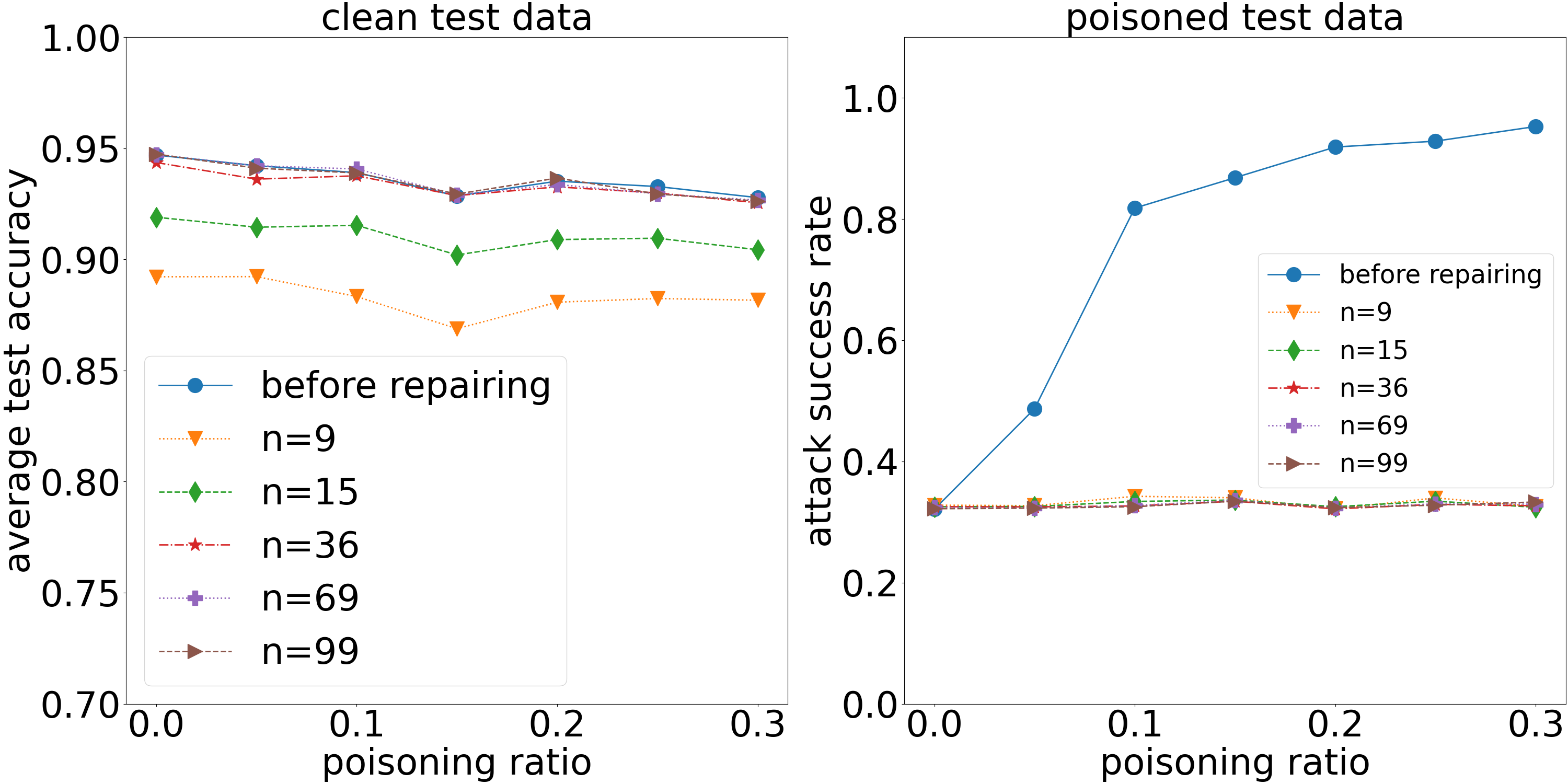}
             \caption{Mitigating poisoning attack  by non-training instances}
  \end{subfigure}  
  \caption{{\bf Even with a small number of clean data points, CNN purification can mitigate the poisoning effect ($\textit{training batch size} = 99$) on MNIST}. Experiments under settings in Theorem \ref{thm: main1}. The poisoned ratio indicates the percentage of the poisoned training data. The attack success rate is the percentage of test data that has been successively attacked.} 
  \label{fig: MNIST_backdoor}  
\end{figure}

\begin{figure}[h]
\centering
  \begin{subfigure}{0.45\textwidth}
  \centering
             \includegraphics[trim=0 0 0 0,clip,width=1\textwidth]{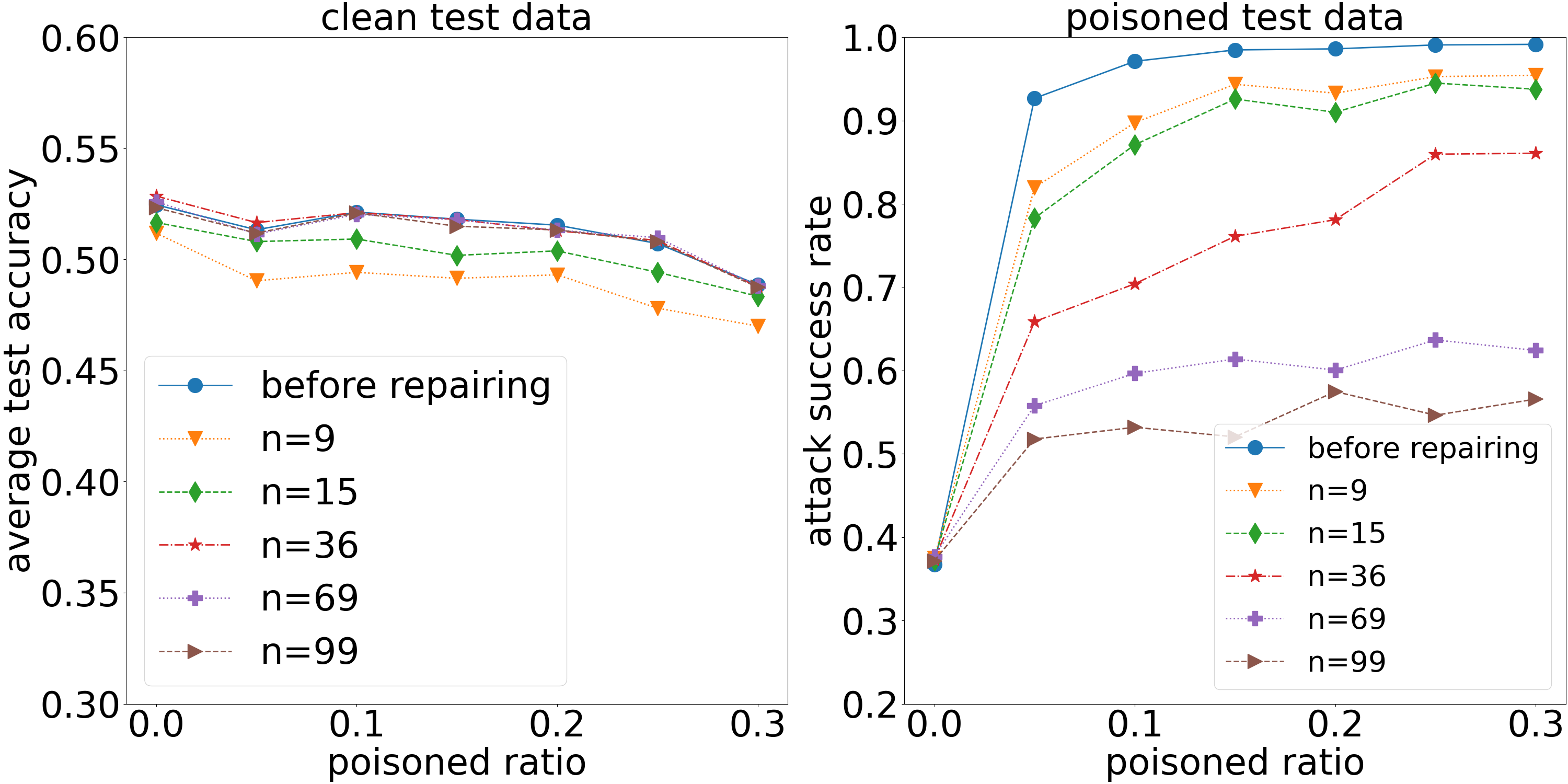}
             \caption{Mitigating poisoning attack by training instances}
  \end{subfigure}
  \begin{subfigure}{0.45\textwidth}
  \centering
             \includegraphics[trim=0 0 0 0,clip,width=1\textwidth]{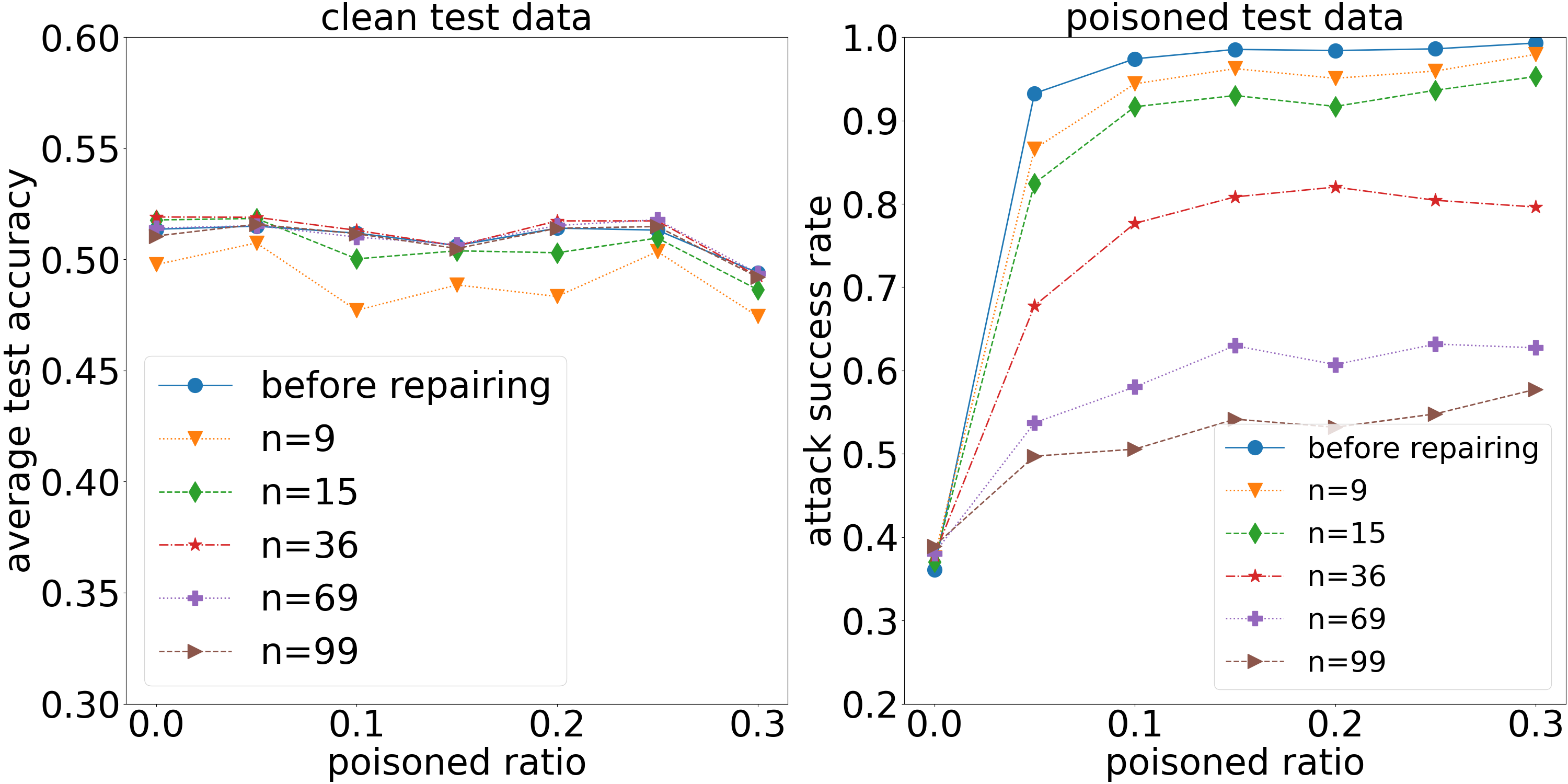}
             \caption{Mitigating poisoning attack  by non-training instances}
  \end{subfigure}  
  \caption{{\bf With a portion of clean data points, CNN purification can mitigate the poisoning effect ($\textit{training batch size} = 99$) on CIFAR-10}. The poisoned ratio indicates the percentage of the poisoned training data. The attack success rate is the percentage of test data that has been successively attacked.} 
  \label{fig: CIFAR_backdoor}  
\end{figure}


\begin{figure}[h]
\centering
  \begin{subfigure}{0.43\textwidth}
             \includegraphics[trim=0 0 0 0,clip,width=1\textwidth]{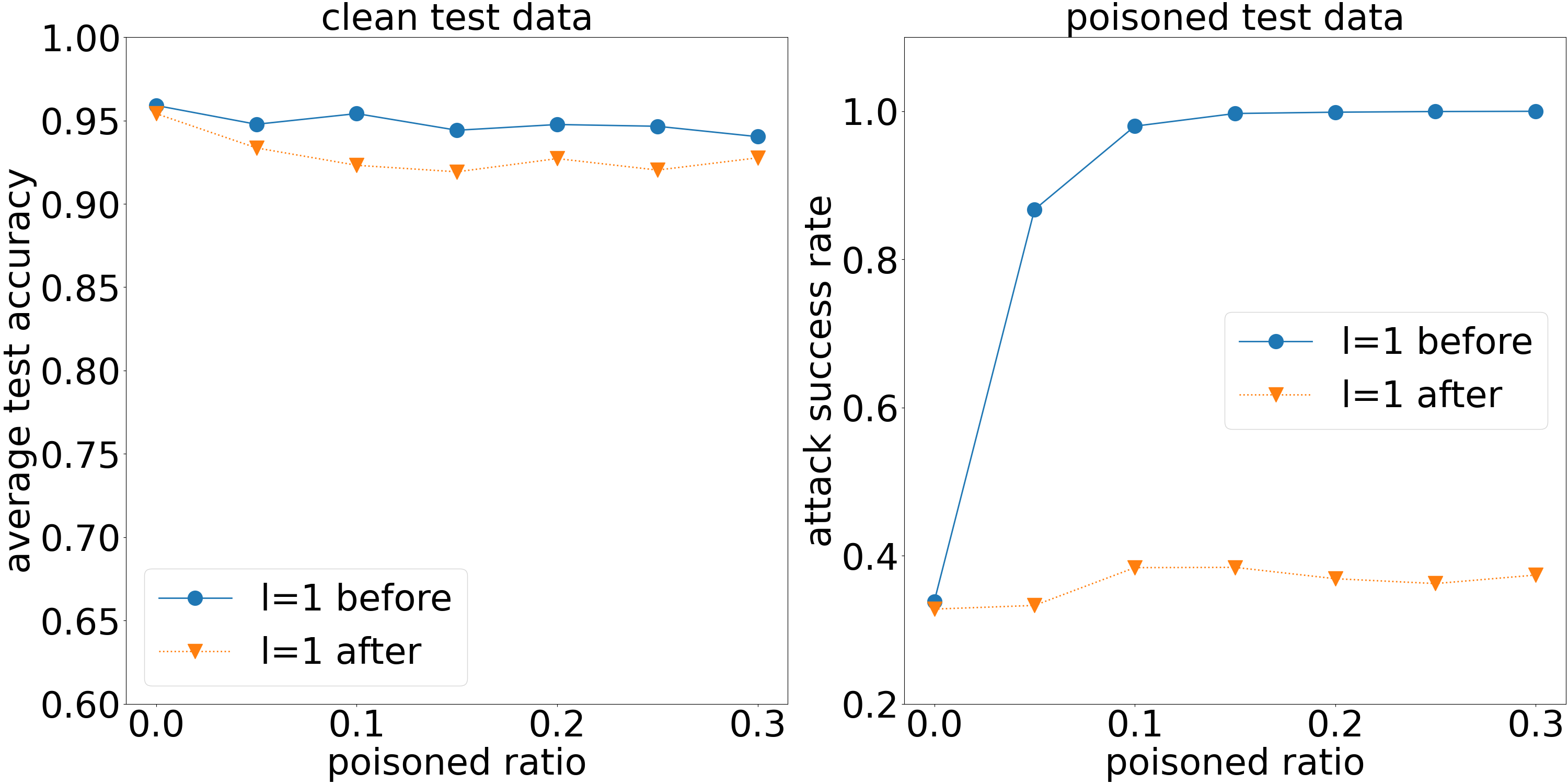}
  \end{subfigure}
  \begin{subfigure}{0.43\textwidth}
             \includegraphics[trim=0 0 0 0,clip,width=1\textwidth]{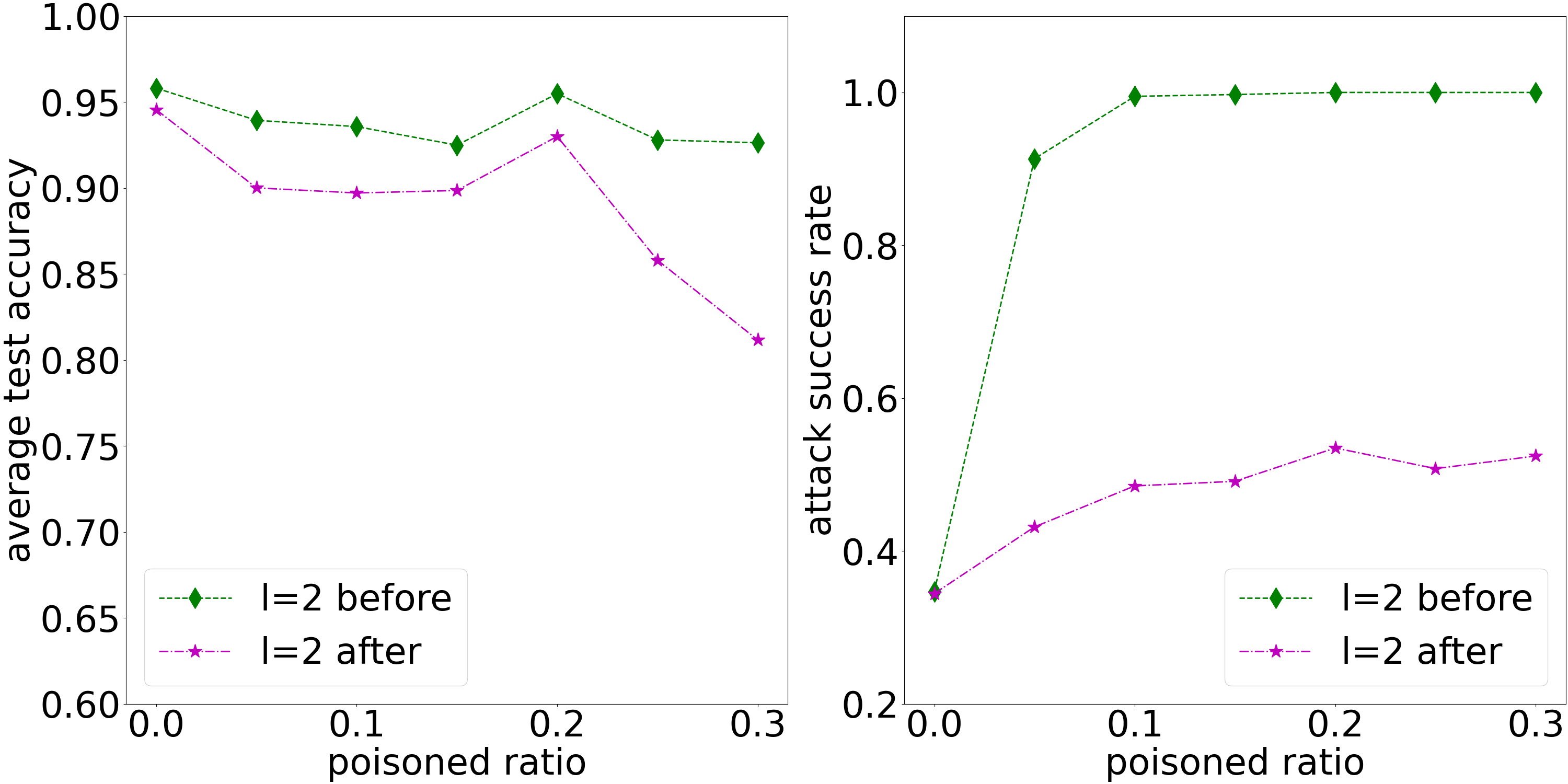}
  \end{subfigure}
  \caption{{\bf Even with multiple hidden layers, CNN purification can mitigate the poisoning effect on MNIST}.  Here CNN is purified by training instances.} 
  \label{fig: MNIST_multilayer_part}  
\end{figure} 


\begin{figure}[h]
\centering
  \begin{subfigure}{0.45\textwidth}
             \includegraphics[trim=0 0 0 0,clip,width=1\textwidth]{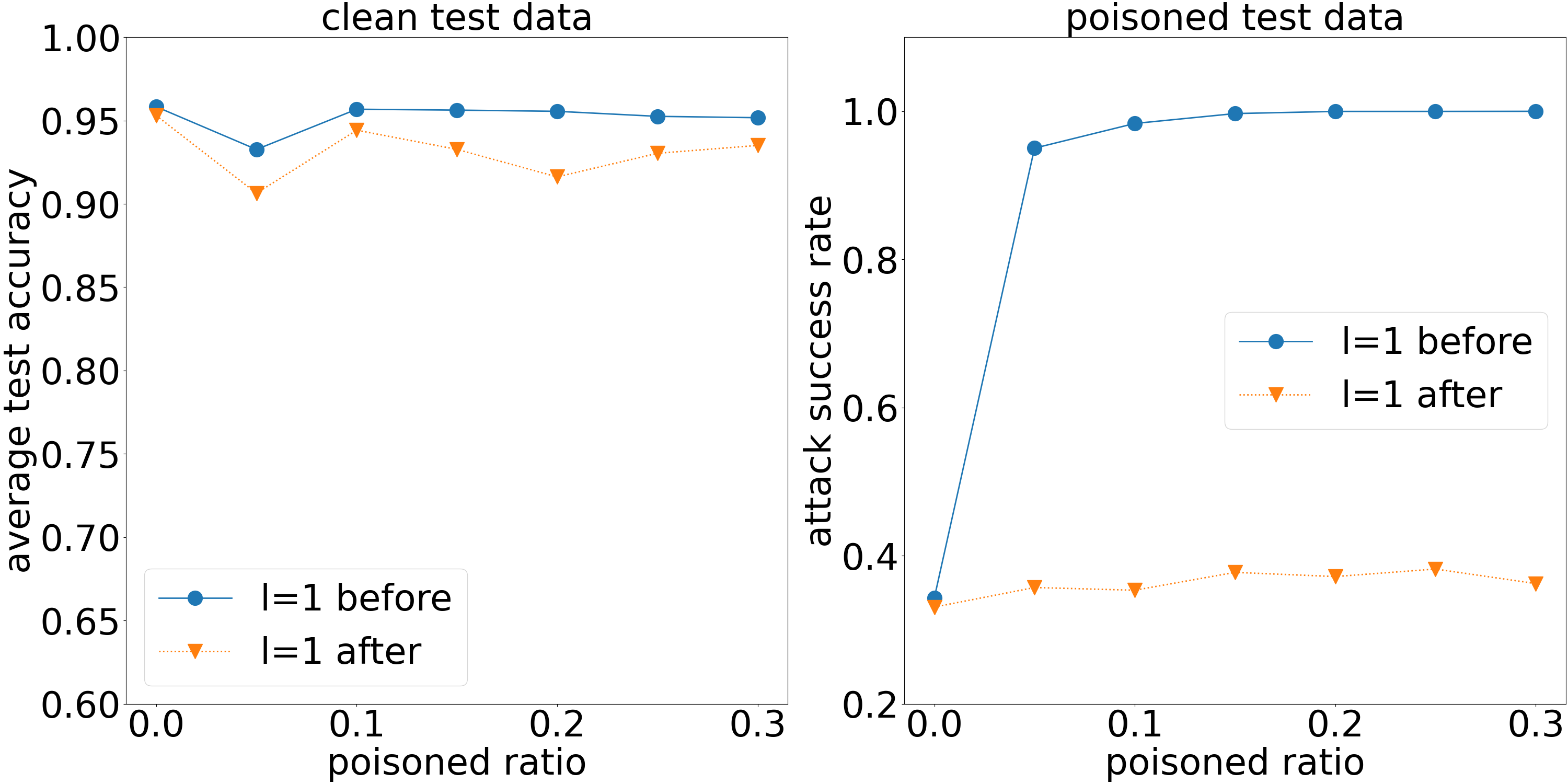}
  \end{subfigure}
  \begin{subfigure}{0.45\textwidth}
             \includegraphics[trim=0 0 0 0,clip,width=1\textwidth]{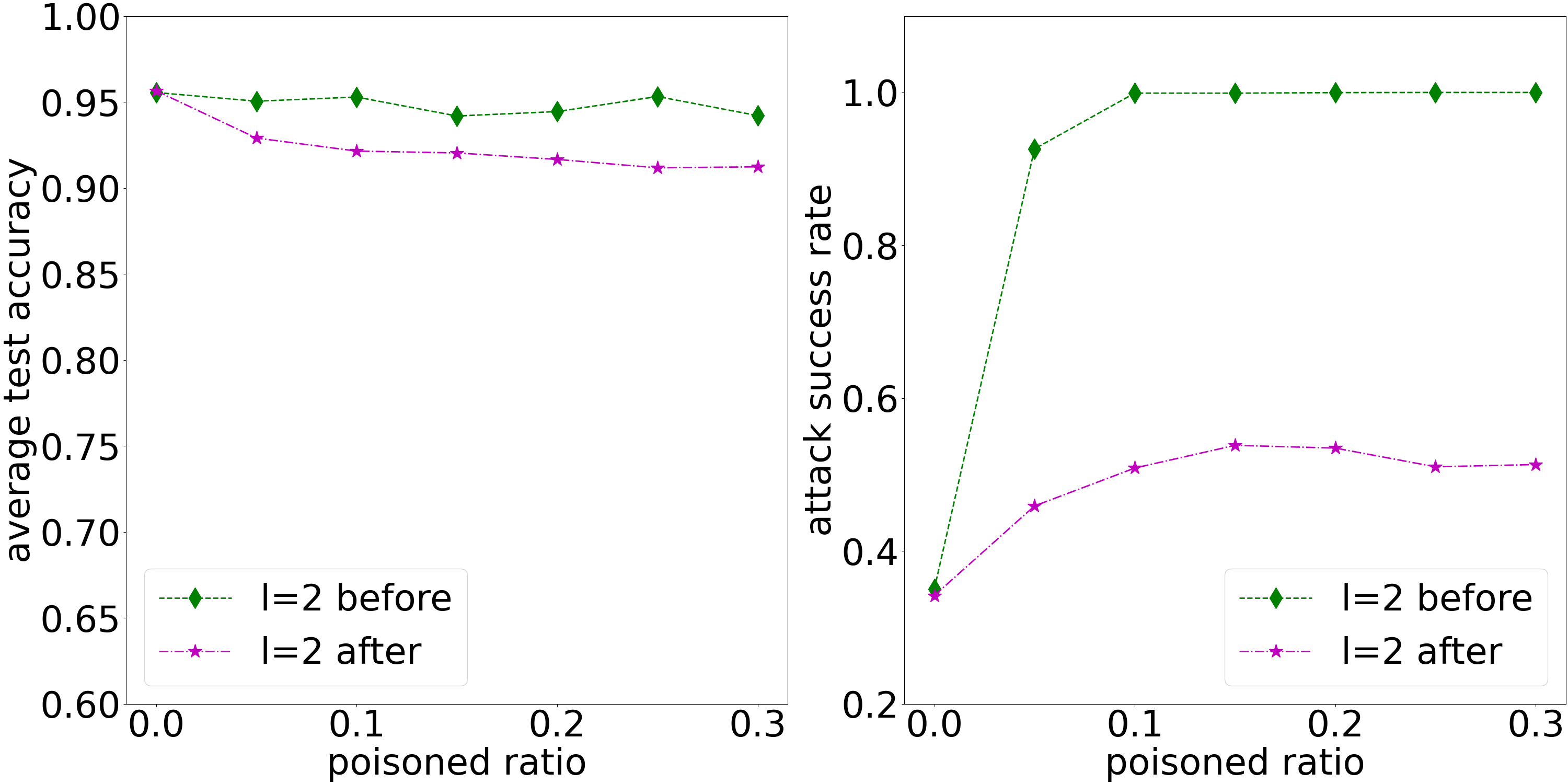}
  \end{subfigure}
  \caption{{\bf Even with multiple hidden layers, CNN purification can mitigate the poisoning effect with some compromise on average test accuracy ($\textit{training batch size} = 99$ , $\textit{repair data size = 36}$) on MNIST}. Here CNN is purified by non-training instances.} 
  \label{fig: MNIST_multilayer_random}  
\end{figure} 


\begin{figure}[ht]
\centering
  \begin{subfigure}{0.45\textwidth}
             \includegraphics[trim=0 0 0 0,clip,width=1\textwidth]{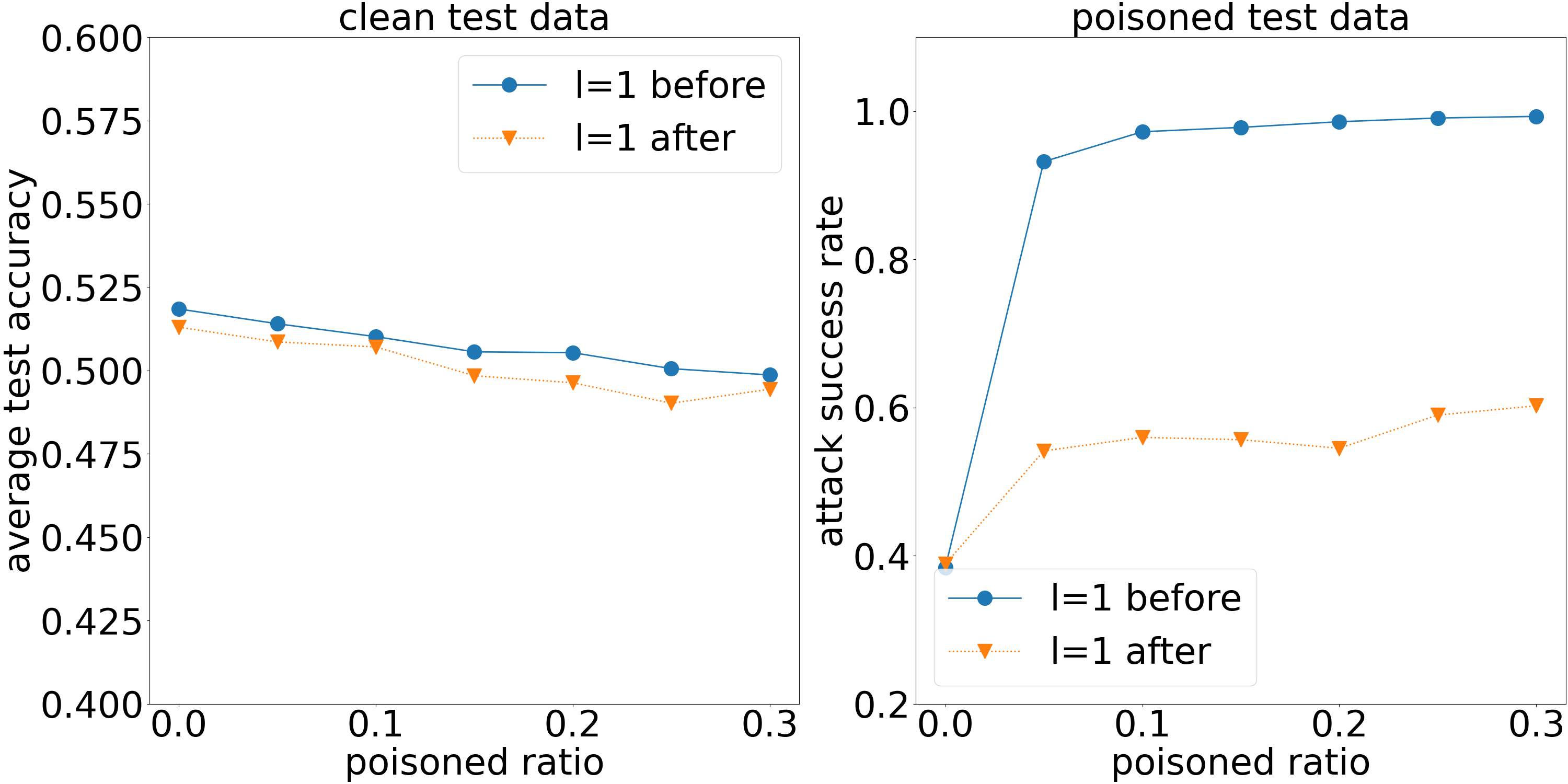}
  \end{subfigure}
  \begin{subfigure}{0.45\textwidth}
             \includegraphics[trim=0 0 0 0,clip,width=1\textwidth]{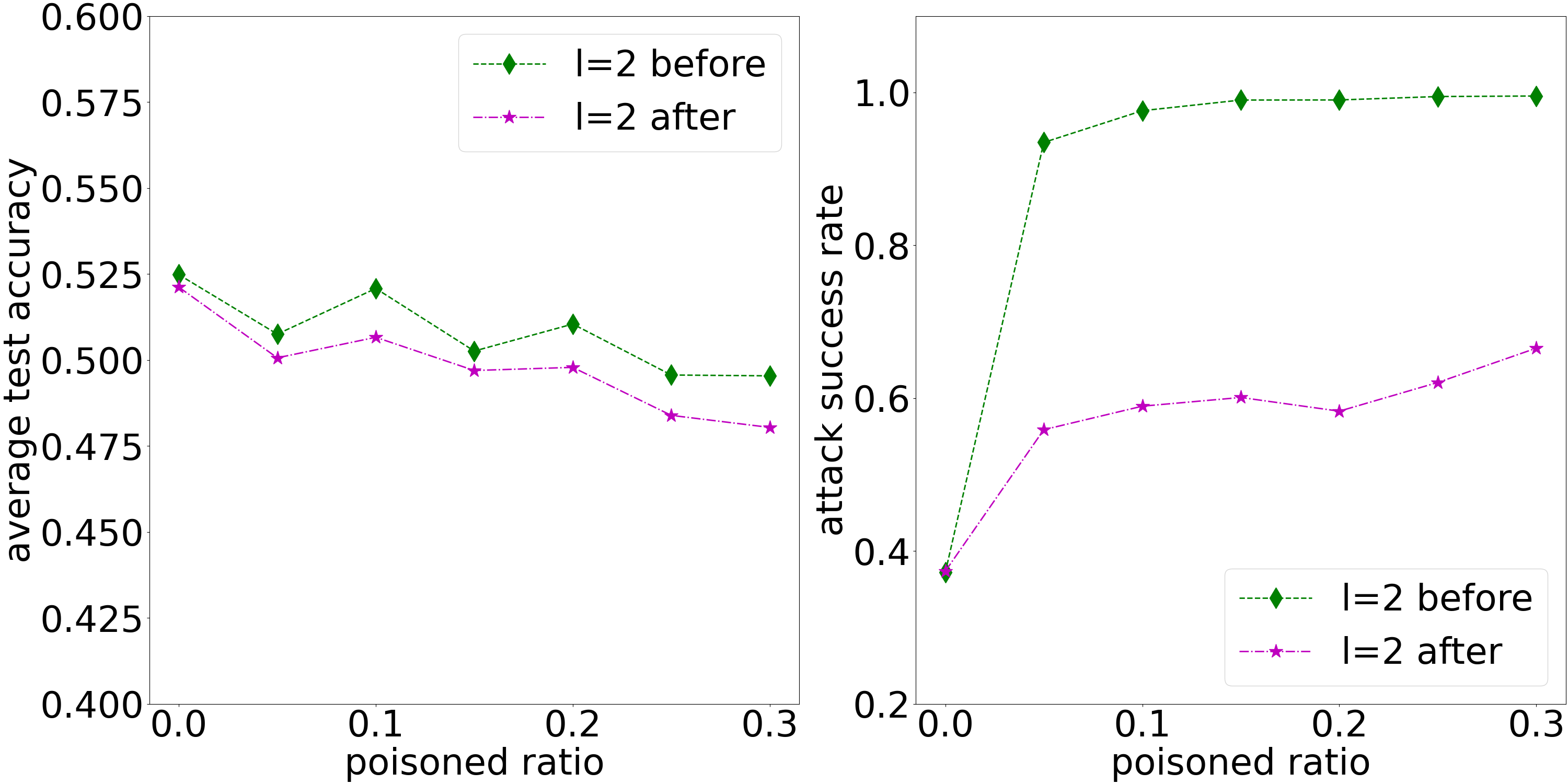}
  \end{subfigure}
    \begin{subfigure}{0.45\textwidth}
             \includegraphics[trim=0 0 0 0,clip,width=1\textwidth]{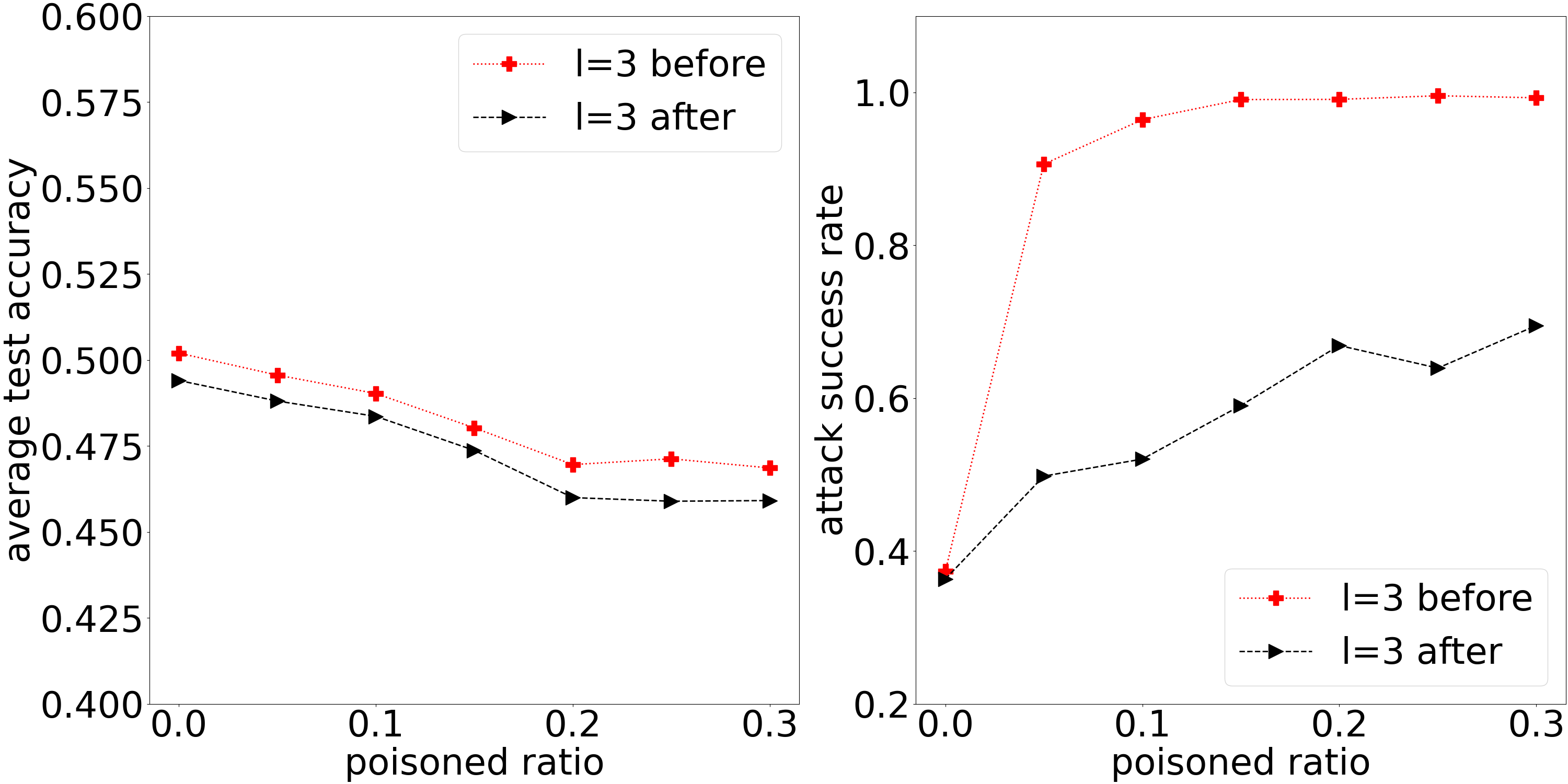} 
  \end{subfigure}
  \caption{{\bf Even with multiple hidden layers, CNN purification can mitigate the poisoning effect ($\textit{training batch size} = 99$ , $\textit{repair data size = 69}$) on CIFAR}. Here CNN is purified by training instances. }
  \label{fig: CIFAR_multilayer_part}  
\end{figure}

\begin{figure}[ht]
\centering
  \begin{subfigure}{0.45\textwidth}
             \includegraphics[trim=0 0 0 0,clip,width=1\textwidth]{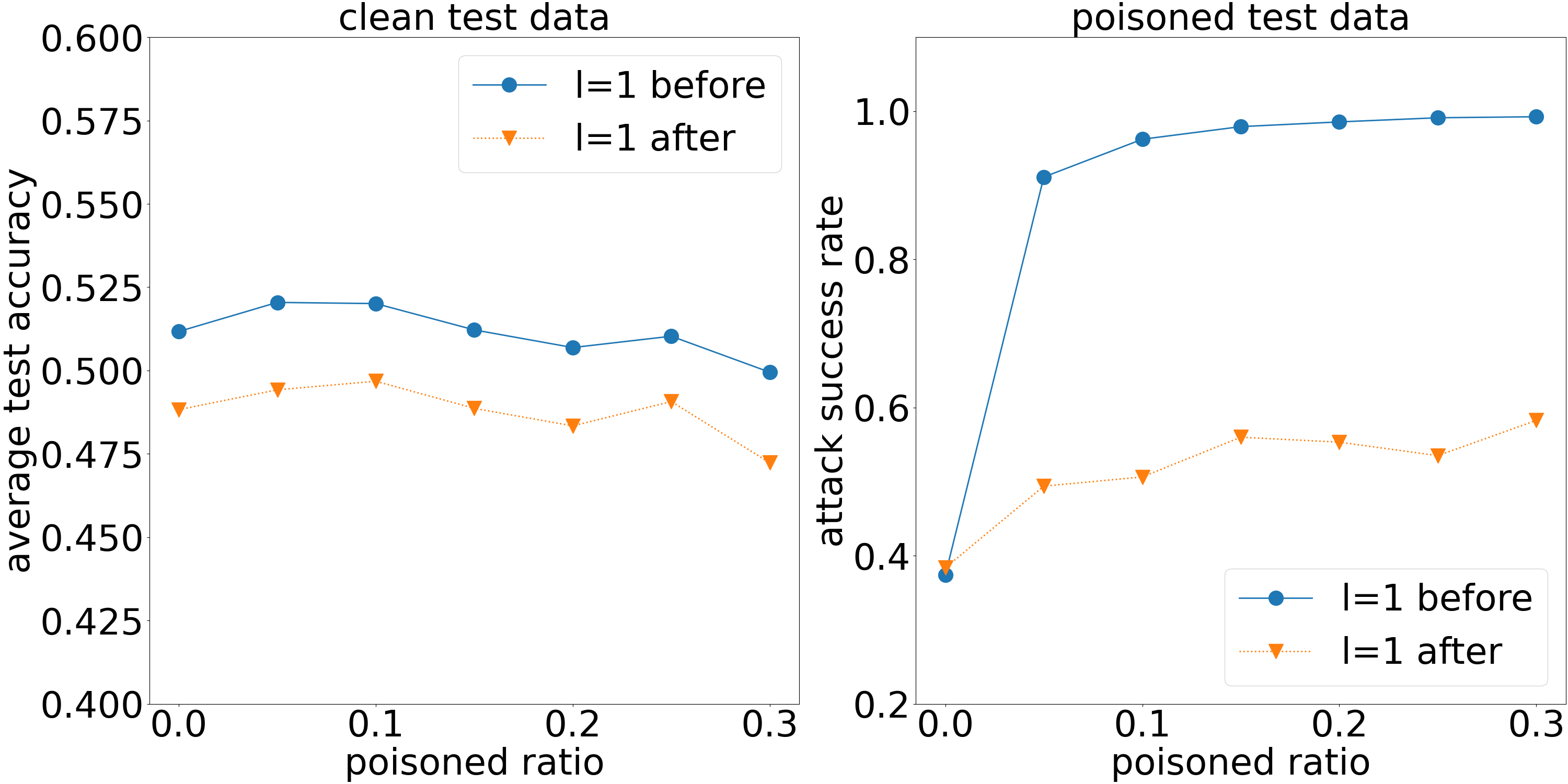}
  \end{subfigure}
  \begin{subfigure}{0.45\textwidth}
             \includegraphics[trim=0 0 0 0,clip,width=1\textwidth]{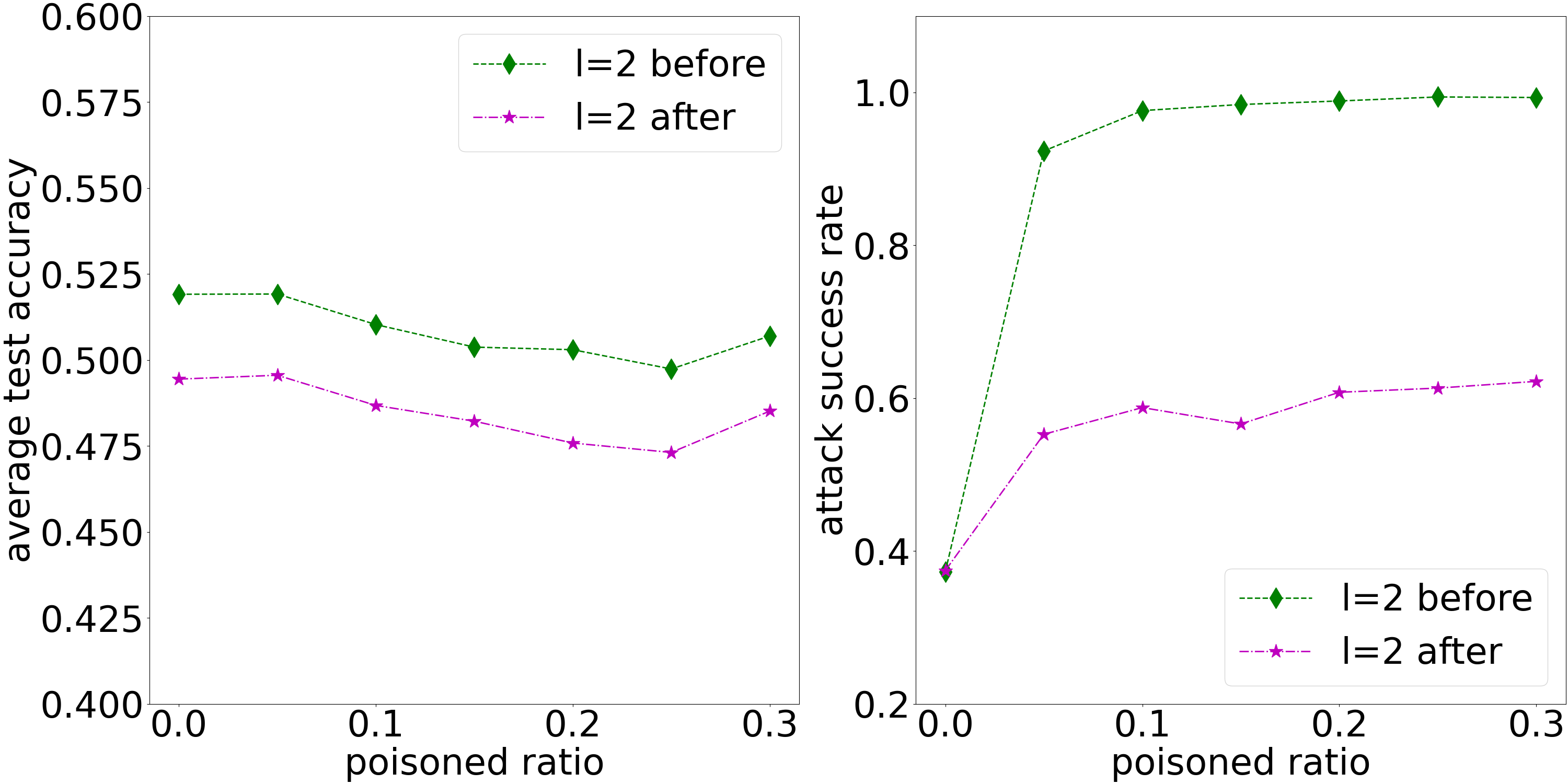}
  \end{subfigure}
    \begin{subfigure}{0.45\textwidth}
             \includegraphics[trim=0 0 0 0,clip,width=1\textwidth]{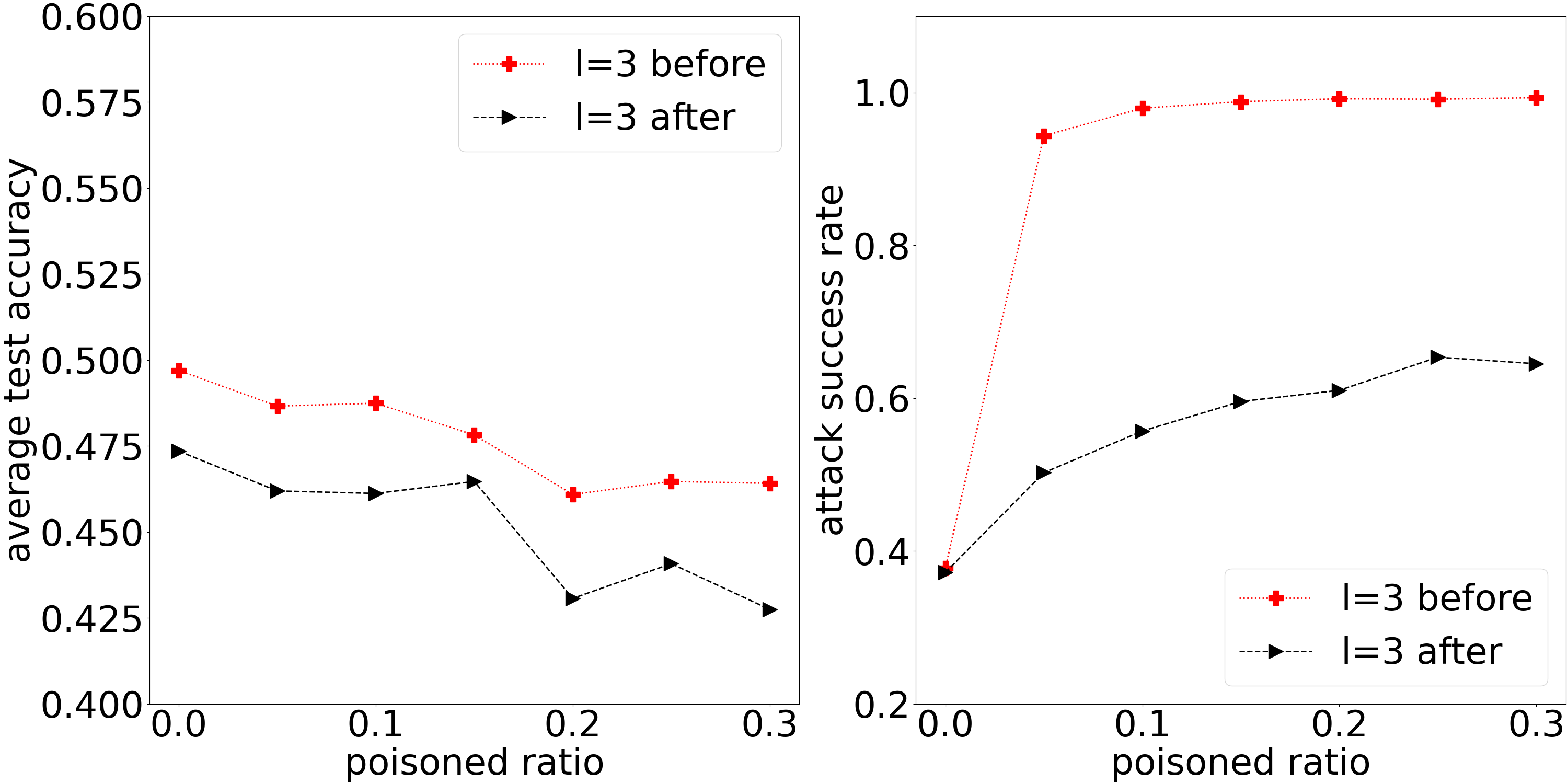} 
  \end{subfigure}
  \caption{{\bf Even with multiple hidden layers, CNN purification can mitigate the poisoning effect with some compromise on average test accuracy($\textit{training batch size} = 99$ , $\textit{repair data size = 69}$) on CIFAR}. Here CNN is purified by non-training instances.} 
  \label{fig: CIFAR_multilayer_random}  
\end{figure}


\section{Experiment}\label{sec: experiments}
In this section, we conduct experiments on synthetic data and real data (MNIST \cite{Lecun1998gradient}, CIFAR-10 \cite{cifar10dataset}) to demonstrate the effectiveness of our proposed CNN purification method and evaluate the alignments of the results with our theoretical analysis. Furthermore, we apply the proposed method to mitigate poisoning attacks from the poisoned CNNs. The error is measured by the average $\ell_2$ norm. All the experimental results of synthetic data are averaged over 100 trials. All the experimental results of MNIST and CIFAR-10 data are averaged over 10 trials.

\subsection{Experiments on synthetic data} 
The synthetic data are generated by $x_s \sim \mathcal{N}(0,I_d)$. The noises $[\mathbf{z}_{W_j}]_i$, ($[\mathbf{z}_{\beta}]_i$) are generated from $\mathcal{N}(1,1)$. We first evaluate $p$ and $k$ by fixing the number of data points $n = 5$ and the number of partitions $m =5$. Experiments in Figure~\ref{fig: syn_p_k_main1}  and Figure~\ref{fig: syn_p_k_main2} are conducted under two different CNN training regimes (see settings in Theorem 2 and Theorem 3), respective. Experiments are conducted under different $p$ with $k=50, 100, 150$. When $\epsilon$ is small, e.g., $\epsilon < 0.2$, the recovery of both $W$ and $\beta$ are more likely to be successful. In each figure, one can see that increasing $p$ further increases the limit of $\epsilon$ for successful recovery of $\beta$. The phenomenon is consistent with Theorem~\ref{thm: main1} (Theorem~\ref{thm: main2}) as we require $\frac{(mn)^3log(p)}{p}$ to be small. Across all columns of the two figures, an obvious observation is that the limit of $\epsilon$ for successful recovery of $W$ increases when $k$ increases. In our theorems, successful recovery needs $\frac{log (p)}{k}$ and $\frac{mnlog (mn)}{k}$ to be sufficiently small. 

Then we evaluate $m$ by fixing the number of data points $n=5$, the number of first-layer neurons $p=500$, and each partition dimension $k=200$. Figure~\ref{fig: syn_m} shows results of recovery errors under different $m$. One can see that the limit of $\epsilon$ for successful recovery of $W$ and $\beta$ increases when $m$ decreases. The phenomenon is consistent with Theorem~\ref{thm: main1} as we require $\frac{mnlog (mn)}{k}, \frac{(mn)^3log(p)}{p}$ and $\epsilon\sqrt{mn}$ to be sufficiently small.


\subsection{Experiments on MNIST}
The MNIST data are randomly selected from MNIST training dataset, which is a widely used benchmark dataset in machine learning. The selected data come from three classes. The noise $[\mathbf{z}_{W_j}]_i$, ($[\mathbf{z}_{\beta}]_i$) are generated from $\mathcal{N}(1,1)$. First, we evaluate $p$ by fixing the number of data points $n=99$. Figure~\ref{fig: MNIST_p} shows results of recovery errors under different $p,m,k$. In the figure, one can see that increasing $p$ also increases the limit of $\epsilon$ for successful recovery of $\beta$. The phenomenon is similar to that shown in the synthetic data experiment and the same reason applies here.

We next evaluate the number of data points $n$ used in recovery by fixing the training data size to $99$. Figure~\ref{fig: MNIST_part_random} (a) shows the results of recovery errors by $n$ data points selected from the training batch. Figure~\ref{fig: MNIST_part_random} (b) shows the results of recovery errors by $n$ data points selected outside of the training batch. One can see that our CNN purification method can achieve good performance even when recovering with a small number of clean data points and potentially not from the training data. In practice, we can find a small amount of data from other resources, and the purification will not be affected. The recovery performance improves when $n$ decreases. The phenomenon is consistent with Theorem~\ref{thm: main1} as we require $\frac{(mn)^3log(p)}{p}$, $\frac{mnlog (mn)}{k}$ and $\epsilon\sqrt{mn}$ to be small. Decreasing $n$ can help reduce the magnitudes of these values.


\subsection{Experiments on CIFAR-10}
The CIFAR-10 dataset is widely recognized as a critical resource in the fields of machine learning and computer vision research. For the purposes of our experiments, we specifically selected data from three of the dataset's classes to examine the effectiveness of our recovery methods under varying conditions. In these experiments, the noises $[\mathbf{z}_{W_j}]_i$, ($[\mathbf{z}_{\beta}]_i$) are synthetically generated following a Gaussian distribution $\mathcal{N}(1,1)$. We evaluate the number of data points $n$ used in recovery by fixing the training data size to 99. 

Our analysis of the recovery process was documented in two parts, as illustrated in Figure~\ref{fig: CIFAR_part_random}. Figure~\ref{fig: CIFAR_part_random} (a) presents the recovery errors when we used $n$ data points directly selected from the training batch. Figure~\ref{fig: CIFAR_part_random} (b) contrasts these results with recovery errors observed when $n$ data points were chosen from outside the training batch, providing a broader view of the potential sources for recovery data. The results from these experiments indicate that our CNN purification method is capable of achieving commendable performance even when the recovery is conducted with a relatively small number of clean data points, which need not necessarily originate from the training dataset. The recovery performance improves when $n$ decreases. The phenomenon is consistent with Theorem~\ref{thm: main1}. This phenomenon highlights the robustness and efficiency of our purification method, demonstrating its effectiveness even under constrained data scenarios and thereby underscoring its potential applicability in various practical settings where data availability may be limited. 

\subsection{Poisoning attack mitigation}
Here we apply our method on poisoning attack mitigation. We consider the backdoor attack, which is the most harmful attack category in poisoning attacks \cite{gu2019badnets,wang2020practical}. A backdoor attack on CNNs aims at compromising a machine learning model's reliability by implanting a malicious trigger during its training phase. This trigger, typically imperceptible to humans, can cause the model to output incorrect or manipulated responses when activated by specific inputs. Attackers carry out a backdoor attack by tainting the training data with a distinct pattern or feature along with a designated target label. As the model learns, it begins to link this pattern to the target label, thereby incorporating the backdoor into its functioning. Once deployed, the attacker can trigger this backdoor by embedding the specific pattern in new inputs, manipulating the model's output as intended. Furthermore, attackers can also alter CNN parameters directly to introduce backdoors, turning otherwise normal models into compromised versions with additional, harmful parameters. 
Existing poisoning mitigation defenses are mainly based on detection \cite{wang2020practical,pal2024backdoor} and fine-tuning \cite{pal2023towards,zhu2021clear}. To the best of our knowledge, no work has considered directly removing the poisoning effect from the model parameter. Our method only requires a small amount of benign data without label information.

We use the same data selected from the MNIST dataset. The training loss is set to softmax cross entropy loss. The source of noise is poisoned input generated from the poisoning attack utilized. 
When the first five pixels of input images are set to black, the outputs of the CNN model will always be class zero (digit 0). Different from the definition of $\epsilon$, we define it as the ratio of poisoned training data. We evaluate the performance of the proposed method with respect to $\epsilon$ by fixing the training batch size to $99$. Figure~\ref{fig: MNIST_backdoor} shows results of test accuracy and attack success rate with respect to poisoned ratio under different $n$. The attack success rate is the percentage of test data that has been successively attacked. One can see that CNN purification can maintain high average test accuracy and mitigate the poisoning effect even with a small number of clean data points. The target classes consist of three categories: digits zero, one, and two. The CNN model trained achieved an average test accuracy of 95 percentage, while the attack success rate was approximately equivalent to random guessing, at 33 percentage. In Figure \ref{fig: MNIST_backdoor} panel (a) left column, one can see that the model's average test accuracy remains relatively stable when purified by all 99 training instances. Even when purified by only 10 percentage of the training data, the model's average test accuracy only drops by approximately 5 percentage. In Figure \ref{fig: MNIST_backdoor} panel (a) right column, the attack success rate after the recovery remains in the random guessing level within the range of 0 to 30 percentage poisoned ratio, even with only 9 data points. In contrast, the attack success rate before the recovery continuously increases within the range of 0 to 30 percentage poisoned ratio and finally reaches 100 percentage when the poisoned ratio gets to 30 percentage. Figure \ref{fig: MNIST_backdoor} panel (b) considers the scenario where the repair data is sourced from non-training resources. Here we pick the repair data from the same three classes but are not used in training. We observe a similar results as panel (a), indicating that the model can be repaired using data out of training data, as long as they come from the same distribution. In practice, users may not have training data. They can still purify CNN models using data they collected from other resources.

We also apply our method on the CIFAR dataset to mitigate backdoor attacks, presenting a tougher challenge compared to MNIST due to CIFAR's complexity. Unlike MNIST's simple images, CIFAR's 3-channel RGB images make hidden backdoor patterns within the first 5 pixels less noticeable, blending easily with the image's natural colors and textures. We maintain the same experimental setup as with MNIST except that we convert CIFAR images from RGB to grayscale and focus on three target classes: airplane, automobile, and bird. Figure \ref{fig: CIFAR_backdoor} shows results of test accuracy and attack success rate with respect to poisoned ratio under different $n$. One can see that CNN purification significantly reduces the impact of poisoning with only a minor trade-off in accuracy. In Figure \ref{fig: CIFAR_backdoor} panel (a) left column, one can see that the model's average test accuracy remains relatively stable when purified by all 99 training instances, Even when purified by only 10 percentage of the training data, the model's average test accuracy only drops by approximately 2 percentage. Compared with MNIST experiments, CIFAR dataset's complexity leads to a lower average test accuracy. The effect of CNN purification on average test accraucy mirrors the trend seen in MNIST experiments. In Figure \ref{fig: CIFAR_backdoor} panel (b) right column shows that the attack success rate increases steadily up to $100\%$ as the poisoned ratio reaches $30\%$. After recovery using our CNN purification method, the attack success rate drops sharply, even though the effect is not as strong as for MNIST due to CIFAR's complexity. Nonetheless, using just one third of the trained data for purification still reduces the attack success rate by $20\%$ post-recovery. Figure \ref{fig: CIFAR_backdoor} panel (b) considers the scenario where the repair data is sourced from non-training resources. We observe a similar results as panel (a), which indicates that model can be reparied by using data out of training data, as long as they come from the same distribution.

\subsection{Multi-layer Case}
Our study initially focuses on the theoretical aspects of CNN purification for models with a single hidden layer, but we have expanded our experimental framework to include multi-layer architectures while maintaining consistent backdoor attack scenarios as used in the single-layer experiments. Our exploration begins with an examination of the effects of our purification method on two-hidden-layer CNNs, using the relatively simpler MNIST dataset as a starting point.


In these experiments, we set the training batch size at $99$ and the size of the repair data at $36$, aiming to understand how the number of hidden layers $l$ influences both the test accuracy and the success rate of backdoor attacks. `Before' and `After' represents `before CNN purification' and `after CNN purification'. The results, depicted in the left column of Figure~\ref{fig: MNIST_multilayer_part}, reveal that the model's average test accuracy tends to decrease marginally with the addition of more hidden layers, which could be indicative of an overfitting problem. Interestingly, while the purification process results in slightly lower test accuracies as more layers are added, the decline in accuracy is relatively minor even at lower poisoning ratios. Conversely, the right column of the same figure demonstrates a notable reduction in the attack success rate post-recovery, with this effect being pronounced for poisoned ratios up to 30\%. Another set of results from Figure~\ref{fig: MNIST_multilayer_random} examines scenarios where the repair data is derived from sources outside the training set. These findings are consistent with those from Figure~\ref{fig: MNIST_multilayer_part}, suggesting that the model can be effectively repaired using data out of training data, provided it is drawn from a similar distribution.


Building on these insights, we extend our investigation to the more complex CIFAR-10 dataset, implementing experiments on CNNs with up to three hidden layers. Here, we maintain the same sizes for training batches and repair data as in the MNIST tests, at $99$ and $69$ respectively. Our analysis, shown in Figure~\ref{fig: CIFAR_multilayer_part}, focuses on the impact of CNN purification using data from training batches. Additionally, Figure~\ref{fig: CIFAR_multilayer_random} presents outcomes using repair data sourced from non-training environments, and these results echo the trends observed in the MNIST experiments. These CIFAR experiments further validate the patterns noted earlier, reinforcing the effectiveness of our proposed purification method across different datasets and more complex CNN architectures.

Overall, these extensive experimental results underscore the capability of our proposed method to mitigate the effects of model contamination in CNNs, extending beyond single-layer configurations to include networks with multiple hidden layers. This broad applicability highlights the potential of our approach to enhance the robustness of CNNs against sophisticated backdoor attacks, ensuring their reliability across a range of challenging real-world applications.



\section{Conclusion}\label{sec: conclusion}
CNNs are susceptible to various types of noise and attacks in applications. To address these challenges, this study introduces a robust recovery technique designed to eliminate noise from potentially compromised CNNs, offering an exact recovery guarantee for single-hidden-layer non-overlapping CNNs using the rectified linear unit (ReLU) activation function. Our theoretical findings indicate that CNNs can be precisely recovered under the overparameterization setting, given certain mild assumptions. We have successfully validated our method on both synthetic data, the MNIST dataset, and CIFAR-10 dataset. Additionally, we adapt the method to address backdoor attack elimination, demonstrating its potential as a defense mechanism against malicious model poisoning. We remark that this work mainly focuses on theoretical CNN purification. Our future directions are (1) Extending CNN purification to larger models and larger datasets (2) Improving the proposed method to eliminate various poisoning attacks. 




\bibliographystyle{ieeetr}
\bibliography{ref,ref_backdoor}

\newpage
\twocolumn
\pagestyle{empty}
\appendix[Proof]


\subsection{Additional lemmas of Lemma 1 }

We first prove that lemma 1 
holds for hidden layer design matrix $A_W$. Consider a linear model with contaminated parameter $\Theta_j=A_W u^*+z = {W_j} + z\in \mathbb{R}^k$, where $u^* \in \mathbb{R}^{mn}$. To robustly recover $W$ from polluted parameter $\Theta$, we propose the estimator
$$
\widetilde{u}=\underset{u \in \mathbb{R}^{mn}}{\operatorname{argmin}}\|\Theta_j -A_W u\|_1 .
$$
and define the purified model parameter as $ \widetilde{W_j} = A_W\widetilde{u}$.

 There exists some $\sigma^2$, $\underline{\lambda}$ and $\bar{\lambda}$ , such that for any fixed (not random) $c_1,...,c_k$ satisfying $\max_s|c_i|\leq 1$
 
\begin{align}
\Vert \frac{1}{k}\sum_{i=1}^k c_ia_i \Vert^2 \leq \frac{\sigma^2 mn}{k}\label{condition 1}\\
\underset{||\Delta||=1}{inf}\frac{1}{k}\sum_{i=1}^k|a_i^T\Delta|\geq \underline{\lambda}\label{condition 2}\\
\underset{||\Delta||=1}{sup}\frac{1}{k}\sum_{i=1}^k|a_i^T\Delta|^2\leq \bar{\lambda}^2 \label{condition 3}
\end{align}
with high probability.

With the above problem formulation, we can prove the following important lemma.

\begin{lemma}\label{4.1}
    Assume the design matrix $A_W$ satisfies equations \ref{condition 1}, \ref{condition 2} and \ref{condition 3}. Then if
$$\frac{\bar{\lambda}\sqrt{\frac{mn}{k}\log(\frac{ek}{mn})}+\epsilon \sigma \sqrt{\frac{mn}{k}}}{\underline{\lambda}(1-\epsilon)}$$

is sufficiently small, we have $\hat{u} = u^*$ with high probability.
\end{lemma}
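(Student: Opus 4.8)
The plan is to show that the error vector $\Delta := \hat u - u^*$ must vanish. Write $S = \{i : z_i \neq 0\}$ for the (random) set of corrupted coordinates, so that $|S| \approx \epsilon k$ and $z_i = 0$ on $S^c$; throughout I condition on the event that $A_W$ satisfies \eqref{condition 1}--\eqref{condition 3}, leaving only the corruption pattern and signs as randomness. Since all three conditions are positively homogeneous of degree one in $\Delta$, it suffices to rule out every $\Delta$ with $\|\Delta\| = 1$.

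First I would extract a \emph{basic inequality} from optimality. Because $\hat u$ minimizes $\|\Theta_j - A_W u\|_1$ and $\Theta_j - A_W u^* = z$, we have $\|z - A_W\Delta\|_1 \le \|z\|_1$. Applying the subgradient inequality for $|\cdot|$ at each $z_i$ (the subgradient is $\mathrm{sign}(z_i)$ on $S$, while on $S^c$ the contribution is exactly $|a_i^T\Delta|$ since $z_i=0$) yields
\begin{align*}
\sum_{i \in S^c} |a_i^T\Delta| \;\le\; \Big\langle \sum_{i\in S}\mathrm{sign}(z_i)\, a_i,\ \Delta\Big\rangle \;\le\; \Big\|\sum_{i\in S}\mathrm{sign}(z_i)\, a_i\Big\|\,\|\Delta\|.
\end{align*}
The right-hand side is a fixed linear functional whose size is governed by \eqref{condition 1}: taking $c_i = \mathrm{sign}(z_i)\mathbf{1}_{i\in S}$ (so $\max_i|c_i|\le 1$) and exploiting that these coefficients are supported on only an $\epsilon$-fraction of coordinates, I expect to obtain $\tfrac1k\|\sum_{i\in S}\mathrm{sign}(z_i)a_i\| \lesssim \epsilon\,\sigma\sqrt{mn/k}$ with high probability, which is exactly the corruption term in the statement.

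The heart of the argument is a matching \emph{uniform lower bound} on the clean part $\tfrac1k\sum_{i\in S^c}|a_i^T\Delta|$ over the sphere. Conditionally on the design, $\mathbb{E}\big[\tfrac1k\sum_{i\in S^c}|a_i^T\Delta|\big] = (1-\epsilon)\tfrac1k\sum_{i=1}^k|a_i^T\Delta| \ge (1-\epsilon)\underline{\lambda}$ by \eqref{condition 2}, which is the source of the $(1-\epsilon)\underline{\lambda}$ factor in the denominator. I would then control the uniform deviation $\sup_{\|\Delta\|=1}\big|\tfrac1k\sum_{i\in S^c}|a_i^T\Delta| - (1-\epsilon)\tfrac1k\sum_i|a_i^T\Delta|\big|$ by a covering argument: the map $\Delta\mapsto \tfrac1k\sum_i |a_i^T\Delta|$ is $\bar{\lambda}$-Lipschitz by \eqref{condition 3}, so I place a $\delta$-net on $S^{mn-1}$ of cardinality $(C/\delta)^{mn}$, apply a Bernstein/bounded-differences bound with a union bound at each net point, and extend to the full sphere by Lipschitzness. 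Optimizing $\delta$ at order $mn/k$ gives deviation $\lesssim \bar{\lambda}\sqrt{\tfrac{mn}{k}\log\tfrac{ek}{mn}}$, matching the statement; \eqref{condition 3} is precisely what keeps both the variance proxy and the Lipschitz constant at scale $\bar{\lambda}$.

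Combining the three estimates, for every unit $\Delta$,
\begin{align*}
\Big[(1-\epsilon)\underline{\lambda} - C_1\bar{\lambda}\sqrt{\tfrac{mn}{k}\log\tfrac{ek}{mn}}\Big] \;\le\; \tfrac1k\sum_{i\in S^c}|a_i^T\Delta| \;\le\; C_2\,\epsilon\sigma\sqrt{mn/k}.
\end{align*}
When the displayed ratio in the hypothesis is small enough, the left bracket strictly exceeds the right-hand side, a contradiction; hence no unit $\Delta$ can satisfy the basic inequality and $\hat u = u^*$. I expect the principal obstacle to be the uniform deviation step: extracting the sharp $\log(ek/mn)$ factor (rather than a crude $\log k$) requires the optimized net scale together with a truncation to secure the sub-exponential tail that Bernstein needs, for which the boundedness in \eqref{condition 3} is essential. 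The other delicate point is obtaining the sharp $\epsilon$ (not $\sqrt{\epsilon}$) dependence of the corruption term, which is where the support structure of $z$ must be combined carefully with \eqref{condition 1}.
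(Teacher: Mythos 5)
Your overall architecture is genuinely different from the paper's: you argue through the first-order optimality (``basic'') inequality and a decomposition over the realized corruption support $S$, whereas the paper works with the excess-loss process $L_k(u)=\tfrac1k\sum_i(|a_i^T(u^*-u)+z_i|-|z_i|)$, passes to its conditional expectation $L(u)$ over the noise, and separately controls the uniform deviation $\sup|L_k-L|$. Both are standard routes to exact recovery, and your reduction to unit $\Delta$ by homogeneity, your lower bound on the clean part via $(1-\epsilon)\underline{\lambda}$ plus a net/Bernstein deviation of order $\bar{\lambda}\sqrt{\tfrac{mn}{k}\log\tfrac{ek}{mn}}$, and your contradiction step all match the paper's conclusions. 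However, there is a genuine gap in how you obtain the corruption term $\epsilon\sigma\sqrt{mn/k}$. You propose to bound $\tfrac1k\|\sum_{i\in S}\mathrm{sign}(z_i)a_i\|$ by applying \eqref{condition 1} with $c_i=\mathrm{sign}(z_i)\mathbf{1}_{i\in S}$ and ``exploiting'' that only an $\epsilon$-fraction of the $c_i$ are nonzero. Condition \eqref{condition 1} gives no such sparsity gain: for any fixed $c$ with $\max_i|c_i|\le 1$ it yields only $\sigma\sqrt{mn/k}$, with no $\epsilon$. Worse, the gain you hope for is not available even in principle for the realized random sign vector: for a generic isotropic design, $\|\sum_{i\in S}\pm a_i\|\asymp\sqrt{|S|\cdot mn}=\sqrt{\epsilon k\cdot mn}$, so $\tfrac1k\|\sum_{i\in S}\mathrm{sign}(z_i)a_i\|\asymp\sqrt{\epsilon}\,\sigma\sqrt{mn/k}$ --- a $\sqrt{\epsilon}$ dependence, which is strictly weaker than the $\epsilon$ in the lemma and would not suffice for the stated conclusion. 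You flag this as the delicate point but do not supply the idea that resolves it.

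The missing idea is to take the expectation over the noise \emph{before} invoking \eqref{condition 1}. In the paper, the mixture $z_i\sim(1-\epsilon)\delta_0+\epsilon Q_i$ makes the population loss split as
\begin{align*}
L(u)=(1-\epsilon)\frac{1}{k}\sum_{i=1}^{k}|a_i^T(u-u^*)|+\epsilon\frac{1}{k}\sum_{i=1}^{k}f_i\bigl(a_i^T(u^*-u)\bigr),
\end{align*}
with $f_i(x)=\mathbb{E}_{z_i\sim Q_i}(|x+z_i|-|z_i|)$; convexity of $f_i$ and $f_i'(0)=1-2Q_i(0)$ reduce the second term to $\epsilon\,\tfrac1k\sum_i(1-2Q_i(0))\,a_i^T(u^*-u)$, so \eqref{condition 1} is applied with the \emph{deterministic} coefficients $c_i=1-2Q_i(0)$ over all $k$ coordinates, and the factor $\epsilon$ enters as a mixture weight rather than through sparsity of a random support. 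The fluctuation of the realized signs around this mean is then swept into the single uniform deviation bound $\sup_{\|u-u^*\|=t}|L_k(u)-L(u)|\lesssim t\bar{\lambda}\sqrt{\tfrac{mn}{k}\log\tfrac{ek}{mn}}$, which carries no $\epsilon$ and is already accounted for in the hypothesis. To repair your argument you would need to center $\sum_{i\in S}\mathrm{sign}(z_i)a_i$ at its mean $\sum_i\epsilon(1-2Q_i(0))a_i$ (to which \eqref{condition 1} applies and which produces the $\epsilon$ factor) and absorb the centered remainder into your uniform deviation term; at that point you have essentially reconstructed the paper's proof.
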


This lemma reveals more precise conditions for successful recovery. At the same time, the verification of equations \ref{condition 1}, \ref{condition 2} and \ref{condition 3} is not easy. So for design matrix $A_W$, we give the following lemma to simplify this verification process.

\begin{lemma}\label{4.2}
Assume $mn/k$ is sufficiently small. Then, equations \ref{condition 1}, \ref{condition 2} and \ref{condition 3} hold for $A_W $ with some constants $\sigma ^2, \bar{\lambda}$ and $ \underline{\lambda}$.
\end{lemma}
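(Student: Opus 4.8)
The plan is to exploit the very special structure of $A_W$: since the patches $\{P_i\}$ are non-overlapping and each $x_s \sim \mathcal{N}(0,I_d)$, the $mn$ columns $\{P_i x_s\}_{i\in[m],s\in[n]}$ of $A_W$ are jointly Gaussian and mutually independent. Indeed, $P_i x_s$ selects a block of $k$ distinct coordinates of $x_s$; distinct patches ($i\neq i'$) use disjoint coordinate blocks and distinct samples ($s\neq s'$) are independent, so every entry of $A_W$ is an independent $\mathcal{N}(0,1)$ variable. Equivalently, the $k$ rows $a_1,\dots,a_k\in\mathbb{R}^{mn}$ are i.i.d.\ $\mathcal{N}(0,I_{mn})$. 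Once this reduction is made, each of \ref{condition 1}, \ref{condition 2}, \ref{condition 3} becomes a standard concentration statement for a $k\times mn$ Gaussian matrix in the regime where $mn/k$ is small.

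For \ref{condition 1}, note $\frac{1}{k}\sum_{i=1}^k c_i a_i=\frac{1}{k}A_W^T c$, and each coordinate $(A_W^T c)_j$ is the inner product of the $j$-th (independent) Gaussian column with $c$, so $(A_W^T c)_j\sim\mathcal{N}(0,\|c\|^2)$ independently across $j$, with $\|c\|^2\le k$ because $\|c\|_\infty\le 1$. Thus $\|A_W^T c\|^2$ is a scaling of a $\chi^2_{mn}$ variable with mean at most $mnk$, and a standard chi-squared tail bound gives $\|A_W^T c\|^2\lesssim mnk$ with high probability, i.e.\ $\|\frac{1}{k}A_W^T c\|^2\lesssim mn/k$, which is \ref{condition 1} with $\sigma^2$ an absolute constant (the statement is for each fixed $c$, so no union bound over $c$ is needed). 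For \ref{condition 3}, observe $\frac{1}{k}\sum_{i=1}^k|a_i^T\Delta|^2=\Delta^T(\frac{1}{k}A_W^T A_W)\Delta$, so the supremum over $\|\Delta\|=1$ equals $\lambda_{\max}(\frac{1}{k}A_W^T A_W)$. The non-asymptotic singular-value bound for Gaussian matrices (Davidson--Szarek / Vershynin), $\sigma_{\max}(A_W)\le\sqrt{k}+\sqrt{mn}+t$ with probability at least $1-2e^{-t^2/2}$, yields $\lambda_{\max}(\frac{1}{k}A_W^T A_W)\le(1+\sqrt{mn/k})^2+o(1)$, so $\bar\lambda^2$ is a constant close to $1$ when $mn/k$ is small.

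The main obstacle is \ref{condition 2}, which is a uniform lower bound (a small-ball or $\ell_1$ lower-isometry statement) and is genuinely harder than the two upper bounds above. I would proceed by a net argument. For a \emph{fixed} $\Delta$ with $\|\Delta\|=1$, the quantities $a_i^T\Delta$ are i.i.d.\ $\mathcal{N}(0,1)$, so $\frac{1}{k}\sum_{i=1}^k|a_i^T\Delta|$ is an average of $k$ i.i.d.\ copies of $|\mathcal{N}(0,1)|$ concentrating around $\mathbb{E}|\mathcal{N}(0,1)|=\sqrt{2/\pi}$; since $|a_i^T\Delta|$ is sub-Gaussian, the average deviates from its mean by at most $C\sqrt{mn/k}$ except with probability $e^{-\Omega(mn)}$. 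I would then take an $\epsilon$-net of the unit sphere $S^{mn-1}$ of cardinality $(3/\epsilon)^{mn}$, union-bound this deviation over the net (the $e^{-\Omega(mn)}$ failure probability absorbs the $(3/\epsilon)^{mn}$ net size precisely because $mn/k$ is small), and pass from the net to the whole sphere using the Lipschitz estimate $\big|\frac{1}{k}\sum_i|a_i^T\Delta|-\frac{1}{k}\sum_i|a_i^T\Delta'|\big|\le\frac{1}{k}\sum_i|a_i^T(\Delta-\Delta')|\le\bar\lambda\|\Delta-\Delta'\|$, where the last step is Cauchy--Schwarz together with \ref{condition 3}. This gives $\inf_{\|\Delta\|=1}\frac{1}{k}\sum_i|a_i^T\Delta|\ge\sqrt{2/\pi}-C\sqrt{mn/k}$, which is bounded below by a positive constant $\underline{\lambda}$ whenever $mn/k$ is sufficiently small. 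Alternatively, one could invoke Mendelson's small-ball method or Gordon's minimax inequality to obtain \ref{condition 2} directly; either route delivers the claimed constants $\sigma^2,\bar\lambda,\underline{\lambda}$, completing the verification for $A_W$.
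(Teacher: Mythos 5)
Your proposal is correct and follows essentially the same route as the paper's proof: condition \eqref{condition 1} by a direct Gaussian/chi-squared computation, condition \eqref{condition 2} by a $1/3$-net on $S^{mn-1}$ together with concentration of $\frac{1}{k}\sum_i|a_i^T\Delta|$ around $\sqrt{2/\pi}$ and a union bound absorbed by the $e^{-\Omega(mn)}$ tail, and condition \eqref{condition 3} via the Davidson--Szarek operator-norm bound. The only (immaterial) variations are that you make explicit the i.i.d.\ Gaussian structure of $A_W$ induced by the non-overlapping patches, use sub-Gaussian concentration of the average for fixed $\Delta$ where the paper invokes Gaussian Lipschitz concentration, and pass from the net to the sphere via Cauchy--Schwarz and \eqref{condition 3} rather than the paper's self-bounding rearrangement.
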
 


Since the gradient $\frac{\partial \mathcal{L}(\beta, W)}{\partial W_j}$ lies in the row space of $X$, the vector $\widehat{W}_j-W_j(0)$ also lies in the row space of $X$. Thus, the theoretical guarantee of the hidden layer purification directly follows Lemma \ref{4.3}.
\begin{lemma}\label{4.3}
 Assume $\frac{\sqrt{\frac{mn}{k}} \log \left(\frac{e k}{mn}\right)}{1-\varepsilon}$ is sufficiently small. We then have $W_j=\widetilde{W_j}$ with high probability.\\
\end{lemma}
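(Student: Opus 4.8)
The plan is to obtain Lemma~\ref{4.3} by chaining the two preceding lemmas: Lemma~\ref{4.2} supplies the three regularity conditions \eqref{condition 1}, \eqref{condition 2}, \eqref{condition 3} for the design matrix $A_W$, and Lemma~\ref{4.1} converts those conditions into exact recovery, provided a certain composite quantity is sufficiently small. The only genuine work is to verify that the clean smallness hypothesis stated in Lemma~\ref{4.3} implies the composite smallness hypothesis required by Lemma~\ref{4.1}.

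First I would recall the structural fact that makes exact recovery possible. Because every gradient step $\frac{\partial \mathcal L(\beta,W)}{\partial W_j}$ lies in the row space of $X$, the accumulated displacement $W_j-W_j(0)$ lies in the column space of $A_W$; hence there exists a coefficient vector $u^* \in \mathbb{R}^{mn}$ with $A_W u^* = W_j - W_j(0)$. Absorbing the initialization, this is precisely the model $\Theta_j = A_W u^* + z$ used in this subsection, in which contamination enters only through the sparse noise $z$ with $z_i \sim (1-\varepsilon)\delta_0+\varepsilon Q_i$. Exact recovery of $W_j$ is therefore equivalent to exact recovery of $u^*$ by the $\ell_1$ estimator $\widetilde u$. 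Under the standing assumption that $mn/k$ is small, Lemma~\ref{4.2} guarantees that \eqref{condition 1}, \eqref{condition 2} and \eqref{condition 3} hold for $A_W$ with absolute constants $\sigma^2,\bar\lambda,\underline\lambda$, each of order $O(1)$, so all hypotheses of Lemma~\ref{4.1} are in force.

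It then remains to check the smallness condition of Lemma~\ref{4.1}, namely that
\[
\frac{\bar\lambda\sqrt{\tfrac{mn}{k}\log(\tfrac{ek}{mn})} + \varepsilon\,\sigma\sqrt{\tfrac{mn}{k}}}{\underline\lambda(1-\varepsilon)}
\]
is sufficiently small. Since the constants are $O(1)$ they can be dropped, and since $mn/k$ small forces $k \gg mn$ we have $\log(\tfrac{ek}{mn})\ge 1$. This lets me bound the two numerator terms separately: $\sqrt{\tfrac{mn}{k}\log(\tfrac{ek}{mn})} = \sqrt{\tfrac{mn}{k}}\,\sqrt{\log(\tfrac{ek}{mn})} \le \sqrt{\tfrac{mn}{k}}\,\log(\tfrac{ek}{mn})$, and $\varepsilon\sqrt{\tfrac{mn}{k}} \le \sqrt{\tfrac{mn}{k}} \le \sqrt{\tfrac{mn}{k}}\,\log(\tfrac{ek}{mn})$. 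Consequently the whole expression is $\lesssim \frac{\sqrt{mn/k}\,\log(ek/mn)}{1-\varepsilon}$, which is exactly the quantity assumed small in Lemma~\ref{4.3}. Lemma~\ref{4.1} therefore applies and yields $\widetilde u = u^*$ with high probability, whence $\widetilde{W_j} = A_W\widetilde u = A_W u^* = W_j$.

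The step I expect to be most delicate is this translation of smallness hypotheses: showing that the single, cleaner quantity of Lemma~\ref{4.3} dominates both the $\bar\lambda$-weighted logarithmic term and the $\varepsilon\sigma$ noise term of Lemma~\ref{4.1}. This rests entirely on the $O(1)$ nature of the constants furnished by Lemma~\ref{4.2} and on the regime $k \gg mn$ forcing $\log(ek/mn)\ge 1$; if the constants were allowed to grow with $mn/k$ the implication would fail. Beyond that, I would only take the intersection of the two high-probability events---the event that the conditions of Lemma~\ref{4.2} hold and the event that $\ell_1$ recovery succeeds in Lemma~\ref{4.1}---to conclude $W_j=\widetilde{W_j}$ with high probability, as claimed.
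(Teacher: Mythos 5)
Your proposal is correct and follows essentially the same route as the paper: the paper's proof of Lemma~\ref{4.3} likewise notes that $W_j$ lies in the span of $A_W$ (so $W_j = A_W^T u^*$ for some $u^*$) and then chains Lemma~\ref{4.2} (which supplies conditions \eqref{condition 1}--\eqref{condition 3} with $O(1)$ constants) into Lemma~\ref{4.1} to conclude $\widetilde u = u^*$ and hence $\widetilde{W_j} = W_j$. Your explicit verification that the single smallness hypothesis of Lemma~\ref{4.3} dominates the composite quantity in Lemma~\ref{4.1} (using $\log(ek/mn)\ge 1$) is a detail the paper leaves implicit, but it is the same argument.
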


Then we prove that lemma 1 holds for output layer design matrix $A_{\beta}$. Consider a random feature model with contaminated parameter $\eta = A_{\beta} v^*+z = {\beta} + z\in \mathbb{R}^p$, where $v^* \in \mathbb{R}^{n}$. To robustly recover $\beta$ from polluted parameter $\eta$, we propose the estimator

$$
\widetilde{v}=\underset{v \in \mathbb{R}^{n}}{\operatorname{argmin}}\|\eta -A_{\beta} v\|_1 .
$$
and define the purified model parameter as $ \widetilde{\beta} = A_{\beta}\widetilde{v}$.

There exists some $\sigma^2$, $\underline{\lambda}$ and $\bar{\lambda}$ , such that for any fixed (not random) $c_1,...,c_k$ satisfying $\max_s|c_i|\leq 1$
 
\begin{align}
\Vert \frac{1}{p}\sum_{i=1}^p c_ia_i \Vert^2 \leq \frac{\sigma^2 n}{p}\label{condition 1_}\\
\underset{||\Delta||=1}{inf}\frac{1}{p}\sum_{i=1}^p|a_i^T\Delta|\geq \underline{\lambda}\label{condition 2_}\\
\underset{||\Delta||=1}{sup}\frac{1}{p}\sum_{i=1}^p|a_i^T\Delta|^2\leq \bar{\lambda}^2 \label{condition 3_}
\end{align}
with high probability.

 The following lemma \ref{lemma a.6} is a stronger result of \ref{condition 3_}, which is used to prove lemma \ref{4.4}.

\begin{lemma}\label{lemma a.6}
We define the matrices $G,\overline{G} \in \mathbb{R}^{n\times n}$ by

$$G_{sl}=\frac{1}{p}\sum_{j=1}^p(\sum_{a=1}^m\psi(W_{j}^{T}P_{a}x_{s}))(\sum_{b=1}^m\psi(W_{j}^{T}P_{b}x_{l}))$$

and

$$\overline{G}_{sl}= \begin{cases}

\frac{1}{2},\quad & s=l       
\\
\frac{1}{2\pi}+\frac{1}{4}\frac{\overline{Px}_{s}^{T}\overline{Px}_{l}}{k}+\frac{1}{2\pi}\left(\frac{\Vert Px_{s} \Vert}{\sqrt{k}}-1+\frac{\Vert Px_{l} \Vert}{\sqrt{k}}-1\right),\quad & s\ne l

\end{cases} 
$$

where $Px_{s} = \sum_{a=1}^m P_ax_s$ and $Px_{l} = \sum_{b=1}^m P_bx_l$
    
Assume  $\frac{k}{log mn}$  is sufficiently large, and then

$$\Vert G - \bar{G} \Vert^{2}_{op} \lesssim \frac{m^2n^2}{p}+\frac{lognm}{k}+\frac{n^2}{k^2}$$
with high probability. And we also have $\Vert G  \Vert_{op} \lesssim m^2n^2$ with high probability.
\end{lemma}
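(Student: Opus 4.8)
The plan is to view $G$ as an empirical second-moment matrix of the $p$ random features and to analyze it by first conditioning on the data $X$. Writing $\mathbf g_j\in\mathbb R^n$ for the random vector with entries $(\mathbf g_j)_s=\sum_{a=1}^m\psi(W_j^TP_ax_s)$, we have $G=\frac1p\sum_{j=1}^p\mathbf g_j\mathbf g_j^T$, a sum of i.i.d.\ rank-one PSD matrices once $X$ is fixed. I would therefore split
\[
\|G-\overline G\|_{\mathrm{op}}\le\big\|G-\mathbb E_W[G\mid X]\big\|_{\mathrm{op}}+\big\|\mathbb E_W[G\mid X]-\overline G\big\|_{\mathrm{op}},
\]
where $\mathbb E_W[\cdot\mid X]$ denotes the expectation over the Gaussian weights with $X$ frozen. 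The first term will produce the $\frac{m^2n^2}{p}$ contribution (fluctuation of the $p$ kernels), while the second will produce the $\frac{\log nm}{k}+\frac{n^2}{k^2}$ contribution (fluctuation of the patch geometry of $X$).

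For the first term I would apply a matrix concentration bound for sums of independent rank-one matrices (matrix Bernstein, or a truncated Rudelson-type inequality). The required inputs are a uniform bound $\max_j\|\mathbf g_j\|^2\lesssim m^2n$ and $\|\mathbb E_W[G\mid X]\|_{\mathrm{op}}\le\mathrm{tr}\,\mathbb E_W[G\mid X]\lesssim m^2n$, both of which follow from Gaussian concentration of each $W_j^TP_ax_s$ together with $\|P_ax_s\|^2\asymp k$ and a union bound over the $pnm$ relevant inner products. This yields $\|G-\mathbb E_W[G\mid X]\|_{\mathrm{op}}\lesssim \frac{mn}{\sqrt p}$ up to logarithmic and lower-order factors, i.e.\ the squared bound $\frac{m^2n^2}{p}$.

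For the second term I would evaluate the conditional expectation in closed form via the arc-cosine (ReLU) kernel identity: for $w\sim\mathcal N(0,I_k)$ and fixed $u,v$, $\mathbb E[\psi(w^Tu)\psi(w^Tv)]=\frac{\|u\|\,\|v\|}{2\pi}\big(\sin\theta+(\pi-\theta)\cos\theta\big)$ with $\theta=\angle(u,v)$; applying it to $u=P_ax_s$, $v=P_bx_l$ (and inserting the $\tfrac1k$ factor from $W_j\sim\mathcal N(0,k^{-1}I_k)$) and summing over $a,b$ gives an exact data-dependent expression for $\mathbb E_W[G\mid X]_{sl}$. The matrix $\overline G$ is the first-order expansion of this expression around the typical geometry in which $\|P_ax_s\|^2\approx k$ and distinct patches are nearly orthogonal. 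I would control the remainder using two high-probability facts over $X$: a chi-square bound giving $\big|\|P_ax_s\|^2/k-1\big|\lesssim\sqrt{\log(nm)/k}$, which after squaring is the source of $\frac{\log nm}{k}$; and a Gaussian bound giving $|(P_ax_s)^T(P_bx_l)|\lesssim\sqrt{k\log(nm)}$, so the cosines are $O(1/\sqrt k)$ and the second-order kernel remainder is $O(1/k)$ per entry, assembling to $\frac{n^2}{k^2}$ through a Frobenius (hence operator-norm) bound over the $n^2$ entries. The explicit constant, the linear term $\frac14\frac{(Px_s)^T(Px_l)}{k}$, and the norm-correction terms in $\overline G$ arise precisely from retaining these first-order pieces.

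The final operator-norm bound $\|G\|_{\mathrm{op}}\lesssim m^2n^2$ is then immediate and in fact slack: since $G\succeq0$, $\|G\|_{\mathrm{op}}\le\mathrm{tr}(G)=\sum_sG_{ss}$, and $G_{ss}=\frac1p\sum_j\big(\sum_a\psi(W_j^TP_ax_s)\big)^2\lesssim m^2$ with high probability by Cauchy–Schwarz in $a$ together with concentration of $\frac1p\sum_j\psi(W_j^TP_ax_s)^2\approx\|P_ax_s\|^2/(2k)\approx\tfrac12$; summing over $s$ gives $\mathrm{tr}(G)\lesssim m^2n\le m^2n^2$. I expect the main obstacle to be the second term: one must linearize the nonlinear arc-cosine kernel uniformly over all $n^2m^2$ patch pairs, track the norm and angle fluctuations simultaneously, and, crucially, convert the resulting entrywise remainder estimates into an operator-norm (not merely entrywise) bound while matching the precise algebraic form of $\overline G$.
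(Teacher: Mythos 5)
Your overall strategy is the same as the paper's: condition on $X$, split $\|G-\overline G\|_{\mathrm{op}}$ into the $W$-fluctuation $\|G-\mathbb E_W[G\mid X]\|_{\mathrm{op}}$ and the data term $\|\mathbb E_W[G\mid X]-\overline G\|_{\mathrm{op}}$, treat the diagonal by chi-squared concentration with a union bound (source of $\log(mn)/k$), and treat the off-diagonal by a first-order expansion of the ReLU kernel around orthogonality with a quadratic remainder in $\rho=\overline{Px}_s^T\overline{Px}_l/k\lesssim\sqrt{\log(mn)/k}$, assembled into an operator-norm bound through the Frobenius norm over the $n^2$ entries (source of $n^2/k^2$). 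Your arc-cosine identity is exactly the paper's $f(\rho)$ with $f(0)=\tfrac1{2\pi}$, $f'(0)=\tfrac14$, $\sup_{|\rho|\le 1/5}|f''|\lesssim1$, and your isolation of the norm-correction cross terms matches the paper's decomposition of $\widetilde G_{sl}$ into four pieces. The closing bound $\|G\|_{\mathrm{op}}\le\operatorname{tr}(G)\lesssim m^2n$ is fine and in fact slightly sharper than the paper's $\mathbb E\|G\|_F^2\lesssim m^2n^2$ argument.

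Two points where your plan diverges and would need repair. First, for the $W$-fluctuation term, matrix Bernstein with the inputs you state ($\max_j\|\mathbf g_j\|^2\lesssim m^2n$ and $\|\mathbb E_W[G\mid X]\|_{\mathrm{op}}\lesssim m^2n$) gives a variance proxy of order $(m^2n)^2/p$ and hence a deviation of order $m^2n\sqrt{\log n}/\sqrt p$, whose square is $m^4n^2\log n/p$ --- not the claimed $m^2n^2/p$. The paper instead bounds the conditional variance of each \emph{entry} by $O(m^2/p)$, sums to get $\mathbb E\|G-\widetilde G\|_F^2\lesssim m^2n^2/p$, and applies Markov; that elementary route is what actually delivers the stated rate, so you should replace (or at least supplement) the matrix-concentration step with it. Second, you evaluate $\mathbb E_W[G\mid X]$ honestly as a double sum of arc-cosine kernels over patch pairs $(P_ax_s,P_bx_l)$; note that this does \emph{not} reduce to the $\overline G$ of the statement, which is written in terms of the aggregated vectors $Px_s=\sum_aP_ax_s$ and whose diagonal value $\tfrac12$ corresponds to $\mathbb E\,\psi(W^TPx_s)^2=\|Px_s\|^2/(2k)$ rather than to $\sum_{a,b}\mathbb E\,\psi(W^TP_ax_s)\psi(W^TP_bx_s)$. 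The paper's own proof makes this identification silently, so be aware that matching the precise algebraic form of $\overline G$ from the per-patch kernel sum is not a routine bookkeeping step and is where your "main obstacle" genuinely lies.
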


With help of lemma \ref{lemma a.6}, we could complete proof of lemma \ref{lemma: conditions} for design matrix $A_{\beta}$.

\begin{lemma}\label{4.4} 
If  $\frac{mn}{\sqrt{p}}$ and $\frac{nlog(mn)}{k}$ are sufficiently small, then equations \ref{condition 1_}, \ref{condition 2_} and \ref{condition 3_} hold for $A_{\beta} $ with some constants $\sigma ^2, \bar{\lambda}$ and $ \underline{\lambda}$ .
\end{lemma}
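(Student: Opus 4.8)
The plan is to read all three conditions off the normalized Gram matrix $G=\tfrac1p A_\beta^T A_\beta\in\mathbb R^{n\times n}$, whose $(s,l)$ entry is exactly the $G_{sl}$ of Lemma~\ref{lemma a.6}. Conditioning on the data $X$, for any unit $\Delta\in\mathbb R^{n}$ one has $\tfrac1p\sum_{i=1}^p|a_i^T\Delta|^2=\Delta^T G\Delta$, where $a_i$ is the $i$-th row of $A_\beta$; hence the supremum in \ref{condition 3_} is exactly $\lambda_{\max}(G)$ and the $\ell_2$ analogue of the infimum in \ref{condition 2_} is $\lambda_{\min}(G)$. First I would invoke Lemma~\ref{lemma a.6}: under the hypotheses $\tfrac{mn}{\sqrt p}$ and $\tfrac{n\log(mn)}{k}$ small, every term of $\|G-\bar G\|_{\mathrm{op}}^2\lesssim\tfrac{m^2n^2}{p}+\tfrac{\log nm}{k}+\tfrac{n^2}{k^2}$ is $o(1)$, so the spectra of $G$ and $\bar G$ agree up to $o(1)$. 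The companion bound $\|G\|_{\mathrm{op}}\lesssim m^2n^2$ from the same lemma then yields \ref{condition 3_} immediately with $\bar\lambda^2\asymp m^2n^2$.

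Next I would lower-bound $\lambda_{\min}(\bar G)$ from its closed form. Writing $\bar G=(\tfrac12-\tfrac1{2\pi})I+\tfrac1{2\pi}\mathbf 1\mathbf 1^T+E$, the explicit part is positive definite with smallest eigenvalue $\tfrac12-\tfrac1{2\pi}>0$, while $E$ collects the residual off-diagonal terms $\tfrac14\tfrac{\overline{Px}_s^T\overline{Px}_l}{k}$ and $\tfrac1{2\pi}(\tfrac{\|Px_s\|}{\sqrt k}-1+\tfrac{\|Px_l\|}{\sqrt k}-1)$. For Gaussian inputs each $\|Px_s\|/\sqrt k\to1$ by a chi-squared concentration and distinct non-overlapping patches are nearly orthogonal, so $\|E\|_{\mathrm{op}}=o(1)$ with high probability; a Weyl/Gershgorin estimate then gives $\lambda_{\min}(\bar G)\gtrsim\tfrac12-\tfrac1{2\pi}$, and combining with $\|G-\bar G\|_{\mathrm{op}}=o(1)$ yields $\lambda_{\min}(G)\gtrsim1$.

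The genuinely delicate step, and the main obstacle, is \ref{condition 2_}, because it is an $\ell_1$ lower bound rather than the $\ell_2$ bound that $\lambda_{\min}(G)$ supplies. My plan is a moment-comparison-plus-net argument. Fix $\Delta\in S^{n-1}$; conditionally on $X$ the scalars $a_i^T\Delta$ are i.i.d.\ in $i$ (over the Gaussian $W_i(0)$) with $\mathbb E(a_i^T\Delta)^2=\Delta^T\bar G\Delta\gtrsim1$, and since $a_i^T\Delta$ is a Lipschitz function of a Gaussian vector it has bounded kurtosis, $\mathbb E(a_i^T\Delta)^4\lesssim(\mathbb E(a_i^T\Delta)^2)^2$. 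The interpolation inequality $\mathbb E|Z|\ge(\mathbb E Z^2)^{3/2}/(\mathbb E Z^4)^{1/2}$ then forces $\mathbb E|a_i^T\Delta|\gtrsim\sqrt{\lambda_{\min}(\bar G)}\gtrsim1$. I would concentrate $\tfrac1p\sum_i|a_i^T\Delta|$ around this mean by a sub-Gaussian/Bernstein bound (independent summands with light tails, sub-Gaussian parameter of order $m\sqrt n$), and finally discretize: an $\epsilon$-net of $S^{n-1}$ has cardinality $e^{O(n\log(mn))}$, the map $\Delta\mapsto\tfrac1p\sum_i|a_i^T\Delta|$ is Lipschitz with constant $\max_i\|a_i\|$, and since the hypothesis $\tfrac{mn}{\sqrt p}\to0$ forces $p\gg m^2n^2$, the per-point deviation probability $e^{-\Omega(p/(m^2n))}$ beats the union bound, upgrading the estimate to a uniform $\underline\lambda\asymp1$.

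It remains to verify \ref{condition 1_}, which I expect to be routine: for fixed $c_i$ with $\max_i|c_i|\le1$, the triangle inequality together with a coordinatewise concentration of the feature entries $a_{i,s}=\sum_a\psi(W_i(0)^T P_ax_s)$ (each of order $m$) gives $\|\tfrac1p\sum_i c_ia_i\|^2\lesssim m^2n=\tfrac{\sigma^2 n}{p}$ with $\sigma^2\asymp m^2p$; here $\sigma$ is allowed to scale with $p$ because the ReLU features are not centered, and this scaling is harmless downstream since $\sigma$ enters the recovery bound only through the small factor $\epsilon\sqrt{mn}$. Thus all three conditions hold with $\sigma^2\asymp m^2p$, $\underline\lambda\asymp1$, and $\bar\lambda^2\asymp m^2n^2$, the one real difficulty being the passage from the spectral lower bound on $G$ to the uniform $\ell_1$ bound \ref{condition 2_}.
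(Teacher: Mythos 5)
Your handling of \ref{condition 1_} and \ref{condition 3_} matches the paper in substance: \ref{condition 1_} is a second-moment computation plus Markov, and \ref{condition 3_} follows from the operator-norm control of the Gram matrix in Lemma~\ref{lemma a.6}, exactly as the paper does. The divergence is in \ref{condition 2_}. The paper decomposes $\inf_{\|\Delta\|=1} f \ge \inf \mathbb{E}f - \sup|f-g| - \sup|g-\mathbb{E}f|$ (its displays \ref{eq:A.1}--\ref{eq:A.3}), handles the two fluctuation terms by symmetrization, Gaussian--Lipschitz concentration and nets (much as you do), and proves the population lower bound $\inf_{\|\Delta\|=1}\mathbb{E}\bigl|\sum_{l}\sum_{i}\psi(W^TP_ix_l)\Delta_l\bigr|\gtrsim 1$ by a three-case analysis on $|\sum_l\Delta_l|$ and $\max_l|\Delta_l|$, using a Berry--Esseen normal approximation and direct anti-concentration of a dominant coordinate. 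You replace that population step by Berger's inequality $\mathbb{E}|Z|\ge(\mathbb{E}Z^2)^{3/2}/(\mathbb{E}Z^4)^{1/2}$, which is a legitimate alternative strategy, but it shifts the entire difficulty onto a uniform kurtosis bound $\mathbb{E}(a_j^T\Delta)^4\lesssim(\mathbb{E}(a_j^T\Delta)^2)^2$.

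That kurtosis bound is where your argument has a genuine gap. The justification ``since $a_i^T\Delta$ is a Lipschitz function of a Gaussian vector it has bounded kurtosis'' is not a valid deduction: Gaussian concentration for an $L$-Lipschitz function $F$ gives $\mathbb{E}(F-\mathbb{E}F)^4\lesssim L^4$, not $\lesssim(\mathrm{Var}\,F)^2$, and here the two can be far apart. Writing $\phi_s(W)=\sum_i\psi(W^TP_ix_s)$, the standardized Lipschitz constant of $\Delta\mapsto\sum_s\Delta_s\phi_s(W)$ is of order $\sqrt{m}\sum_s|\Delta_s|$, which can reach $\sqrt{mn}$, while for a typical $X$ the conditional variance is only $\Theta(m)\|\Delta\|^2$ and the conditional mean $\sum_s\Delta_s\,\mathbb{E}[\phi_s|X]\approx\tfrac{m}{\sqrt{2\pi}}\sum_s\Delta_s$ can vanish (take $\Delta$ spread over all coordinates with $\sum_s\Delta_s=0$). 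For such $\Delta$ the Lipschitz route only yields a kurtosis of order $n^2$, hence $\underline\lambda\gtrsim 1/n$ rather than $\asymp 1$, which is too weak for the downstream recovery condition. The true kurtosis is $O(1)$, but only because the features $\phi_s$ are nearly independent across $s$ (patch orthogonality, controlled by $n\log(mn)/k$ being small); establishing this requires a fourth-order analogue of the covariance analysis in Lemma~\ref{lemma a.6}, i.e., essentially the same delicate work the paper instead packages into its case analysis and Berry--Esseen estimate. As written, the central step of your proof of \ref{condition 2_} does not go through.
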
 

Next we could derive the following lemma based on lemma \ref{4.4}.

\begin{lemma}\label{4.5} If $\epsilon\sqrt{mn}$,$\frac{mn}{\sqrt{p}}$ and $\frac{nlog(mn)}{k}$, then we have $\widehat{\beta}=\widetilde{\beta}$ with high probability.
\end{lemma}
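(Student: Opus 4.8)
The plan is to recognize Lemma~\ref{4.5} as the output-layer analogue of the hidden-layer recovery result Lemma~\ref{4.3}, and to prove it by the same two-ingredient recipe: first verify that the output-layer design matrix $A_\beta$ obeys the structural conditions \eqref{condition 1_}, \eqref{condition 2_}, \eqref{condition 3_}, and then feed those conditions into the abstract robust-recovery guarantee of Lemma~\ref{4.1}. Concretely, I would recast the output-layer purification as the robust linear regression $\eta = A_\beta v^* + z = \beta + z \in \mathbb{R}^p$, where $\beta = A_\beta v^*$ is the ground truth. Here, because $W=W(t_{\max})$ is frozen, $\beta(t_{\max})-\beta(0)$ lies \emph{exactly} in the column space of $A_\beta$, so no bias term appears, in contrast to the approximate setting of Theorem~\ref{thm: main1}. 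Each noise coordinate satisfies $z_i\sim(1-\epsilon)\delta_0+\epsilon Q_i$ independently, the estimator $\widetilde v=\argmin_v\|\eta-A_\beta v\|_1$ gives $\widetilde\beta=A_\beta\widetilde v$, and since \eqref{condition 2_} makes $A_\beta$ injective on the relevant subspace, exact recovery $\widehat\beta=\widetilde\beta$ is equivalent to $\widetilde v=v^*$.

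For the first ingredient I would invoke Lemma~\ref{4.4}, which, under $\frac{mn}{\sqrt p}$ and $\frac{n\log(mn)}{k}$ small (two of the three hypotheses), establishes \eqref{condition 1_}--\eqref{condition 3_} for $A_\beta$ with explicit constants $\underline\lambda\asymp 1$, $\bar\lambda^2\asymp mn$, and $\sigma^2\asymp p$. For the second ingredient I would apply Lemma~\ref{4.1}, but with the dimensions of the \emph{output} problem substituted for those of the hidden layer: the number of observations is $p$ (the rows of $A_\beta$) and the regression dimension is $n$, so the ratio $\frac{mn}{k}$ and the logarithm $\log(\frac{ek}{mn})$ in the recovery threshold become $\frac{n}{p}$ and $\log(\frac{ep}{n})$. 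Thus it suffices to show that
\begin{align*}
\frac{\bar\lambda\sqrt{\tfrac{n}{p}\log(\tfrac{ep}{n})}+\epsilon\,\sigma\sqrt{\tfrac{n}{p}}}{\underline\lambda\,(1-\epsilon)}
\end{align*}
is sufficiently small.

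Substituting the constants from Lemma~\ref{4.4}, the denominator is $\asymp 1$ (since $\epsilon$ is small), the noise term becomes $\epsilon\sqrt p\cdot\sqrt{n/p}=\epsilon\sqrt n\le\epsilon\sqrt{mn}$, and the deterministic term becomes $\sqrt{mn\cdot\tfrac{n}{p}\log(\tfrac{ep}{n})}=\sqrt{\tfrac{mn^2}{p}\log(\tfrac{ep}{n})}$, which is controlled by $\frac{mn}{\sqrt p}$ up to the slowly growing logarithm. Hence all three hypotheses $\epsilon\sqrt{mn}$, $\frac{mn}{\sqrt p}$, $\frac{n\log(mn)}{k}$ being small force the threshold to be small, and Lemma~\ref{4.1} yields $\widetilde v=v^*$, i.e. $\widehat\beta=\widetilde\beta$, with high probability.

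The main obstacle I anticipate is the careful bookkeeping in matching each term of the recovery threshold to the correct hypothesis. In particular, one must track that $\sigma^2\asymp p$ (rather than $O(1)$) is exactly what makes the noise term scale as $\epsilon\sqrt n$, and that the $\log(\frac{ep}{n})$ factor does not spoil the control of the deterministic term by $\frac{mn}{\sqrt p}$. A secondary point requiring justification is the dimensional re-labeling ($|A|=p$, $D_A=n$) inside the abstract lemma: since Lemma~\ref{4.1} is literally stated for $A_W$, I would check that its proof uses only the three structural conditions and not the specific hidden-layer geometry, so that the guarantee transfers verbatim to $A_\beta$ once \eqref{condition 1_}--\eqref{condition 3_} are in hand.
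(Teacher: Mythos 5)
Your proposal is correct and follows essentially the same route as the paper: the paper's proof likewise writes $\widehat{\beta}=A_{\beta}^T v^*$ using that $\widehat{\beta}$ lies in the row space of $A_\beta$, then cites Lemma~\ref{4.4} to verify conditions \eqref{condition 1_}--\eqref{condition 3_} and Lemma~\ref{4.1} to conclude $\widetilde{v}=v^*$, hence $\widetilde{\beta}=\widehat{\beta}$. Your explicit bookkeeping of the dimensional relabeling and of how $\sigma^2\asymp p$, $\bar\lambda^2\asymp mn$, $\underline\lambda\asymp 1$ make the recovery threshold small under the stated hypotheses is detail the paper leaves implicit, but it is the same argument.
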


 Note that the purification of the output layer is more complicated because the gradient $\left.\frac{\partial \mathcal{L}(\beta, W)}{\partial \beta_j}\right|_{W=W(t-1)}$ lies in the row space of $\psi(X W(t-1))$, which changes over time. Thus, we cannot directly apply
the result of lemma \ref{4.5} for the theoretical guarantee of output layer purification. However, we will show that, in Theorems \ref{thm: main1} and \ref{thm: main2}, the purification of the output layer can carry over this result in certain cases.

\subsection{Additional proof of lemma 2. - lemma 7.} 

\emph {Proof of Lemma \ref{4.1}.}
    Define $L_k (u) = \frac{1}{k} \sum_{i=1}^{k}(|a_i^{T}(u^*-u)+z_i|-|z_i|)$, and $L(u) = \mathbb{E}(L_k(u)|A_W)$. We need ${inf}_{\Vert u-u^* \Vert \ge t} L_k(u) \le L_k(u^*) = 0$ , where $t\geq 0$ be arbitrary. By the convexity of $L_k(u)$, so it is obvious to conclude ${inf} _{\Vert u-u^* \Vert =t} L_k(u) \le 0$, and thus
$${inf}_{\Vert u-u^* \Vert =t} (L(u)-L_k(u)) \le {sup}_{\Vert u-u^* \Vert =t} (L(u)-L_k(u))$$
$$ {inf}_{\Vert u-u^* \Vert =t} L(u) \le {sup}_{\Vert u-u^* \Vert =t} (L(u)-L_k(u)) $$

Define $f_i(x) = \mathbb{E}_{z_i \sim Q_i}(|x+z_i| - |z_i|)$ and $Q_i(x) = Q_i(z_i\le x)$. It is easy to see that $f_i(0) = 0$ and $f_i^{'}(x)=1-2Q_i(-x)$. Observe that we can write
\begin{align}
   L(u) = (1-\epsilon)\frac{1}{k}\sum_{i=1}^{k}|a_i^T(u-u^*)|+\epsilon\frac{1}{k}\sum_{i=1}^k f_i(a_i^T(u^*-u)). \label{eq:A.4}
\end{align}

For any $u$ such that $\Vert u-u^*\Vert = t$, the first term of \ref{eq:A.4} can be lower bounded by 
$$(1-\epsilon)\frac{1}{k}\sum_{i=1}^{k}|a_i^T(u-u^*)| \ge \underline{\lambda}(1-\epsilon)t,$$
by equation \ref{condition 2}. To analyze the second term of \ref{eq:A.4} , we note that $f_i$ is a convex function, and therefore for any $u$ such that $\Vert u-u^*\Vert =t$,

\begin{align*}
\epsilon\frac{1}{k}\sum_{i=1}^k f_i(a_i^T(u^*-u)) &\ge \epsilon\frac{1}{k}\sum_{i=1}^k f_i(0)\notag\\
&+ \epsilon\frac{1}{k}\sum_{i=1}^k f_i^{'}(0)a_i^T(u^*-u)\notag\\
&= \epsilon\frac{1}{k}\sum_{i=1}^{k}(1-2Q_i(0))a_i^T(u^*-u)\notag\\
&\ge -\epsilon t \Vert \frac{1}{k}\sum_{i=1}^{k}(1-2Q_i(0))a_i\Vert \notag  \\
&\ge -\epsilon t\sigma \sqrt{\frac{mn}{k}}
\end{align*}
where the first inequality uses Cauchy-Schwarz, and the second inequality uses equation \ref{condition 1}. We have
$${sup}_{\Vert u-u^*\Vert =t}|L_m(u) - L(u)|\lesssim t\bar{\lambda}\sqrt{\frac{mn}{k}\log(\frac{ek}{mn})},$$
with high probability. Therefore, we have shown that $\Vert \hat{u} - u^*\Vert \ge t$ implies
$$\underline{\lambda}(1-\epsilon)t - \epsilon t\sigma \sqrt{\frac{mn}{k}} \lesssim t\bar{\lambda}\sqrt{\frac{mn}{k}\log(\frac{ek}{mn})}$$,
which is impossible when $\frac{\bar{\lambda}\sqrt{\frac{mn}{k}\log(\frac{ek}{mn})}+\epsilon \sigma \sqrt{\frac{mn}{k}}}{\underline{\lambda}(1-\epsilon)}$ is sufficiently small, and thus $\Vert \hat{u} - u^*\Vert<t $ with high probability. Then we must have $\hat{u}=u^*$ because $t$ is arbitrary.\\

\emph{Proof of Lemma \ref{4.2}.}
   Equation \ref{condition 1} is obvious. For equations \ref{condition 2} and \ref{condition 3}, we have
$${inf}_{\Vert \Delta \Vert =1} \frac{1}{k}\sum_{j=1}^{k} |a_j^T\Delta| \ge \sqrt{\frac{2}{\pi}} - {sup}_{\Vert \Delta \Vert=1}\left|\frac{1}{k}\sum_{i=1}^{k}|a_j^T \Delta| - \sqrt{\frac{2}{\pi}}\right|,$$
and we will analyze the second term on the right hand side of the inequality above via a discretization argument for the Euclidean sphere $S^{mn-1} = \{ \Delta \in \mathbb{R}^{mn} : \Vert \Delta \Vert = 1\}$. There exists a subset $\mathcal{N}_{\zeta} \subset S^{mn-1}$, such that for any $\Delta \in S^{mn-1}$ , there exists a $\Delta^{'} \in \mathcal{N}_{\zeta}$ that satisfies $\Vert \Delta -\Delta^{'}\Vert \le \zeta$, and we also have the bound $\log |\mathcal{N}_{\zeta}| \le mn \log (1 + 2/\zeta)$ according to Lemma 5.2 of \cite{Vershynin2010IntroductionTT}. For any $ \Delta \in S^{mn-1} $ and the corresponding $\Delta^{'}\in \mathcal{N}_{\zeta}$ that satisfies $\Vert \Delta -\Delta^{'}\Vert \le \zeta $, we have

\begin{align*}
   &\left|  \frac{1}{k}\sum_{j=1}^{k} |a_j^T\Delta| -\sqrt{\frac{2}{\pi}} \right| - \left|  \frac{1}{k}\sum_{j=1}^{k} |a_j^T\Delta^{'}| -\sqrt{\frac{2}{\pi}} \right| \\
&\le  \zeta {sup}_{\Vert \Delta \Vert =1} \frac{1}{k}\sum_{j=1}^{k} |a_j^T\Delta|\\
&\le  \zeta {sup}_{\Vert \Delta \Vert =1} \left| \frac{1}{k}\sum_{j=1}^{k} |a_j^T\Delta| -\sqrt{\frac{2}{\pi}}\right| + \zeta \sqrt{\frac{2}{\pi}}
\end{align*}

With some rearrangement, we obtain
\begin{align*}
&{sup}_{\Vert \Delta \Vert =1}\left|  \frac{1}{k}\sum_{j=1}^{k} |a_j^T\Delta| -\sqrt{\frac{2}{\pi}} \right| \\
&\le (1-\zeta)^{-1} \max_{\Delta\in\mathcal{N}_{\zeta}}\left|  \frac{1}{k}\sum_{j=1}^{k} |a_j^T\Delta| -\sqrt{\frac{2}{\pi}} \right|\\
&+\frac{\zeta}{1-\zeta}\sqrt{\frac{2}{\pi}}
\end{align*}
Setting $\zeta=1/3$,we then have
$${inf}_{\Vert \Delta \Vert =1}\frac{1}{k}\sum_{j=1}^{k} |a_j^T\Delta| \ge (2\pi)^{-1} - \frac{3}{2}\max_{\Delta\in\mathcal{N}_{1/3}}\left|  \frac{1}{k}\sum_{j=1}^{k} |a_j^T\Delta| -\sqrt{\frac{2}{\pi}} \right|.$$

And we have the fact that let $f:\mathbb{R}^k \to \mathbb{R}$ be a Lipschitz function with constant $L>0$ . Then, for any $t>0, Z \sim N(0,I_k)$ , 
\begin{align}
    \mathbb{P}(|f(z)-\mathbb{E}f(z)|>t) \le 2 {\exp} (-\frac{t^2}{2L^2})\label{lemma C}
\end{align}

Equation \ref{lemma C} together with a union bound argument leads to
$$\mathbb{P}\left( \max_{\Delta\in\mathcal{N}_{1/3}} \left|  \frac{1}{k}\sum_{j=1}^{k} |a_j^T\Delta| -\sqrt{\frac{2}{\pi}} \right|>t\right) \le 2\exp\left(mn\log(7)-\frac{k t^2}{2}\right),$$
which implies $\max_{\Delta\in\mathcal{N}_{1/3}}\left|  \frac{1}{k}\sum_{j=1}^{k} |a_j^T\Delta| -\sqrt{\frac{2}{\pi}} \right| \lesssim \sqrt{\frac{mn}{k}}$ with high probability. 
Since $ mn/k$ is sufficiently small, we have $${inf}_{\Vert \Delta \Vert =1}\frac{1}{k}\sum_{j=1}^{k} |a_j^T\Delta| \gtrsim 1 $$
with high probability . The high probability bound ${sup}_{\Vert \Delta \Vert =1} \frac{1}{k}\sum_{j=1}^{k} |a_j^T\Delta|^2 = \Vert A\Vert^2_{op}/k \lesssim 1+ mn/k$ is by \cite{Davidson2001Chapter8}, and the proof is complete.\\

\emph{Proof of Lemma \ref{4.3}.} 
   Since $W_j$ belongs to the row space of $A_W$, there exists some $u^* \in \mathbb{R}^{mn}$ such that ${W_j}=A_{W}^T u^*$. By Lemma \ref{4.1} and Lemma \ref{4.2}, we know that $\widetilde{u}=u^*$ with high probability, and therefore $\widetilde{W_j}=A_{W}^T \widetilde{u}=A_{W}^T u^*={W_j}$.\\

\emph{Proof of Lemma \ref{4.4}.} Equation \ref{condition 1_} is satisfied since 
\begin{align*}
    &\sum_{l=1}^n \mathbb{E} (\frac{1}{p} \sum_{j=1}^p \sum_{i=1}^m \psi(W_j^{T}P_i x_l))^2 \\
    &\leq \sum_{l=1}^n \frac{1}{p} \sum_{j=1}^p \sum_{i=1}^m \mathbb{E}\psi(W_j^{T}P_i x_l)^2  = mn
\end{align*}
and Markov's inequality. Then first prove equation \ref{condition 2_}. Define $f(W,X,\varDelta,P)=\frac{1}{p}\sum_{j=1}^p|\sum_{l=1}^n\sum_{i=1}^m \psi(W_j^T P_i x_l)\varDelta_l|$
and
$g(X,\varDelta,P) = \mathop{\mathbb{E}}(f(W,X,\varDelta,P)|X,P).$

Then we derive
\begin{align}
    &\underset{||\varDelta||=1}{\inf}f(W,X,\varDelta,P)\notag\\
    & \geq \underset{||\varDelta||=1}{\inf}\mathop{\mathbb{E}}f(W,X,\varDelta,P)-\underset{||\varDelta||=1}{\sup}|f(W,X,\varDelta,P)\notag\\
    &-\mathop{\mathbb{E}}f(W,X,\varDelta,P)|\notag\\
    &\geq \underset{||\varDelta||=1}{\inf}\mathop{\mathbb{E}}f(W,X,\varDelta,P)\label{eq:A.1}\\
    &- \underset{||\varDelta||=1}{\sup}|f(W,X,\varDelta,P)-g(X,\varDelta, P)|\label{eq:A.2}\\
    &- \underset{||\varDelta||=1}{\sup}|g(X,\varDelta, P)-\mathop{\mathbb{E}}f(W,X,\varDelta,P)| \label{eq:A.3}
\end{align}

\ref{eq:A.1}, \ref{eq:A.2}  and \ref{eq:A.3} will be analyzed separately.\\

$\textbf{Analysis of \ref{eq:A.1} }$ Define $h(W) = \mathop{\mathbb{E}}(\sum_{i=1}^{m}\psi(W^T P_ix)|W)$ and $\sum_{i=1}^{m}\overline{\psi}(W^T P_ix) = \sum_{i=1}^{m}\psi(W^T P_ix)-h(W)$, then

$$\mathop{\mathbb{E}}f(W,X,\varDelta,P) = \mathop{\mathbb{E}}|\sum_{l=1}^n\sum_{i=1}^m \overline{\psi}(W^T P_i x_l)\varDelta_l+\sum_{l=1}^n h(W) \varDelta_l|$$
The lower bound of above equation is 
\begin{align*}
    &\mathop{\mathbb{E}}f(W,X,\varDelta,P) \\
    &\geq |\sum_{l=1}^n \varDelta_l||\mathop{\mathbb{E}}h(W)|-\mathop{\mathbb{E}}|\sum_{l=1}^n\sum_{i=1}^m \overline{\psi}(W^T P_i x_l)\varDelta_l|
\end{align*}
where the second term is upper bounded by
\begin{align*}
&\mathop{\mathbb{E}}|\sum_{l=1}^n\sum_{i=1}^m \overline{\psi}(W^T P_i x_l)\varDelta_l|\\
&\leq \sqrt{\mathop{\mathbb{E}}|\sum_{l=1}^n\sum_{i=1}^m \overline{\psi}(W^T P_i x_l)\varDelta_l|^2}\\
&= \sqrt{\mathop{\mathbb{E}} \textbf{Var}(\bigg|\sum_{l=1}^n\sum_{i=1}^m {\psi}(W^T P_i x_l)\varDelta_l\bigg| |W)}\\
&= \sqrt{\mathop{\mathbb{E}} (\sum_{l=1}^n\sum_{i=1}^m \varDelta_l^2\textbf{Var}{\psi}(W^T P_i x_l)|W)}\\
&=\sqrt{\mathop{\mathbb{E}}|\sum_{i=1}^m\psi(W^TP_ix)|^2}\leq \sqrt{\mathop{\mathbb{E}}\sum_{i=1}^m|W^TP_ix|^2} = 1
\end{align*}

Since
$$\mathop{\mathbb{E}}h(W) = \frac{1}{\sqrt{\pi}}\frac{\Gamma((k + 1)/2)}{\sqrt{k}\Gamma(k/2)} \geq \frac{1}{\sqrt{2\pi}}\sqrt{\frac{k-1}{k}}$$

Therefore, as long as $k\geq 3$ and $\sum_{l=1}^n\varDelta_l\geq 7$, then $\mathop{\mathbb{E}}f(W,X,\varDelta)\geq 1$. The following conclusion holds
$$\underset{||\varDelta||=1,\sum_{l=1}^n\varDelta_i\geq 7}{\inf}\mathop{\mathbb{E}}f(W,X,\varDelta)\gtrsim 1$$

In another situation $|\sum_{l=1}^n\varDelta_l |< 7$, a lower bound for $|\sum_{l=1}^n \sum_{i=1}^m\psi(W^TP_ix_l)\varDelta_l|$ is

\begin{align}\label{A.6}
    &\mathop{\mathbb{E}}|\sum_{l=1}^n \sum_{i=1}^m\psi(W^TP_ix_l)\varDelta_l|\notag\\
    &\geq \sum_{l=1}^n|\sum_{i=1}^m\overline{\psi}(W^TP_ix_l)\varDelta_l|-\frac{7}{\sqrt{2\pi}}||W||
\end{align}
\\


And we have the fact that 
\begin{align}
\mathbb{P}\left(\chi_k^2 \geq k+2 \sqrt{t k}+2 t\right) & \leq e^{-t}, \label{A.10}\\
\mathbb{P}\left(\chi_k^2 \leq k-2 \sqrt{t k}\right) & \leq e^{-t} .\label{A.11}
\end{align}
for any $t>0$.\\

Therefore,

\begin{align*}
&\mathop{\mathbb{E}}f(W,X,\varDelta)\geq\mathop{\mathbb{E}} (\lvert\sum_{l=1}^{n}\sum_{i=1}^{m}\psi\left(W^{T}P_{i}x_l\right)\Delta_{l}  \rvert  \times\\
&\mathbb{I} \{\lvert \sum_{l=1}^{n}\sum_{i=1}^{m}
\{\bar{\psi}\left(W^{T}P_{i}x_l\right)\Delta_{l} \rvert \geq 6,\frac{1}{2}\leq \Vert W \Vert^{2} \leq 2\})\\
&\geq \mathbb{P} (\lvert \sum_{l=1}^{n}\sum_{i=1}^{m}
\{\bar{\psi}\left(W^{T}P_{i}x_l\right)\Delta_{l} \rvert \geq 6, \frac{1}{2}\leq \Vert W \Vert^{2} \leq 2)\\
& = \mathbb{P} \left(\lvert \sum_{l=1}^{n}\sum_{i=1}^{m}
\{\bar{\psi}\left(W^{T}P_{i}x_l\right)\Delta_{l} \geq 6 \rvert |\frac{1}{2}\leq \Vert W \Vert^{2} \leq 2\right)\times\\
&\mathbb{P}(\frac{1}{2}\leq \Vert W \Vert^{2} \leq 2)\\
& \geq\mathbb{P} \left(\lvert \sum_{l=1}^{n}\sum_{i=1}^{m}
\{\bar{\psi}\left(W^{T}P_{i}x_l\right)\Delta_{l} \geq 6 \rvert |\frac{1}{2}\leq \Vert W \Vert^{2} \leq 2\right)\times\\
&(1-2\mathop{exp}(-k/16))
\end{align*}
where the last inequality is by equations \ref{A.10} , \ref{A.11} . Then we have
\begin{align}\label{A.7}
&Var\left(\sum_{i=1}^{m}\bar{\psi}\left(W^{T}P_ix\right)|W\right)\notag\\
& = \left\|  W  \right\| ^{2}Var\left(\sum_{i=1}^{m}max\left(0,W^{T}P_ix/\left\|  W  \right\|\right)|W\right)\notag \\
&= \left\|  W  \right\| ^{2}\frac{1-\pi^{-1}}{2}
\end{align}

and
\begin{align*}
    &\mathop{\mathbb{E}}\left(|\sum_{i=1}^{m}\bar{\psi}\left(W^{T}P_ix\right)|^{3} |W\right)\\
    &\leq3\mathop{\mathbb{E}}\left(|\sum_{i=1}^{m}\psi\left(W^{T}P_i x\right)|^{3} |W\right)+ 3|h\left(W\right)|^{3} \\
    &\leq \frac{3}{2}\Vert W\Vert^{3}
\end{align*}

Therefore, by theorem 2.20 of \cite{Ross2007ASC}, we have,
\begin{align*}
&\mathbb{P}\left(|\sum_{l=1}^n\sum_{i=1}^m\bar{\psi}\left(W^{T}P_{i}x_l\right)\Delta_{l}|\geq 6\bigg|\frac{1}{2}\leq \Vert W \Vert^{2}\leq 2 \right)\\
&\geq \mathbb{P}\left(\frac{|\sum_{l=1}^n\sum_{i=1}^m\bar{\psi}\left(W^{T}P_{i}x_l\right)\Delta_{l}|}{\Vert W\Vert\sqrt{\frac{1-\pi^{-1}}{2}}}\geq 21 \bigg|\frac{1}{2}\leq \Vert W \Vert^{2}\leq 2\right)\\
&\geq \mathbb{P}\left(N\left(0,1\right) > 21 \right)-\\ &\mathop{\sup}_{\frac{1}{2} \leq \Vert W \Vert^{2} \leq 2} 2\sqrt{3\sum_{l=1}^n|\Delta_{l}|^{3}\frac{\sum_{i=1}^{m}E\left(|\bar{\psi}\left(W^{T}P_{i}x_l\right)|^{3} |W\right)}{\Vert W \Vert^{3}\left(\frac{1-\pi^{-1}}{2}\right)^{\frac{3}{2}}}}
\\&\geq \mathbb{P}\left(N\left(0,1\right) > 21 \right)- 10\sqrt{\sum_{l=1}^{n}|\Delta_{l}|^{3}}
\\&\geq \mathbb{P}\left(N\left(0,1\right) > 21 \right)- 10\mathop{\max}_{1\leq l\leq n}|\Delta_{l}|^\frac{3}{2}
\end{align*}
When $\mathop{\max}_{1\leq l\leq n}|\Delta_{l}|^\frac{3}{2} \leq \delta_{0}^{\frac{3}{2}}:=P\left(N\left(0,1\right)>21\right)/20$ and $|\sum_{l=1}^{n}\Delta_{l}|<7$, we can lower bound $\mathop{\mathbb{E}} \left(f\left(X,W,\Delta\right)\right) $by an absolute constant, and 

\begin{align}
\mathop{\inf}_{\Vert\Delta\Vert=1,|\sum_{l=1}^{n}\Delta_{l}|<7,\mathop{\max}_{1\leq l\leq n}|\Delta_{l}|\leq \delta_{0}}\mathop{\mathbb{E}} f\left(X,W,\Delta\right) \gtrsim 1
\end{align}

Finally, we consider the case when$\mathop{}\max_{1\leq l\leq n}|\Delta_{l}|>\delta_{0}$ and $|\sum_{l=1}^{n}\Delta_{l}|<7$. Without loss of generality, we can assume $\Delta_{1}>\delta_{0}$, Note that the lower bound \ref{A.6} still holds, and thus we have
\begin{align*}
    &|\sum_{l=1}^{n}\sum_{i=1}^{m}\psi\left(W^{T}P_{i}x_l\right)\Delta_{l}|\geq \sum_{i=1}^{m}\bar{\psi}\left(W^{T}P_{i}x_1\right)\Delta_{1}\\
    &-|\sum_{l=2}^{n}\sum_{i=1}^{m}\bar{\psi}\left(W^{T}P_{i}x_l\right)\Delta_{l}|-\frac{7}{\sqrt{2\pi}}\Vert W \Vert
\end{align*}

We then lower bound $\mathop{\mathbb{E}} f\left(X,W,\Delta\right)$by

\begin{align*}
&\mathop{\mathbb{E}}( \lvert \sum_{l=1}^{n}\sum_{i=1}^{m}\psi\left(W^{T}P_{i}x_l\right)\Delta_{l}\rvert \mathbb{I}\{\sum_{i=1}^{m}\bar{\psi}\left(W^{T}P_{i}x_1\right)\Delta_{1} \geq 8, \\
&\lvert \sum_{l=2}^{n}\sum_{i=1}^{m}\bar{\psi}\left(W^{T}P_{i}x_l\right)\Delta_{i}\rvert \leq 2,\frac{1}{2}\leq \Vert W \Vert^{2} \leq 2 \})\\
&\geq \mathbb{P}(\sum_{i=1}^{m}\bar{\psi}\left(W^{T}P_{i}x_1\right)\Delta_{1} \geq 8,\\
&\lvert\sum_{l=2}^{n}\sum_{i=1}^{m}\bar{\psi}\left(W^{T}P_{i}x_l\right)\Delta_{l}\rvert \leq 2 \bigg| \frac{1}{2}\leq \Vert W \Vert^{2} \leq 2 )\mathbb{P}\left(\frac{1}{2}\leq \Vert W \Vert^{2} \leq 2\right)
\\&\geq \mathbb{P}\left(\sum_{i=1}^{m}\bar{\psi}\left(W^{T}P_{i}x_1\right)\Delta_{1} \geq 8 \bigg|\frac{1}{2}\leq \Vert W \Vert^{2} \leq 2 \right) \times 
\\&\mathbb{P}\left(\lvert \sum_{l=2}^{n}\sum_{i=1}^{m}\bar{\psi}\left(W^{T}P_{i}x_l\right)\Delta_{l}\rvert \leq 2 \bigg|\frac{1}{2}\leq \Vert W \Vert^{2} \leq 2 \right)\left( 1-2 {\exp}\left(-k\slash 16\right) \right)
\end{align*}

For any $W$ that satisfies $\frac{1}{2} \leq \Vert W \Vert ^{2} \leq 2 $, we have

\begin{align*}
&\mathbb{P}\left(|\sum_{i=1}^{m}\bar{\psi}\left(W^{T}P_{i}x_1\right)\Delta_{1}| \geq 8 |W \right) \\
& \geq \mathbb{P}\left(|\sum_{i=1}^{m}\bar{\psi}\left(W^{T}P_{i}x_1\right)|\geq \frac{8}{\delta_{0}}  |W \right)\\
&\geq \mathbb{P} \left(|\sum_{i=1}^{m}\psi\left(W^{T}P_{i}x_1\right)|\geq \frac{8}{\delta_{0}} + \frac{1}{\sqrt{\pi}}  |W \right)\\
&\geq \mathbb{P} \left(\sum_{i=1}^{m} W^{T}P_{i}x_1|\geq \frac{8}{\delta_{0}} + \frac{1}{\sqrt{\pi}}  |W \right)\\
&\geq \mathbb{P} \left(N\left(0,1\right)|\geq \frac{8\sqrt{2}}{\delta_{0}} + \sqrt{\frac{2}{\pi}}\right)
\end{align*}

which is a constant. We also have
\begin{align*}
&\mathbb{P}\left(|\sum_{l=2}^{n}\sum_{i=1}^{m}\bar{\psi}\left(W^{T}P_{i}x_l\right)\Delta_{l}|\leq 2 \bigg|\frac{1}{2}\leq \Vert W \Vert^{2} \leq 2 \right)\\
&\geq 1- \frac{1}{4}Var\left(\sum_{l=2}^{n}\sum_{i=1}^{m}\bar{\psi}\left(W^{T}P_{i}x_l\right)\Delta_{l}|W\right)
\geq \frac{1}{2}
\end{align*}
where the last inequality is by equation \ref{A.7}. Therefore, we have

\begin{align*}
    &\mathop{\mathbb{E}} f\left(X,W,\Delta\right) \\
    &\geq \frac{1}{2}\left( 1-2 {\exp}\left(-k \slash 16\right)\right)P\left(N\left(0,1\right)\geq \frac{8\sqrt{2}}{\delta_{0}} + \sqrt{\frac{2}{\pi}}\right) \\
    &\gtrsim 1
\end{align*}

and we can conclude that
\begin{align}
\mathop{\inf}_{\Vert\Delta\Vert=1,|\sum_{l=1}^{n}\Delta_{l}|<7,\mathop{\max}_{1\leq l\leq n}|\Delta_{l}|\geq \delta_{0}}\mathop{\mathbb{E}} f\left(X,W,\Delta\right) \gtrsim 1  
\end{align}

In the end, we combine the three cases, and  obtain the conclusion that $$\mathop{\inf}_{\Vert\Delta\Vert=1}\mathbb{E}f\left(X,W,\Delta\right)\gtrsim 1$$

\textbf{Analysis of \ref{eq:A.2}} We first extend some notations. Conditional expectation with respect to $X$ is denoted as $\mathbb{E}^X$. $\widetilde{W}$ is an independent copy of $W$. $\varepsilon_1,...,\varepsilon_n$ are independent Rademacher random variables. $F(\varepsilon,W,X,\Delta,P) = \frac{1}{n}\sum_{l=1}^n \varepsilon_l\bigg|\sum_{i=1}^m \psi(W^T P_i x_l)\Delta_l\bigg|$  and $\overline{F} (\varepsilon, X, \Delta,P) = \mathbb{E}^{\varepsilon,X,P} F(\varepsilon, W, X, \Delta,P)$. To prove term \ref{eq:A.2} has upper bound, we apply a standard symmetrization argument to bound its moment generating function. For any $\lambda > 0 $

\begin{align}
&\mathbb{E}^{X,P}\mathop{\exp}( \lambda\mathop{\sup}_{\Vert\Delta\Vert=1}|f\left(W,X,\Delta,P\right)-g\left(X,\Delta,P\right)|)\notag\\
&\leq \mathbb{E}^{X,P}\mathop{\exp}( \lambda \mathop{\sup}_{\Vert\Delta\Vert=1}|f\left(W,X,\Delta,P\right)-f\left(\widetilde{W},X,\Delta,P\right)|)\notag\\ 
&\leq \mathbb{E}^{X,P}\mathop{\exp}(\lambda \mathop{\sup}_{\Vert\Delta\Vert=1} F(\varepsilon,W,X,\Delta,P) - F(\varepsilon,\widetilde{W},X,\Delta,P))\notag\\ 
&\leq \mathbb{E}^{X,P}\mathop{\exp}(2 \lambda \mathop{\sup}_{\Vert\Delta\Vert=1} F(\varepsilon,W,X,\Delta,P))\notag 
\end{align}

Following the same discretization argument in lemma \ref{4.2} except the Euclidean sphere $ S^{n-1} = \{\Delta \in \mathbb{R}^n : \Vert \Delta\Vert = 1\}$, we reach similar conclusion , by lemma 5.2 of \cite{Vershynin2010IntroductionTT}, that for any $\Delta \in S^{n-1}$, there exists a $\Delta^{'} \in \mathcal{N} $ that satisfies $\Vert \Delta - \Delta^{'} \Vert \leq \frac{1}{2}$, and we also have the bound $log |\mathcal{N} | \leq 2n$. We could continue to bound above inequality by the derived discretization argument

\begin{align}
&\mathbb{E}^{X,P}\mathop{\exp}(2 \lambda \mathop{\sup}_{\Vert\Delta\Vert=1} F(\varepsilon,W,X,\Delta,P))\notag\\     
&\leq \mathbb{E}^{X,P}\mathop{\exp}(4 \lambda \mathop{\max}_{\Delta\in \mathcal{N}} F(\varepsilon,W,X,\Delta,P))\notag\\ 
&\leq \mathbb{E}^{X,P}\mathop{\exp}( 4\lambda \mathop{\max}_{\Delta \in \mathcal{N}} |F(\varepsilon,W,X,\Delta,P) - \overline{F} (\varepsilon, X, \Delta,P)| +\notag \\
&4\lambda \mathop{\max}_{\Delta \in \mathcal{N}}|\overline{F} (\varepsilon, X, \Delta ,P)|) \notag\\
&\leq \frac{1}{2} \sum_{\Delta \in \mathcal{N}} \mathbb{E}^{X,P}\mathop{\exp}(8\lambda |F(\varepsilon,W,X,\Delta,P) - \overline{F} (\varepsilon, X, \Delta,P)|)  \notag\\
&+\frac{1}{2}\sum_{\Delta \in \mathcal{N}} \mathbb{E}^{X,P}\mathop{\exp}(8\lambda |\overline{F} (\varepsilon, X, \Delta,P)|) \label{A.12}
\end{align}

The left and right term in inequality \eqref{A.12} could be bound separately. For the left term of \eqref{A.12}

\begin{align}
&\frac{1}{2} \sum_{\Delta \in \mathcal{N}} \mathbb{E}^{X,P}\mathop{\exp}(8\lambda |F(\varepsilon,W,X,\Delta,P) - \overline{F} (\varepsilon, X, \Delta,P)|) \notag \\
& \leq \frac{1}{2} \sum_{\Delta \in \mathcal{N}} \mathbb{E}^{X,P}\mathop{\exp}(8\lambda \frac{1}{p}\sum_{j=1}^{p}\sum_{l=1}^{n}\sum_{i=1}^{m}|(\psi(W_j^T P_i x_l)- \psi(\widetilde{W_j}^T P_i x_l))\Delta_l|) \notag \\
& \leq \frac{1}{2} \sum_{\Delta \in \mathcal{N}} \mathbb{E}^{X,P}\mathop{\exp}(8\lambda \frac{1}{\sqrt{p}}\Vert W -  \widetilde{W}\Vert \sqrt{\sum_{l=1}^{n}\sum_{i=1}^{m}\Vert P_ix_l\Vert^{2} |\Delta_l|^2}) \notag \\
& \leq \frac{1}{2} \sum_{\Delta \in \mathcal{N}} \mathbb{E}^{X,P}\mathop{\exp}(8\lambda \sqrt{\frac{3nm}{p}} \Vert \sqrt{k}W - \sqrt{k} \widetilde{W}\Vert)\notag 
\end{align}
where the last inequality holds under the event ${\sum_{l=1}^{n}\sum_{i=1}^{m} \Vert P_ix_l \Vert ^2 \leq 3mnk}$, which indicates that $F(\varepsilon,W,X,\Delta,P)$ is Lipschitz function by definition. By lemma A.3 in \cite{gao2020model}, one could find the sub-Gaussian tail probability of Lipschitz function $\mathbb{P}(|F(\varepsilon,W,X,\Delta,P) - F (\varepsilon,\widetilde{W}, X, \Delta,P)|>t|X,P) \leq 2 \mathop{\exp} \left(\frac{-pt^2}{6nm}\right) $ for any $t > 0$. Given this sub-Gaussian tail probability, the bound of left term could be expressed, by lemma 5.5 of \cite{Vershynin2010IntroductionTT}, as

\begin{align}
&\frac{1}{2} \sum_{\Delta \in \mathcal{N}} \mathbb{E}^{X,P} \mathop{\exp} (8\lambda \left|F(\varepsilon,W,X,\Delta,P) - \overline{F} (\varepsilon, X, \Delta,P)\right|) \notag\\
&\leq \frac{1}{2} \sum_{\Delta \in \mathcal{N}} \mathop{\exp} (C_{1}\frac{nm}{p} \lambda^{2}) \label{a.14}
\end{align}
for some constant $C_1 > 0$. On the other hand, the bound of right term of \eqref{A.12} is 

\begin{align}
& \frac{1}{2}\sum_{\Delta \in \mathcal{N}} \mathbb{E}^{X,P}\mathop{\exp}(8\lambda |\overline{F} (\varepsilon, X, \Delta,P)|) \notag\\
& \leq \frac{1}{2}\sum_{\Delta \in \mathcal{N}}\mathbb{E}^{X,P} \mathop{\exp}(8\lambda\left|\frac{1}{p}\sum_{j=1}^{p} \varepsilon_j \right|\sqrt{\sum_{l=1}^{n}\sum_{i=1}^{m}\mathbb{E}^{X,P}\left|\psi(W^TP_iX_l)\right|^2})\notag\\
& \leq \frac{1}{2}\sum_{\Delta \in \mathcal{N}}\mathbb{E}^{X,P}\mathop{\exp}(8\lambda\left|\frac{1}{p}\sum_{j=1}^{p} \varepsilon_j \right|\sqrt{\sum_{l=1}^{n}\sum_{i=1}^{m}\frac{1}{k}\Vert P_iX_l\Vert^2}) \notag \\
& \leq \frac{1}{2}\sum_{\Delta \in \mathcal{N}}\mathbb{E}^{X,P}\mathop{\exp}(8\lambda\sqrt{3nm}\left|\frac{1}{p}\sum_{j=1}^{p} \varepsilon_j \right|) \notag\\
& \leq \frac{1}{2}\sum_{\Delta \in \mathcal{N}}\exp{(C_{1}\frac{nm}{p} \lambda^{2})}\label{a.15}
\end{align}
where the second last inequality holds under the event ${\sum_{l=1}^{n}\sum_{i=1}^{m} \Vert P_ix_l \Vert ^2 \leq 3mnk}$ and the last inequality holds with an application of Hoeffding-type inequality
(Lemma 5.9 of \cite{Vershynin2010IntroductionTT}). Note \eqref{a.14} and \eqref{a.15} could share the same $C_1$ when it is sufficiently large. Plugging two moment generating function bounds  \eqref{a.14} and \eqref{a.15} into \eqref{A.12} and then applying union bound, one could obtain

\begin{align}
&\mathbb{E}^{X,P}\exp(\lambda\underset{||\varDelta||=1}{\sup}\left|f(W,X,\varDelta,P)-g(X,\varDelta,P)\right|\notag\\
&\leq \sum_{\Delta \in \mathcal{N}}\exp{(C_{1}\frac{nm}{p} \lambda^{2})}\leq\exp(C_{1}\frac{nm}{p}\lambda^{2}+2n)\label{a.16}
\end{align}

Given the above expectation bound \eqref{a.16}, we apply Chernoff bound to obtain the sub-Gaussian tail probability of \eqref{eq:A.2} $\mathbb{P}(\underset{||\varDelta||=1}{\sup}\left|f(W,X,\varDelta,P)-g(X,\varDelta,P)\right| > t)\leq\exp{(C_{1}\frac{nm}{p} \lambda^{2}+2n-\lambda t)}$. Optimizing over $\lambda$ and setting $t\asymp \sqrt{\frac{m^2n^2}{p}}$ in this sub-Gaussian tail probability, we have

$$\underset{||\varDelta||=1}{\sup}\left|f(W,X,\varDelta,P)-g(X,\varDelta,P)\right|\lesssim \sqrt{\frac{m^2n^2}{p}}$$

with high probability.\\

\textbf{Analysis of \ref{eq:A.3}} We use the same discretization argument as in analysis of \ref{eq:A.2}. And we reach similar conclusion , by Lemma 5.2 of \cite{Vershynin2010IntroductionTT}, that for any $\Delta \in S^{n-1}$, there exists a $\Delta^{'} \in \mathcal{N} $ that satisfies $\Vert \Delta - \Delta^{'} \Vert \leq \zeta$, and we also have the bound $log |\mathcal{N} | \leq n \log(1+2/\zeta)$. In \ref{eq:A.3}, $\mathop{\mathbb{E}}f(W,X,\varDelta,P)$ is equal to another form $\mathop{\mathbb{E}}g(X,\Delta,P)$ by law of total expectation. Since $X,P$ are fixed in pre-condition of $g(X,\Delta,P)$, we bound \ref{eq:A.3} by finding its superior $\mathop{\sup}_{\Vert \Delta \Vert =1}  \left| g(X,\Delta,P)-\mathop{\mathbb{E}}g(X,\Delta,P) \right|$ upper bound

\begin{align}
&\mathop{\sup}_{\Vert \Delta \Vert =1}  \left| g(X,\Delta,P)-\mathop{\mathbb{E}}g(X,\Delta,P) \right|\notag\\
&\leq \mathop{\sup}_{\Vert \Delta \Vert =1}\left| g(X,\Delta^{'},P)-\mathbb{E}g(X,\Delta^{'},P) \right| \notag\\
&+ \zeta \mathop{\sup}_{\Vert \Delta \Vert =1} \left| g(X,\Delta ,P)-\mathbb{E} g (X,\Delta,P) \right| \notag\\
&+ 2\zeta \mathop{\sup}_{\Vert \Delta \Vert =1} \mathbb{E}g(X,\Delta ,P) \notag \\
& \leq (1-\zeta)^{-1} \max_{\Delta\in \mathcal{N}_{\zeta}} \left| g(X,\Delta ,P)-\mathbb{E}g(X,\Delta,P) \right| \notag\\
&+ 2\zeta(1-\zeta)^{-1}\mathop{\sup}_{\Vert \Delta \Vert =1}\mathbb{E} g(X,\Delta,P) \label{A.13}
\end{align}

The left and right terms of inequality \eqref{A.13} could be bounded separately. For left terms of \eqref{A.13}

\begin{align*}
&|g(X, \Delta, P)-g(\widetilde{X}, \Delta, \widetilde{P})| \\
& \leq \mathbb{E}^{X,P}\left|\sum_{s=1}^n \sum_{a=1}^m\sum_{b=1}^m\left(\psi\left(W_j^T P_a x_s\right)-\psi\left(W_j^T \widetilde{P_a}\widetilde{{x}_i}\right)\right) \Delta_s\right| \\
& \leq \sqrt{\sum_{s=1}^n \sum_{a=1}^m\mathbb{E}^{X,P}\left(W_j^T \left(P_a x_s-\widetilde{P}_a\widetilde{x}_i\right)\right)^2} \\
& =\frac{ 1}{\sqrt{k}} \sqrt{\sum_{s=1}^n\sum_{a=1}^m\left\|P_a x_s-\widetilde{P}_a\widetilde{x}_i\right\|^2 }
\end{align*}
for any $X, \widetilde{X}, P, \widetilde{P}$. Therefore, we conclude $g(\widetilde{X},\widetilde{P}, \Delta)$ is Lipschitz function. By \cite{Cirelson1976NormsOG} and union bound, we obtain the sub-Gaussian tail probability $$\mathbb{P}\left( \max_{\Delta\in \mathcal{N}_{\zeta}} \left| g(X,\Delta,P)-\mathbb{E}g(X,\Delta,P) \right|> t\right ) $$ $$\le 2\exp \left( -\frac{k t^2}{2} + n\log(1+\frac{2}{\zeta})\right )$$ for any $ t > 0$. This tail probability implies the left term of inequality $\eqref{A.13}$ is bounded by

\begin{align}
\max_{\Delta\in \mathcal{N}_{\zeta}} \left| g(X,\Delta,P)-\mathbb{E}g(X,\Delta,P) \right| \lesssim \sqrt{\frac{n\log(1+2/\zeta)}{k}} \label{a.21} 
\end{align}
with high probability. For the right terms of inequality $\eqref{A.13}$, we have

\begin{align}
\mathbb{E} g(X,\Delta,P)  \le \sqrt{\mathbb{E}\sum_{l=1}^{n}\sum_{i=1}^{m}|\psi(W^T P_ix_l)|^2} \le \sqrt{mn}\label{a.22}     
\end{align}

Plugging \eqref{a.21} and \eqref{a.22} into \eqref{A.13}, we obtain

$$\sup_{\Vert \Delta \Vert  = 1} |g(X,\Delta,P)-\mathop{\mathbb{E}}f(W,X,\varDelta,P)| \lesssim \sqrt{\frac{n\log\left(1+\frac{2}{\zeta}\right)}{k}}+ \sqrt{m n}\zeta $$
with high probability as long as $\zeta \leq \frac{1}{2}$. We choose $\zeta = \frac{c}{\sqrt{mn}}$ with a sufficiently small constant $c > 0$,and thus the bound is sufficiently small as long as  $\frac{n \log mn}{k} $ is suffiently small.

To prove the equation \ref{condition 3_} of lemma \ref{4.4}, we establish the following stronger result.\\

\emph{Proof of lemma \ref{lemma a.6}.} 
Define $\widetilde{G} \in \mathbb{R}^{n\times n} $  with entries $\widetilde{G}_{sl} = \mathbb{E}(\psi(W^{T}Px_{s})\psi(W^{T}Px_{l})|X,P) $ Note that
\begin{align*}
  &\mathbb{E}(G_{sl}-\widetilde{G}_{sl})^{2}=\mathbb{E}\textbf{Var}(G_{sl}|X,P)\\
  &\le \frac{1}{p}\mathbb{E}|(\psi(W^{T}Px_{s}))(\psi(W^{T}Px_{l}))|^{2}=\frac{3m^2}{2p}\mathbb{E}(\Vert  W \Vert^{4})\le \frac{5m^2}{p}  
\end{align*}

We then have the difference between $G$ and $\widetilde{G}$ . 

$$\mathbb{E}(\Vert G - \widetilde{G} \Vert^{2}_{op})\le \mathbb{E}(\Vert G - \widetilde{G} \Vert^{2}_{F})\le \frac{5n^{2}m^{2}}{p}$$

By Markov's inequality,

\begin{align}\label{eq:a.22}
\Vert G - \widetilde{G} \Vert^{2}_{op}\lesssim \frac{n^{2}m^{2}}{p}
\end{align}

with high probability.\\

Next, for any $s\in [n],a\in[m]$ ,$$\widetilde{G}_{ss}=\mathbb{E}(\psi(W^{T}Px_{s}))^2|X,P)=\frac{\Vert Px_{s} \Vert^{2}}{2k}.$$By \cite{Laurent2000AdaptiveEO} and a union bound argument, we have

\begin{align}\label{eq:a.23}
 max_{1\le s \le n} |\widetilde{G}_{ss}- \bar{G}_{ss}| \lesssim \sqrt{\frac{log mn}{k}}
\end{align}
with high probability.\\

Now we analyze the entries that are off-diagonal . Using the notation $\overline{Px}_{s}= \frac{\sqrt{k}Px_{s}}{\Vert Px_{s}\Vert}$, for any $ s \ne l$, we have

\begin{align}
&\widetilde{G}_{sl}\notag\\
&=\mathbb{E}(\psi(W^{T}\overline{Px}_{s})\psi(W^{T}\overline{Px}_{l})|X,P)\label{A.15}\\
&+\mathbb{E}((\psi(W^{T}P{x}_{s})-\psi(W^{T}\overline{Px}_{s}))\psi(W^{T}\overline{Px}_{l})|X,P)\label{A.16}\\
&+\mathbb{E}((\psi(W^{T}\overline{Px}_{s})(\psi(W^{T}P{x}_{l})-\psi(W^{T}\overline{Px}_{l}))|X,P)\label{A.17}\\
&+\mathbb{E}((\psi(W^{T}P{x}_{s})-\psi(W^{T}\overline{Px}_{s}))\times\notag\\
&(\psi(W^{T}P{x}_{l})-\psi(W^{T}\overline{Px}_{l}))|X,P)\label{A.18}
\end{align}

Since 
\begin{align*}
&cov(\psi(W^{T}\overline{Px}_{s})\psi(W^{T}\overline{Px}_{l}))\\&=\mathbb{E}[(\psi(W^{T}\overline{Px}_{s})-0)(\psi(W^{T}\overline{Px}_{l})-0)]\\&=\mathbb{E}(W^T\overline{Px}_{s}\overline{Px}_{l}^{T}W)\\
&=tr(\overline{Px}_{s}\overline{Px}_{l}^{T}\frac{I_{k}}{k})=\frac{\overline{Px}_{s}^{T}\overline{Px}_{l}}{k}=\rho
\end{align*}

and $W^{T}\overline{Px}_{s}, W^{T}\overline{Px}_{l}$ is equivalent to $\sqrt{1-\rho}U+\sqrt{\rho}Z,\sqrt{1-\rho}V+\sqrt{\rho}Z$ when $\rho \geq 0$ and similar argument for $ \rho < 0$ with $U,V,Z \stackrel{\text{iid}}{\sim} N(0,1)$.\\

Observing the first term on the right hand side of \ref{A.15}, we can have the fact that $E(\psi(W^{T}\overline{Px}_{s})\psi(W^{T}\overline{Px}_{l})|X,P)$ is a function of $\frac{\overline{Px}_{s}^T\overline{Px}_{l}}{k}$ , and thus we can write

\begin{align*}
E(\psi(W^{T}\overline{Px}_{s})\psi(W^{T}\overline{Px}_{l})|X,P)=f(\frac{\overline{Px}_{s}^T\overline{Px}_{l}}{k})
\end{align*}
where

\begin{align*}
&f(\rho)= \\
&\begin{cases}
\mathbb{E}\psi(\sqrt{1-\rho}U+\sqrt{\rho}Z)\psi(\sqrt{1-\rho}V+\sqrt{\rho}Z),\quad & \rho\geq 0 \\
\mathbb{E}\psi(\sqrt{1+\rho}U-\sqrt{-\rho}Z)\psi(\sqrt{1+\rho}V+\sqrt{-\rho}Z),\quad & \rho <0  
\end{cases}  
\end{align*}

Besides, we have $f(0)= \frac{1}{2\pi},f^{'}(0)=\frac{1}{4}$, and $sup_{{|\rho|}\leq 0.2} \frac{|f^{'}(\rho)-f^{'}(0)|}{|\rho|} \lesssim 1$. Therefore, as long as $|\rho| \leq \frac{1}{5}$,

$$|f(\rho)-\frac{1}{2\pi}-\frac{1}{4}\rho|\le C_{1}|\rho|^{2},$$
for some constant $C_{1}>0$. By \cite{Laurent2000AdaptiveEO}, we know that $max_{s \ne l} |\frac{\overline{Px}_{s}^{T}\overline{Px}_{l}}{k}| \lesssim \sqrt{\frac{logmn}{k}} \le \frac{1}{5}$ with high probability, which then implies 
$$\sum_{s\ne l}(\mathbb{E}(\psi(W^T\overline{Px}_{s})\psi(W^T\overline{Px}_{l})|X)-\bar{G}_{sl})^2 \le C_{1}\sum_{s\ne l}|\frac{\overline{Px}_{s}^{T}\overline{Px}_{l}}{k}|^4$$

The term on the right hand side has been analyzed before, and we have $\sum_{s\ne l}|\frac{\overline{Px}_{s}^{T}\overline{Px}_{l}}{k}|^4 \lesssim \frac{n^2}{k^2}$ with high probability.

We also need to analyze the contributions of \ref{A.16} and \ref{A.17}. Observe the fact that $\mathbb{I}\{W^T Px_{s}\ge 0\}= \mathbb{I}\{W^{T}\overline{Px}_{s}\ge 0\}$ , which implies 

\begin{align}
    &\psi(W^T P x_s) - \psi(W^T\overline{Px}_{s}) \notag\\&= W^T (Px_{s}-\overline{Px}_{s}) \mathbb{I}\{W^T\overline{Px}_{s} \ge 0\}  \notag\\&= (\frac{\Vert P x_{s}\Vert }{\sqrt{k}}-1) \psi(W^T\overline{Px}_{s})\label{eq:28}
\end{align}

Then, the sum of\ref{A.16} and \ref{A.17} can be written as 

$$
(\frac{\Vert Px_s \Vert}{\sqrt{k}} -1 + \frac{\Vert P x_l\Vert}{\sqrt{k}} -1 )f(\frac{\overline{Px}_{s}^{T}\overline{Px}_{l}}{k})
$$

Note that 

\begin{align*}
 &\sum_{s \ne l}(\frac{\Vert Px_s \Vert}{\sqrt{k}} -1 + \frac{\Vert P x_l\Vert}{\sqrt{k}} -1)^2[f(\frac{\overline{Px}_{s}^{T}\overline{Px}_{l}}{k})- \frac{1}{2\pi}]^2 \\&\lesssim \sum_{s \ne l}(\frac{\Vert P x_s \Vert}{\sqrt{k}} -1 + \frac{\Vert P x_l\Vert}{\sqrt{k}} -1)^4 + \sum_{s\ne l}| \frac{\overline{Px}_{s}^{T}\overline{Px}_{l}}{k}| ^4
\end{align*}

We have already shown that $\sum_{s\ne l}| \frac{\overline{Px}_{s}^{T}\overline{Px}_{l}}{k}| ^4 \lesssim \frac{n^2}{k^2}$ with high probability. By integrating out the probability tail bound of \cite{Laurent2000AdaptiveEO}, we have $\mathbb{E}(\frac{\Vert P x_s\Vert}{\sqrt{k}} -1 )^4 \lesssim k^{-2}$ , which then implies

$$\mathbb{E}(\frac{\Vert P x_s\Vert}{\sqrt{k}} -1 + \frac{\Vert P x_l\Vert}{\sqrt{k}} -1)^4 \lesssim \frac{n^2}{k^2}$$
and the corresponding high-probability bound by Markov's inequality.

Finally, we show that the contribution of \ref{A.18} is negligible. By \ref{eq:28}, we can write \ref{A.18}  as

$$(\frac{\Vert P x_s\Vert}{\sqrt{k}} -1)(\frac{\Vert P x_l\Vert}{\sqrt{k}} -1)\mathbb{E}(\psi(W^{T}\overline{Px}_{s})^2 \psi(W^{T}\overline{Px}_{l})^2 |X),$$
whose absolute value can be bounded by $\frac{3}{2}|\frac{\Vert P x_s\Vert}{\sqrt{k}} -1||\frac{\Vert P x_l\Vert}{\sqrt{k}} -1|$. Since
$$
\sum_{s\ne l}\mathbb{E}(\frac{\Vert P x_s\Vert}{\sqrt{k}} -1)^2\mathbb{E}(\frac{\Vert P x_l\Vert}{\sqrt{k}} -1)^2 \lesssim \frac{n^2}{k^2},
$$
we can conclude that \ref{A.18} is bounded by $O(\frac{n^2}{k^2})$ with high probability by Markov's inequality.\

Combining the analyses of \ref{A.15} ,\ref{A.16} , \ref{A.17} and \ref{A.18}, we conclude that $\sum_{s \ne l}(\bar{G}_{sl}-\widetilde{G}_{sl})^2 \lesssim \frac{n^2}{k^2}$ with high probability. Together with \ref{eq:a.22} and \ref{eq:a.23}, we obtain the desired bound for $\Vert G-\bar{G} \Vert_{op}$.

To prove the last conclusion $\Vert \bar{G}\vert \lesssim mn$, it suffices to analyze $\lambda_{max}(\bar{G})$. We bound this quantity by $\mathbb{E} \lambda_{max}(\bar{G})^2 \le \mathbb{E}\vert G \vert _{F}^2 \lesssim m^2n^2$, which leads to the desired conclusion.

\emph{Proof of Lemma \ref{4.5}.}
 Since $\hat{\beta}$ belongs to the row space of $\sum_{i=1}^{m}  \psi(W_{j}^{T}(t-1)P_{i} \mathbf{x}_s)$ 
 there exists some $v^* \in \mathbb{R}^{n}$ such that $\widehat{\beta}=A_{\beta}^T v^*$. By  Lemma  \ref{4.1} and Lemma \ref{4.4}, we know that $\widetilde{v}=v^*$ with high probability, and therefore $\widetilde{\beta}={A_{\beta}}^T \widetilde{v}={A_{\beta}}^T v^*=\widehat{\beta}$.

\subsection{Additional proof of Theorem 1.} 
\begin{proof}
We need the following kernel random matrix result to prove theorem 1.

\begin{lemma}\label{A.7.}
Consider independent parameters $\beta_1,...,\beta_p \sim N(0,1)$. We define the matrices $H,\bar{H}$ by 

$$H_{s l} = \frac{(P x_s)^T P x_l}{k}\frac{1}{p}\sum_{j=1}^p\beta_j^2 \mathbb{I}\{W_j^T P x_s \geq 0, W_j^T P x_l \geq 0\}$$
$$\bar{H} _{s l} = \frac{1}{4}\frac{(P x_s)^T P x_l}{||P x_s|| ||P x_l||} + \frac{1}{4} \mathbb{I}\{s=l\}$$
where $Px_{s} = \sum_{a=1}^m P_ax_s$ and $Px_{l} = \sum_{b=1}^m P_bx_l.$

Assume $k/ log mn$ is sufficiently large, and then

$$||H-\bar{H}||^2_{op}\lesssim \frac{m^2n^2}{pk}+\frac{m^2n}{p}+\frac{log mn}{k}+\frac{n^2}{k^2}$$
with high probability. If we additionally assume that $\frac{k}{n}$ and $\frac{p}{m^2 n}$ are sufficiently large, we will also have

$$\frac{1}{5} \leq \lambda_{min}(H) \leq \lambda_{max}(H) \lesssim 1$$

with high probability.

\end{lemma}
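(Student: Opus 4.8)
The plan is to mirror the strategy used for the kernel matrix $G$ in the proof of Lemma~\ref{lemma a.6}, introducing an intermediate matrix that separates the two sources of randomness in $H$, namely the coefficients $\beta_j,W_j$ and the data $X,P$. Define $\widetilde{H}\in\mathbb{R}^{n\times n}$ by taking the conditional expectation over $\beta$ and $W$,
\[
\widetilde{H}_{sl}=\mathbb{E}(H_{sl}\mid X,P)=\frac{(Px_s)^T Px_l}{k}\,\mathbb{P}\!\left(W^T Px_s\ge 0,\,W^T Px_l\ge 0\mid X,P\right),
\]
where I have used $\mathbb{E}\beta_j^2=1$ and that $\beta,W$ are independent of $X,P$. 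By the triangle inequality it suffices to bound $\|H-\widetilde{H}\|_{\mathrm{op}}$ and $\|\widetilde{H}-\bar{H}\|_{\mathrm{op}}$ separately; I expect the first to generate the terms $\frac{m^2n^2}{pk}+\frac{m^2n}{p}$ and the second to generate $\frac{\log mn}{k}+\frac{n^2}{k^2}$, exactly as the analogous split did for $G$.

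For $\|H-\widetilde{H}\|_{\mathrm{op}}$ I would bound $\mathbb{E}\|H-\widetilde{H}\|_F^2=\sum_{s,l}\mathbb{E}\,\mathrm{Var}(H_{sl}\mid X,P)$. Since $\tfrac{(Px_s)^T Px_l}{k}$ is fixed given $X,P$ and $H_{sl}$ averages $p$ i.i.d.\ copies of $\beta_j^2\,\mathbb{I}\{\cdot\}$, each conditional variance is at most $\tfrac{1}{p}\big(\tfrac{(Px_s)^T Px_l}{k}\big)^2\,\mathbb{E}\beta^4\lesssim \tfrac{1}{p}\big(\tfrac{(Px_s)^T Px_l}{k}\big)^2$. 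Taking expectations with the chi-squared moment identities $\mathbb{E}(\|Px_s\|^2/k)^2\asymp m^2$ on the diagonal and $\mathbb{E}((Px_s)^T Px_l/k)^2\asymp m^2/k$ off the diagonal yields the diagonal contribution $\tfrac{m^2 n}{p}$ and the off-diagonal contribution $\tfrac{m^2 n^2}{pk}$. Markov's inequality together with $\|\cdot\|_{\mathrm{op}}\le\|\cdot\|_F$ then gives the first two terms with high probability.

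For $\|\widetilde{H}-\bar{H}\|_{\mathrm{op}}$ I would use the quadrant (arcsine) probability $\mathbb{P}(W^T Px_s\ge0,\,W^T Px_l\ge0)=\tfrac14+\tfrac{1}{2\pi}\arcsin\rho_{sl}$ with $\rho_{sl}=(Px_s)^T Px_l/(\|Px_s\|\|Px_l\|)$, which is valid because the indicators depend only on the directions of $Px_s,Px_l$. On the diagonal $\rho_{ss}=1$ gives $\widetilde{H}_{ss}=\tfrac12\|Px_s\|^2/k$, so $|\widetilde{H}_{ss}-\bar{H}_{ss}|=\tfrac12|\|Px_s\|^2/k-1|\lesssim\sqrt{\log mn/k}$ uniformly by a chi-squared tail bound and a union bound; bounding the diagonal part of $\widetilde{H}-\bar{H}$ by its largest entry produces the $\tfrac{\log mn}{k}$ term. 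Off the diagonal, writing $(Px_s)^T Px_l/k=a_sa_l\rho_{sl}$ with $a_s=\|Px_s\|/\sqrt{k}$ and Taylor-expanding $\arcsin$, the entry $\widetilde{H}_{sl}-\bar{H}_{sl}$ splits into a piece $\tfrac14\rho_{sl}(a_sa_l-1)$ and a piece of order $\rho_{sl}^2$; using $\max_{s\ne l}|\rho_{sl}|\lesssim\sqrt{\log mn/k}$, $\sum_{s\ne l}\rho_{sl}^4\lesssim n^2/k^2$, and $\mathbb{E}(a_s-1)^4\lesssim k^{-2}$ (all from the tail bounds of \cite{Laurent2000AdaptiveEO} as in Lemma~\ref{lemma a.6}), a Cauchy--Schwarz estimate gives the Frobenius bound $\sum_{s\ne l}(\widetilde{H}_{sl}-\bar{H}_{sl})^2\lesssim n^2/k^2$, hence the $\tfrac{n^2}{k^2}$ term. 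Combining the four pieces delivers the claimed operator-norm bound.

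For the eigenvalue bounds, under the extra hypotheses $k/n$ and $p/(m^2n)$ large every term above is $o(1)$, so $\|H-\bar{H}\|_{\mathrm{op}}$ is small and I would conclude by Weyl's inequality using spectral information on $\bar{H}$. The key observation is $\bar{H}=\tfrac14 R+\tfrac14 I$, where $R_{sl}=\rho_{sl}$ is the Gram matrix of the unit vectors $Px_s/\|Px_s\|$ and is therefore positive semidefinite; thus $\lambda_{\min}(\bar{H})\ge\tfrac14$ and $\lambda_{\min}(H)\ge\tfrac14-o(1)\ge\tfrac15$. For the upper bound, $\lambda_{\max}(\bar{H})=\tfrac14\lambda_{\max}(R)+\tfrac14$, and since $R$ is the Gram matrix of $n$ directions of independent Gaussian vectors in $\mathbb{R}^k$, $\lambda_{\max}(R)\lesssim 1$ when $k/n$ is large by the Gaussian operator-norm estimate of \cite{Davidson2001Chapter8}, giving $\lambda_{\max}(H)\lesssim 1$. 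The main obstacle I anticipate is precisely this upper bound: the crude Frobenius estimate $\mathbb{E}\lambda_{\max}(\bar{H})^2\le\mathbb{E}\|\bar{H}\|_F^2$ that sufficed for $G$ only yields $\lambda_{\max}\lesssim n$, which is far too weak, so obtaining the sharp order $\lambda_{\max}(H)\lesssim 1$ forces me to exploit the exact structure $\bar{H}=\tfrac14(R+I)$ and a near-optimal operator-norm bound for the Gram matrix of random directions, which is exactly where the assumption $k/n$ large is consumed.
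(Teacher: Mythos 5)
Your proof follows the paper's argument essentially step for step: the same intermediate matrix $\widetilde{H}_{sl}=\mathbb{E}(H_{sl}\mid X,P)$, the same conditional-variance/Frobenius/Markov bound giving $\tfrac{m^2n^2}{pk}+\tfrac{m^2n}{p}$, and the same diagonal/off-diagonal treatment of $\widetilde{H}-\bar{H}$ via chi-squared tails and a second-order Taylor control of the quadrant probability (the paper writes $f(\rho)=\rho\,\mathbb{E}\Phi(\sqrt{\rho/(1-\rho)}Z)^2$ and bounds $f''$, which is the same function and the same expansion as your explicit arcsine law). The only place you go beyond what is written is the eigenvalue claim, which the paper's appendix does not actually prove; your completion via $\bar{H}=\tfrac14(R+I)$ with $R$ a positive semidefinite Gram matrix, Weyl's inequality, and a Davidson--Szarek operator-norm bound for $\lambda_{\max}(R)$ is a sound way to close that gap, and you are right that the crude Frobenius estimate the paper uses for $\lambda_{\max}(\bar{G})$ in Lemma~\ref{lemma a.6} would be too weak here.
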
 

\emph{ Proof of lemma \ref{A.7.}.}
    Define $\tilde{H} \in \mathbb{R}^{n \times n}$ with entries $\tilde{H} _{s l}= \frac{(P x_s)^T(P x_l)}{k}\mathbb{E}(\beta^2\mathbb{I}\{W^T P x_s \geq 0 , W^T P x_l\geq 0\}|X)$  and we first bound the difference between $H$ and $\tilde{H}$. Note that

$$\mathbb{E}(H_{s l} - \tilde{H}_{s l})^2 = \mathbb{E} Var(H_{s l}|X) $$ 
$$ \leq \frac{1}{p} \mathbb{E}(\frac{|(P x_s)^T (P x_l)|^2}{k^2} \beta^4 ) \leq  \begin{cases}
      \frac{3m^2}{pk} & \text{s $\neq$ l}\\
      \frac{9m^2}{p} & \text{s $=$ l }\\
      \end{cases} $$

We then have

$$\mathbb{E}||H - \tilde{H}||_{op}^2 \leq \mathbb{E}||H - \tilde{H}||_{F}^2 \leq \frac{3 (mn)^2}{pk} + \frac{9 m^2n}{p}$$

By Markov's inequality

$$||H - \tilde{H}||_{op}^2 \lesssim \frac{(mn)^2}{pk} + \frac{m^2n}{p}$$
with high probability

Next, we study the diagonal entries of $\tilde{H}$. For any $s \in [n]$ and $a \in [m]$, $\tilde{H}_{s s} = \frac{\Vert P x_s\Vert^2}{k}\mathbb{E}(\beta^2\mathbb{I}\{W^T P x_s \geq 0\}|X) = \frac{\Vert P x_s\Vert^2}{2k}$. The same analysis that leads to the bound \eqref{eq:a.23} also implies that

$$\underset{1 \leq i \leq n}{max}|\tilde{H} - \bar{H}| \lesssim \sqrt{\frac{log mn}{k}}$$
with high probability.

Now we analyze the off-diagonal entries. Recall the notation $\overline{P x}_s = \frac{\sqrt{k} P x_s}{||P x_s||}$. For any $s \neq l$, we have $$\tilde{H}_{s l} = \frac{||P x_s|| ||P x_l||}{k}\frac{\overline{P x}_s^T \overline{P x}_l}{k} \mathbb{P}(W^T \overline{P x}_s \geq 0, W^T \overline{P x}_l\geq 0| X)$$
$$= \frac{\overline{P x}_s^T \overline{P x}_l}{k} \mathbb{P}(W^T \overline{P x}_s \geq 0, W^T \overline{P x}_l\geq 0| X) $$
$$+ ( \frac{||P x_s|| ||P x_l||}{k} - 1) \frac{\overline{P x}_s^T \overline{P x}_l}{k} \mathbb{P}(W^T \overline{P x}_s \geq 0, W^T \overline{P x}_l\geq 0| X)$$

Since $\mathbb{P}(W^T \overline{P x}_s \geq 0, W^T \overline{P x}_l \geq 0| X)$ is a function of $\frac{\overline{P x}_s^T \overline{P x}_l}{k}$, we can write 

$$\frac{\overline{P x}_s^T \overline{P x}_l}{k} \mathbb{P}(W^T\overline{P x}_s \geq 0, W^T \overline{P x}_l \geq 0| X) = f(\frac{\overline{P x}_s^T \overline{P x}_l}{k})$$
where for $\rho \geq 0$

$$f(\rho) = \rho \mathbb{P}(\sqrt{1 - \rho}U + \sqrt{\rho}Z \geq 0, \sqrt{1 - \rho}V + \sqrt{\rho}Z \geq 0)$$
$$= \rho \mathbb{E}\mathbb{P}(\sqrt{1 - \rho}U + \sqrt{\rho}Z \geq 0, \sqrt{1 - \rho}V + \sqrt{\rho}Z \geq 0|Z)$$
$$= \rho \mathbb{E} \Phi(\sqrt{\frac{\rho}{1-\rho}}Z)^2$$
with $U,V,Z \stackrel{\text{iid}}{\sim} N(0,1)$ and $\Phi$  being the cumulative distribution function $N(0,1)$. Similarly, for $\rho < 0$,
$$f(\rho) = \rho\mathbb{E}[\Phi(\sqrt{\frac{\rho}{1-\rho}}Z)(1-\Phi(\sqrt{\frac{\rho}{1-\rho}}Z))]$$

By some direct calculations, we have $f(0) = 0$, $f'(0) = \frac{1}{4}$ , and

$$\underset{|\rho|\leq \frac{1}{5}}{sup}|f''(\rho)|\lesssim \underset{|t|\leq \frac{1}{2}}{sup}|\mathbb{E}\phi(tZ)\Phi(tZ)Z/t|+\underset{|t|\leq \frac{1}{2}}{sup}|\mathbb{E}\phi(tZ)Z/t|$$
where $\phi(x) = (2\pi)^{-1/2}e^{-x^2/2}$. For any $|t| \leq 1/2$,

$$|\mathbb{E}\phi(tZ)Z/t| = |\mathbb{E}\frac{\phi(tZ)-\phi(0)}{tZ}Z^2| = |\mathbb{E}\xi\phi(\xi)Z^2|\leq \frac{|t|}{\sqrt{2\pi}}\mathbb{E}|Z|^3\lesssim 1$$
where $\xi$ is a scalar between 0 and $tZ$ so that $|\xi|\leq |tZ|$. By a similar argument, we also have $\underset{|t|\leq \frac{1}{2}}{sup}|\mathbb{E}\phi(tZ)\Phi(tZ)Z/t|\lesssim 1$ so that $\underset{|\rho|\leq \frac{1}{5}}{sup}|f''(\rho)|\lesssim 1$. Therefore, as long as $\frac{\overline{P x}_s^T \overline{P x}_l}{k}\leq 1/5$,

$$|f(\frac{\overline{P x}_s^T \overline{P x}_l}{k})-\frac{1}{4}\frac{\overline{P x}_s^T \overline{P x}_l}{k}|\leq C_1|\frac{\overline{P x}_s^T \overline{P x}_l}{k}|^2$$
for some constant $C_1 > 0$. And we know that $max_{s \neq l} \frac{\overline{P x}_s^T\overline{P x}_l}{k}\lesssim \sqrt{\frac{log mn}{k}} \leq 1/5$ with high probability. We then have the high probability bound,

\begin{align}
    &\sum_{s \neq l }(\tilde{H_{s l}} - \frac{1}{4}\frac{\overline{P x}_s^T \overline{P x}_l}{k})^2 \leq 2\sum_{s \neq l} (\frac{||\overline{P x}_s|| ||\overline{P x}_l||}{k}-1)^2|\frac{\overline{P x}_s^T \overline{P x}_l}{k}|^2 \notag\\
    &+ 2C_1\sum_{s \neq l}|\frac{\overline{P x}_s^T \overline{P x}_l}{k}|^4 
   \leq \sum_{s \neq l}(\frac{||\overline{P x}_s|| ||\overline{P x}_l||}{k}-1)^4\notag\\ &+(2C+1)\sum_{s \neq l}|\frac{\overline{P x}_s^T \overline{P x}_l}{k}|^4\label{A.19}
\end{align}

For the first term on the right hand side of \ref{A.19}, we use  a probability tail bound. By integrating out this tail bound, we have

$$\sum_{s \neq l}\mathbb{E}(\frac{||\overline{P x}_s|| ||\overline{P x}_l||}{k}-1)^4 \lesssim \frac{n^2}{k^2}$$
which, by Markov's inequality, implies $\sum_{s \neq l}(\frac{||\overline{P x}_s|| ||\overline{P x}_l||}{k}-1)^4 \lesssim \frac{n^2}{k^2}$ with high probability. Using the same argument in the proof of lemma \ref{lemma a.6}, we have $\sum_{s \neq l }|\frac{\overline{P x}_s^T \overline{P x}_l}{k}|^4$ with high probability.
Finally, combining the bounds , we obtain the desired bound for $||H-\bar{H}||_{op}$. 
The proof is complete.\\

Return to the analysis of $G(k)$ in the main paper and continue with the detailed proof.


%

To analyze $G(k)$, we first bound the distance between $G(k)$ and $G(0)$. Since
$$
\begin{aligned}
&\left|G_{s l}(k)-G_{s l}(0)\right| \\&\leq  \frac{1}{p} \sum_{j=1}^p\sum_{b=1}^{m}\left|\psi\left(W_j(k)^TP_b x_l\right)-\psi\left(W_j(0)^T P_b x_l\right)\right| \\
&+\frac{1}{p} \sum_{j=1}^p \sum_{a=1}^m\left|\psi\left(W_j(k)^T P_a x_s\right)-\psi\left(W_j(0)^T P_a x_s\right)\right| \\
&\leq \frac{1}{p} \sum_{j=1}^p\sum_{b=1}^{m}\left|\left(W_j(k)-W_j(0)\right)^T P_b x_l\right| \\
&+\frac{1}{p} \sum_{j=1}^p \sum_{a=1}^m \left|\left(W_j(k)-W_j(0)\right)^T P_a x_s\right| \\
&\leq  R_W\left(\left\|Px_l\right\|+\left\|Px_s\right\|\right)
\end{aligned}
$$

then, by $\max _{1 \leq s \leq n}\left\|Px_s\right\| \lesssim \sqrt{mk}$,

\begin{align*}
   & \|G(k)-G(0)\|_{\mathrm{op}} \leq \max _{1 \leq l \leq n} \sum_{s=1}^n\left|G_{s l}(k)-G_{s l}(0)\right| \\&\leq 2 R_W mn \max _{1 \leq s \leq n}\left\|Px_s\right\| \lesssim \frac{(mn)^2 \log p}{\sqrt{p}}
\end{align*}

By lemma \ref{lemma a.6} and the fact that G(k) is positive semi-definite, we have
\begin{align}
0\le \lambda_{min}(G(k)) \le \lambda_{max}(G(k)) \lesssim m n.\label{A.28}
\end{align}

We also need to bound the distance between H(k) and H(0). We have

\begin{align}
&|H_{sl}(k) - H_{sl}(0)| \\&\le |\frac{(Px_s)^{T}{Px_{l}}}{k}|\frac{1}{p}\sum_{j=1}^{p} |\beta_{j}(k+1)^2 - \beta_{j}^2(0)|\label{A.29}\\
&+|\frac{(Px_s)^{T}{Px_{l}}}{k}|\frac{1}{p}\sum_{j=1}^{p} \sum_{a=1}^m \beta_{j}(0)^2 |\psi^{'}(W_{j}(k)^{T}P_a x_{s})\notag\\
&- \psi^{'}(W_{j}(0)^{T}P_a x_{s})|\label{A.30}\\
&+|\frac{(Px_s)^{T}{Px_{l}}}{k}|\frac{1}{p}\sum_{j=1}^{p} \sum_{b=1}^{m} \beta_{j}(0)^2 |\psi^{'}(W_{j}(k)^{T}P_b x_{l})\notag\\&- \psi^{'}(W_{j}(0)^{T}P_b x_{l})|\label{A.31}
\end{align}

We can bound \ref{A.29} by $|\frac{(Px_s)^{T}{Px_{l}}}{k}|\frac{1}{p}\sum_{j=1}^{p}R_{\beta}(R_{\beta}+2|\beta_{j}(0)|)$.To bound \ref{A.30}, we note that
\begin{align}
&| \sum_{a=1}^m (\psi^{'}(W_{j}(k)^{T}P_a x_{s})-\psi^{'}(W_{j}(0)^{T}P_a x_{s}))| \notag\\&\le \mathbb{I}\{|W_{j}(0)^{T}Px_{s}|\le |(W_{j}(k)- W_{j}(0))^{T}Px_{s}|\} \notag\\
&\le \mathbb{I}\{|W_{j}(0)^{T}Px_{s}|\le R_{W} \Vert Px_{s} \Vert\}\label{A.32}
\end{align}
which implies

$$
|\frac{{(Px_s)}^{T}{Px_{l}}}{k}|\frac{1}{p}\sum_{j=1}^{p}\sum_{b=1}^{m} \beta_{j}(0)^2 |\psi^{'}(W_{j}(k)^{T}P_b x_{l})- \psi^{'}(W_{j}(0)^{T}P_b x_{l})|
$$

$$
\le |\frac{{(Px_s)}^{T}{Px_{l}}}{k}|\frac{1}{p}\sum_{j=1}^{p} \beta_{j}(0)^2 \mathbb{I}\{|W_{j}(0)^{T}Px_{s}|\le R_{W} \Vert Px_{s} \Vert \}
$$
and a similar bound holds for \ref{A.31}. Then,
$$
\begin{aligned}
&\|H(k)-H(0)\|_{\mathrm{op}} \\&\leq  \max _{1 \leq s \leq n}\left|H_{ss}(k)-H_{ss}(0)\right|+\max _{1 \leq l \leq n} \sum_{s \in[n] \backslash\{l\}}\left|H_{s l}(k)-H_{s l}(0)\right| \\
&\lesssim  \max _{1 \leq s \leq n} \frac{1}{p} \sum_{j=1}^p \beta_j^2(0) \mathbb{I}\left\{\left|W_j^T(0)^T Px_s\right| \leq R_W \left\|x_s\right\|\right\} \\
& +k^{-1 / 2} n \max _{1 \leq s \leq n} \frac{1}{p} \sum_{j=1}^p \beta_j^2(0) \mathbb{I}\left\{\left|W_j^T(0)^TP x_s\right| \leq R_W \left\|x_s\right\|\right\} \\
& +\max _{1 \leq l \leq n} \sum_{s=1}^n\left|\frac{(Px_s)^T Px_l}{k}\right| R_{\beta} \frac{1}{p} \sum_{j=1}^p\left(R_{\beta}+2\left|\beta_j(0)\right|\right) \\
&\lesssim  \left(m+\frac{n}{\sqrt{k}}\right)(\sqrt{mk}  R_W \log p+\frac{\sqrt{\log (mn)} \log p}{\sqrt{p}}+R_{\beta}^2+R_{\beta} \sqrt{\log p}) \\
&\lesssim  \left(m+\frac{n}{\sqrt{k}}\right) \frac{mn(\log p)^2}{\sqrt{p}},
\end{aligned}
$$
where we have used $$\max _{1 \leq j \leq p}\left|\beta_j(0)\right| \leq 2 \sqrt{\log p} , \max _{1 \leq s \leq n}\left\|Px_s\right\|  \lesssim \sqrt{mk}$$, $$\max _{1 \leq s \neq l \leq n,1 \leq a \neq b \leq m}\left|\frac{(P_ax_s)^T P_bx_l}{k}\right|\lesssim k^{-1 / 2}$$ $$\max _{1 \leq l \leq n} \sum_{s=1}^n\left|\frac{(Px_s)^T Px_l}{k}\right|  \lesssim m+\frac{n}{\sqrt{k}}$$ and 
\begin{align*}
    &\max _{1\le s\le n}\frac{1}{p} \sum_{j=1}^p \mathbb{I}\left\{\left|W_j(0)^T Px_s\right| \leq R_W\left\|Px_s\right\|\right\}  \\&\lesssim \sqrt{mk} R_W+\sqrt{\frac{\log (mn)}{p}}
\end{align*}
In view of Lemma \ref{A.7.}, we then have

\begin{align}
\frac{1}{6} \leq \lambda_{min}(H(k)) \leq \lambda_{max}(H(k)) \lesssim 1 \label{A.33}
\end{align}
under the conditions of $k , p , m$ and $n$.\\

Next, we give a bound for $r_{s}(k)$. Observe that
\begin{align*}
   &(\psi(W_{j}(k+1)^{T}Px_s)-\psi(W_{j}(k)^{T}Px_{s}))\\&= (W_{j}(k+1)-W_{j}(k))^{T}Px_{s}\psi^{'}(W_{j}(k)^{T}Px_{s}),
\end{align*}
when $\mathbb{I}\{W_{j}(k+1)^{T}Px_s>0 \} = \mathbb{I}\{W_{j}(k)^{T}Px_s>0\}.$ Thus, we only need to sum over those $j\in [p]$ that $\mathbb{I}\{W_{j}(k+1)^{T}Px_s>0\} \ne \mathbb{I}\{W_{j}(k)^{T}Px_s>0\}$. By \ref{A.32}, we have
$$
\begin{aligned}
& \left|\mathbb{I}\left\{W_j(k+1)^T Px_s>0\right\}\mathbb{I}\left\{W_j(k)^T Px_s>0\right\}\right| \\
&\leq  \left|\mathbb{I}\left\{W_j(k+1)^T Px_s>0\right\}-\mathbb{I}\left\{W_j(0)^T Px_s>0\right\}\right|\\&+\left|\mathbb{I}\left\{W_j(k)^TP x_s>0\right\}-\mathbb{I}\left\{W_j(0)^T Px_s>0\right\}\right| \\
& \leq 2 \mathbb{I}\left\{\left|W_j^T(0)^TP x_s\right| \leq R_W\left\|Px_s\right\|\right\} .
\end{aligned}
$$
Therefore,

\begin{align*}
& \lvert \psi\left(W_j(k+1)^TP x_s\right)-\psi\left(W_j(k)^T Px_s\right)\\
&-\left(W_j(k+1)-W_j(k)\right)^T Px_s \psi^{\prime}\left(W_j(k)^T Px_s\right)\rvert\\
& \leq 4\left|\left(W_j(k+1)-W_j(k)\right)^T Px_s\right| \times \\
&\mathbb{I}\left\{\left|W_j^T(0)^T Px_s\right| \leq R_W\left\|Px_s\right\|\right\} \\
& \le\frac{4\lambda}{k\sqrt{p}} \big| \beta_{j}(k+1) \big| \|y-f_s(k) \| \|Px_{s}\|\times\\
&\sqrt{\sum_{l}\Vert Px_{l}\Vert^2} \mathbb{I}\{|W_{j}(0)^{T}Px_s|\le R_{W}\Vert Px_{s} \Vert \}\\
\end{align*}
which implies
\begin{align*}
& \lvert r_{s}(k) \lvert\le \frac{4\lambda}{kp} \sum_{j=1}^{p} \rvert \beta_{j}(k+1) \rvert ^2 \|y-f_s(k) \| \|Px_{s}\|\times\\
&\sqrt{\sum_{l}\Vert Px_{l}\Vert^2} \mathbb{I}\{|W_{j}(0)^{T}Px_s|\le R_{W}\Vert Px_{s} \Vert \}\\
&\lesssim m\sqrt{n}\Vert y-f_s(k) \Vert \gamma \frac{1}{p} \sum_{j=1}^{p}(\beta_{j}(0)^2+R_{\beta}^2)\times\\
&\mathbb{I}\{ |W_{j}(0)^{T}Px_s|\le R_{W}\Vert Px_{s} \Vert\}\\
&\lesssim \gamma m\sqrt{n}logp(\sqrt{mk}R_{W}+\sqrt{\frac{log m n}{p}})\Vert y-f_s(k) \Vert \\
\end{align*}

This leads to the bound

\begin{align}
&\Vert r(k) \Vert=\sqrt{\sum_{s}\big| r_{s}(k)\big|^2}\notag\\
&\lesssim \gamma mn logp(\sqrt{mk}R_{W}+\sqrt{\frac{log mn}{p}})\Vert y-f_s(k) \Vert \label{A.34}
\end{align}

The last part is the analysis of $\Vert y-f_s(k+1)\Vert^2$ in the main paper.

\end{proof}

\subsection{Additional proof of Theorem 2} 
\begin{proof}

Consider $\eta = b + Av^* + z \in \mathbb{R}^k$, where the noise vector $z$ satisfies

$$
z_i \sim(1-\varepsilon) \delta_0+\varepsilon Q_i,
$$
independently for all $i \in[\mathrm{m}]$. And $b \in \mathbb{R}^k$ is an arbitrary bias vector. Then, the estimator $\hat{v} = \mathop{argmin}\limits_{v \in \mathbb{R}^{n}}|| \eta - Av||_{1}$ satisfies the following theoretical guarantee.

\begin{lemma}\label{Lemma A.8.} Assume the design matrix A satisfies \ref{sublemma: condition_a} , \ref{sublemma: condition_b} and \ref{sublemma: condition_c}. Then, as long as $\frac{\overline{\lambda}\sqrt{\frac{mn}{k}{\log}(\frac{ek}{mn})} + \epsilon \sigma \sqrt{\frac{mn}{k}}}{{\lambda}(1-\epsilon)}$
is sufficiently small and $\frac{8\frac{1}{k}\sum_{i=1}^k|b_i|}{{\lambda}(1-\epsilon)} < 1$, we have
$$\Vert \hat{v} - v^*\Vert \leq \frac{4\frac{1}{k}\sum_{i=1}^k\lvert b_i\rvert }{{\lambda}(1-\epsilon)}$$
with high probability.
\end{lemma}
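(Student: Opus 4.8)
The plan is to adapt the convexity-and-localization argument from the proof of Lemma~\ref{4.1}, now carrying the extra bias vector $b$ through each estimate. Writing $w = v - v^*$ and recentering at $v^*$, introduce the empirical excess loss
$$L_k(v) = \frac{1}{k}\sum_{i=1}^k\left(|b_i + z_i - a_i^T w| - |b_i + z_i|\right)$$
and its conditional mean $L(v) = \mathbb{E}(L_k(v)\mid A)$, so that $L_k(v^*) = 0$ and, since $\hat v$ minimizes $\|\eta - Av\|_1$, we have $L_k(\hat v)\le 0$. Fix the target radius $t_0 = \frac{4\frac1k\sum_i|b_i|}{\underline{\lambda}(1-\epsilon)}$ and argue by contradiction: if $\|\hat v - v^*\|>t_0$, convexity of $L_k$ together with $L_k(\hat v)\le 0 = L_k(v^*)$ forces $\inf_{\|v-v^*\|=t_0}L_k(v)\le 0$, hence $\inf_{\|v-v^*\|=t_0}L(v)\le \sup_{\|v-v^*\|=t_0}(L(v)-L_k(v))$. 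It then suffices to show the left side strictly exceeds the right.

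For the lower bound on $L(v)$, decompose the mean over $z_i\sim(1-\epsilon)\delta_0+\epsilon Q_i$ into a clean part and a corrupted part, $L(v) = (1-\epsilon)\frac1k\sum_i(|b_i-a_i^Tw|-|b_i|) + \epsilon\frac1k\sum_i\tilde f_i(-a_i^Tw)$, where $\tilde f_i(x)=\mathbb{E}_{z_i\sim Q_i}(|b_i+z_i+x|-|b_i+z_i|)$. The clean part is handled by the reverse triangle inequality $|b_i-a_i^Tw|-|b_i|\ge |a_i^Tw|-2|b_i|$ followed by \eqref{sublemma: condition_b}, giving the lower bound $(1-\epsilon)(\underline{\lambda}t_0 - \frac2k\sum_i|b_i|)$. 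Each $\tilde f_i$ is convex with $\tilde f_i(0)=0$ and $\tilde f_i'(0)=1-2Q_i(-b_i)$, so the subgradient inequality and Cauchy--Schwarz bound the corrupted part below by $-\epsilon t_0\|\frac1k\sum_i(1-2Q_i(-b_i))a_i\|\ge -\epsilon t_0\sigma\sqrt{mn/k}$ via \eqref{sublemma: condition_a}, exactly as in Lemma~\ref{4.1} but with the subgradient taken at the shifted point $b_i$ (the coefficients stay bounded by $1$, which is all \eqref{sublemma: condition_a} requires). Substituting $t_0$ makes $(1-\epsilon)\underline{\lambda}t_0 = 4\frac1k\sum_i|b_i|$, so the clean part alone is at least $2\frac1k\sum_i|b_i|$.

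For the fluctuation term the key observation is that the bias only shifts the centre of each absolute value, so the map $w\mapsto |b_i+z_i-a_i^Tw|-|b_i+z_i|$ has the same Lipschitz constant $\|a_i\|$ as in the unbiased case; the discretization-over-the-sphere argument of Lemma~\ref{4.1} therefore applies verbatim and yields $\sup_{\|v-v^*\|=t_0}|L(v)-L_k(v)|\lesssim t_0\bar{\lambda}\sqrt{\frac{mn}{k}\log(\frac{ek}{mn})}$ with high probability. Combining these, the contradiction $\inf L > \sup(L-L_k)$ follows whenever $\epsilon t_0\sigma\sqrt{mn/k} + Ct_0\bar{\lambda}\sqrt{\frac{mn}{k}\log(\frac{ek}{mn})} < 2\frac1k\sum_i|b_i| = \tfrac12(1-\epsilon)\underline{\lambda}t_0$, i.e.\ precisely when $\frac{\bar{\lambda}\sqrt{(mn/k)\log(ek/mn)}+\epsilon\sigma\sqrt{mn/k}}{\underline{\lambda}(1-\epsilon)}$ is sufficiently small; the auxiliary requirement $\frac{8\frac1k\sum_i|b_i|}{\underline{\lambda}(1-\epsilon)}<1$ keeps $t_0<\tfrac12$ so that the localization stays inside the radius where the linear-in-$t$ concentration bound is valid. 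The main obstacle is the clean bookkeeping of the bias in the population lower bound: ensuring the $-2|b_i|$ loss incurred by the reverse triangle inequality is exactly absorbed by the $4\frac1k\sum_i|b_i|$ produced by the choice of $t_0$, while the corruption and fluctuation terms remain strictly dominated. No new probabilistic ingredient beyond Lemma~\ref{4.1} is needed.
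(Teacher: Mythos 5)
Your proposal is correct and follows exactly the localization-plus-convexity argument the paper uses for the unbiased case (Lemma~2 in the appendix): recenter the $\ell_1$ loss at $v^*$, lower-bound the population risk on the sphere of radius $t_0$ by splitting into the clean part (reverse triangle inequality plus \eqref{sublemma: condition_b}, which is where the $-2\frac{1}{k}\sum_i|b_i|$ is absorbed by the choice of $t_0$) and the corrupted part (convexity of $\tilde f_i$, Cauchy--Schwarz, and \eqref{sublemma: condition_a} with $c_i=1-2Q_i(-b_i)$), and control the empirical fluctuation by the same discretization bound. The paper in fact states this lemma without proof, so your argument supplies the missing extension of Lemma~2 to nonzero bias and does so correctly.
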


We first analyze $\hat{u_1},..,\hat{u_p}$. The idea is to apply the result of lemma \ref{4.1} to each of the $p$ robust regression problems. Thus, it suffices to check if the conditions of lemma  \ref{4.1} hold for the $p$ regression problems simultaneously. 

\end{proof}

\vfill
\end{document}